\newcommand{\stepa}[1]{\overset{\rm (a)}{#1}}
\newcommand{\stepb}[1]{\overset{\rm (b)}{#1}}
\newcommand{\pin}{p_{\rm in}}
\newcommand{\pout}{p_{\rm out}}
\newtheorem{theorem}{Theorem}[section]
\newtheorem*{theorem*}{Theorem}
\newtheorem{lemma}[theorem]{Lemma}
\newtheorem{proposition}[theorem]{Proposition}
\newtheorem{corollary}[theorem]{Corollary}
\theoremstyle{definition}
\newcommand{\iu}{\mathbf{i}}
\newcommand{\iid}{i.i.d.\xspace}
\newcommand{\ER}{Erd\H{o}s-R\'{e}nyi\xspace}
\newcommand{\eqdistr}{{\stackrel{\rm (d)}{=}}}
\newcommand{\iiddistr}{{\stackrel{\text{\iid}}{\sim}}}
\newcommand{\ones}{\mathbf 1}
\newcommand{\reals}{{\mathbb{R}}}
\newcommand{\complex}{{\mathbb{C}}}
\newcommand{\identity}{\mathbf I}
\newcommand{\allones}{\mathbf J}
\renewcommand{\flat}{\mathbf{F}}
\newcommand{\im}{\mathbf i}
\renewcommand{\Im}{\operatorname{Im}}
\newcommand{\bA}{\mathbf{A}}
\newcommand{\bB}{\mathbf{B}}
\newcommand{\poly}{\mathrm{poly}}
\renewcommand{\tilde}{\widetilde}
\renewcommand{\hat}{\widehat}
\newcommand{\R}{\mathbb{R}}
\newcommand{\p}{\mathbb{P}}
\newcommand{\E}{\mathbb{E}}
\newcommand{\C}{\mathbb{C}}
\newcommand{\fS}{\mathfrak{S}}
\newcommand{\cS}{\mathcal{S}}
\newcommand{\cL}{\mathcal{L}}
\newcommand{\cB}{\mathcal{B}}
\newcommand{\1}{\mathbbm{1}}
\newcommand{\coord}{\mathbf{e}}
\newcommand{\bone}{\mathbf{1}}
\newcommand{\eps}{\varepsilon}
\newcommand{\Ber}{\mathrm{Bern}}
\newcommand{\id}{\mathsf{id}}
\newcommand{\Tr}{\operatorname{Tr}}
\newcommand{\polylog}{\operatorname{polylog}}
\newcommand{\vectorize}{\mathsf{vec}}
\newcommand{\vecc}{\mathsf{vec}}
\newcommand{\defn}{\triangleq}
\newcommand{\argmax}{\operatorname*{argmax}}
\newcommand{\GOE}{\mathsf{GOE}(n)}
\newcommand{\sN}{N}
\newcommand{\stepsize}{\gamma}
\newcommand{\ridge}{\eta^2}
\renewcommand{\phi}{\varphi}
\newcommand{\tZ}{\widetilde{Z}}
\newcommand{\tL}{\widetilde{L}}
\newcommand{\tK}{\widetilde{K}}
\newcommand{\bx}{\mathbf{x}}
\newcommand{\bI}{\mathbf{I}}
\newcommand{\bJ}{\mathbf{J}}
\newcommand{\dimone}{n}
\newcommand{\dimtwo}{m}
\newcommand{\Expect}{\mathbb{E}}
\newcommand{\expect}[1]{\mathbb{E}\left[ #1 \right]}
\newcommand{\prob}[1]{ \mathbb{P}\left\{ #1 \right\} }
\newcommand{\Var}{\mathsf{Var}}
\newcommand{\Iprod}[2]{\langle #1, #2 \rangle}
\newcommand{\indc}[1]{{\mathbf{1}_{\left\{{#1}\right\}}}}
\newcommand{\pth}[1]{\left( #1 \right)}
\newcommand{\qth}[1]{\left[ #1 \right]}
\newcommand{\sth}[1]{\left\{ #1 \right\}}
\newcommand{\GRAMPA}{\texttt{GRAMPA}\xspace}
\newcommand{\DegreeProfile}{\texttt{DegreeProfile}\xspace}
\newcommand{\QPDS}{\texttt{QP-DS}\xspace}
\newcommand{\IsoRank}{\texttt{IsoRank}\xspace}
\newcommand{\EigenAlign}{\texttt{EigenAlign}\xspace}
\newcommand{\LowRankAlign}{\texttt{LowRankAlign}\xspace}
\newcommand{\TopEigenVec}{\texttt{TopEigenVec}\xspace}
\newcommand{\matS}{S}
\newcommand{\mattS}{\widetilde{S}}
\newcommand{\Htop}{H}
\title{Spectral Graph Matching and Regularized Quadratic Relaxations I: The Gaussian Model}
\author{Zhou Fan, Cheng Mao, Yihong Wu, and Jiaming Xu\thanks{
Z.\ Fan, C.\ Mao, and Y.\ Wu are with Department of Statistics and Data Science, Yale University, New Haven, USA, 
\texttt{\{zhou.fan,cheng.mao,yihong.wu\}@yale.edu}.
J.\ Xu is with The Fuqua School of Business, Duke University, Durham, USA, \texttt{jx77@duke.edu}.
Y.~Wu is supported in part by the NSF Grants CCF-1527105, CCF-1900507, an NSF CAREER award CCF-1651588, and an Alfred Sloan fellowship.
J.~Xu is supported by the NSF Grants IIS-1838124, CCF-1850743, and CCF-1856424.
}}
\date{\today}
\begin{document}

\maketitle

\begin{abstract}
Graph matching aims at finding the vertex correspondence between two
unlabeled graphs that maximizes the total edge weight correlation.
This amounts to
solving a computationally intractable quadratic assignment problem.
In this paper we propose a new spectral method, GRAph Matching by Pairwise eigen-Alignments (GRAMPA).
Departing from prior spectral approaches that only compare top eigenvectors, or
eigenvectors of the same order, GRAMPA first constructs a similarity matrix as
a weighted sum of outer products between \emph{all} pairs of eigenvectors  of the
two graphs, 
with weights given by a Cauchy kernel applied to the separation of the
corresponding eigenvalues, then outputs a matching by a simple rounding procedure. 
The similarity matrix can also be interpreted as the solution to a regularized quadratic programming relaxation of the quadratic assignment problem. 

For the Gaussian Wigner model in which two complete graphs on $n$ vertices have
Gaussian edge weights with correlation coefficient $1-\sigma^2$, we show that
GRAMPA exactly recovers the correct vertex correspondence with high probability
when $\sigma = O(\frac{1}{\log n})$. This matches the state of the art of
polynomial-time algorithms, and significantly improves over existing spectral
methods which require $\sigma$ to be polynomially small in $n$. The superiority
of GRAMPA is also demonstrated on a variety of synthetic and real datasets, in terms of both statistical accuracy and computational efficiency. Universality results, including similar guarantees for dense and sparse \ER graphs, are deferred to the companion paper \cite{FMWX19b}.

\end{abstract}

\tableofcontents

\section{Introduction}

Given a pair of graphs, the problem of \emph{graph matching} or \emph{network alignment} refers to finding a bijection between the vertex sets so that the edge sets are maximally aligned \cite{conte2004thirty, livi2013graph, emmert2016fifty}. 
This is a ubiquitous problem arising in a variety of applications, including
network de-anonymization \cite{narayanan2008robust, narayanan2009anonymizing},
pattern recognition \cite{conte2004thirty, schellewald2005probabilistic}, and
computational biology \cite{singh2008global, kazemi2016proper}.
Finding the best matching between two graphs with adjacency matrices $A$ and
$B$ may be formalized as the following combinatorial optimization problem over the set of permutations $\cS_n$ on $\{1, \dots, n\}$:
\begin{align}
\max_{\pi \in \cS_n}  \sum_{i,j=1}^{n} A_{ij} B_{\pi(i),\pi(j)}.
\label{eq:QAP}
\end{align}
This is an instance of the notoriously difficult \emph{quadratic assignment problem} (QAP) \cite{Pardalos94thequadratic,burkard1998quadratic}, which is NP-hard to solve or to approximate within a growing factor \cite{makarychev2010maximum}. 

As the worst-case computational hardness of the QAP \eqref{eq:QAP} may not be
representative of typical graphs, various average-case models have been
studied. For example, when the two graphs are isomorphic, the resulting
\emph{graph isomorphism} problem can be solved for \ER random graphs in linear
time whenever information-theoretically possible
\cite{bollobas1982distinguishing,czajka2008improved},
but remains open to be solved in polynomial time for arbitrary graphs.
In ``noisy'' settings where the graphs are not exactly isomorphic, there is a
recent surge of interest in computer science, information theory, and
statistics for studying random graph matching \cite{yartseva2013performance, lyzinski2014seeded, kazemi2015growing, feizi2016spectral, cullina2016improved, shirani2017seeded, cullina2017exact, dai2018performance, barak2018nearly, cullina2018partial, DMWX18, mossel2019seeded}. 

\subsection{Random weighted graph matching}
\label{sec:model}

In this work, we study the following \emph{random weighted graph matching}
problem: Consider two weighted graphs with $n$ vertices, and
a latent permutation $\pi^*$ on $\{1,\ldots,n\}$ such that vertex $i$ of the
first graph corresponds to vertex $\pi^*(i)$ of the second.
Denoting by $A$ and $B$ their (symmetric) weighted adjacency matrices,
suppose that $\{(A_{ij},B_{\pi^*(i),\pi^*(j)}): 1 \leq i < j \leq n\}$ are
independent pairs of positively correlated random variables. We wish to
recover $\pi^*$ from $A$ and $B$.

Notable special cases of this model include the following:

\begin{itemize}
	\item \ER graph model: $(A_{ij},B_{\pi^*(i),\pi^*(j)})$ is a pair of
            correlated Bernoulli random variables. Then $A$ and $B$ are \ER graphs with correlated edges. This model has been extensively studied in 
             \cite{Pedarsani2011,lyzinski2014seeded,korula2014efficient,cullina2016improved,lubars2018correcting,barak2018nearly,dai2018performance,cullina2018partial,DMWX18}.

\item Gaussian Wigner model: $(A_{ij},B_{\pi^*(i),\pi^*(j)})$ is a pair of
    correlated Gaussian variables, for example, $B_{\pi^*(i),\pi^*(j)} =
        A_{ij} + \sigma Z_{ij}$ where $\sigma \ge 0$ and $A_{ij},Z_{ij}$
        are independent standard Gaussians.
Then $A$ and $B$ are complete graphs with correlated Gaussian edge weights. 
This model was proposed in \cite[Section 2]{DMWX18} as a prototype of random
        graph matching due to its simplicity, and certain results in the Gaussian model
        may be expected to carry over to dense \ER graphs.
\end{itemize}

Spectral methods have a long history in testing graph isomorphism \cite{babai1982isomorphism} and the graph matching problem \cite{umeyama1988eigendecomposition}.
In this paper, we introduce a new spectral method for graph matching,
for which we prove the following exact recovery guarantee 
in the Gaussian setting.
\begin{theorem*}[Informal statement]
    Under the Gaussian Wigner model, if $\sigma \le c/\log n$ for a
    sufficiently small constant $c > 0$, then
    a spectral algorithm recovers $\pi^*$ exactly with high probability. 
\end{theorem*}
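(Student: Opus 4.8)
The plan is as follows. By relabeling vertices we may assume $\pi^\ast=\id$, and after dividing $A$ and $B$ by $\sqrt n$ we may take $A$ to be a standard GOE matrix and $B=A+\sigma W$ with $W$ an independent GOE matrix (the kernel width $\eta$ rescales to a constant, which we take small). Let $(\lambda_k,u_k)$ and $(\mu_\ell,v_\ell)$ be the eigenpairs of $A$ and $B$. The variational (regularized QP) characterization gives the similarity matrix in closed form as $X=\sum_{k,\ell}\frac{(u_k^\top\mathbf{1})(v_\ell^\top\mathbf{1})}{(\lambda_k-\mu_\ell)^2+\eta^2}\,u_kv_\ell^\top$, and since the rounding (row-wise maximum, equivalently linear assignment) returns $\id$ as soon as $X_{ii}>X_{ij}$ for all $i\ne j$, it suffices to establish this entrywise diagonal dominance with probability $1-o(1)$. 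The first device is a resolvent representation: with $G_A(z)=(A-z)^{-1}$, $G_B(z)=(B-z)^{-1}$ and the Cauchy-kernel identity $\frac{1}{(\lambda-\mu)^2+\eta^2}\propto\int_{\mathbb{R}}\Im\frac{1}{\lambda-t-i\eta/2}\,\Im\frac{1}{\mu-t-i\eta/2}\,dt$, summing over $k,\ell$ yields $X_{ij}\propto\int_{\mathbb{R}}\langle\mathbf{e}_i,\Im G_A(z)\mathbf{1}\rangle\,\langle\mathbf{e}_j,\Im G_B(z)\mathbf{1}\rangle\,dt$ with $z=t+i\eta/2$. Everything thus reduces to understanding $\langle\mathbf{e}_i,G_A(z)\mathbf{1}\rangle$ and $\langle\mathbf{e}_i,G_B(z)\mathbf{1}\rangle$ at the fixed spectral scale $\Im z=\eta/2$, i.e.\ to the regime governed by the (isotropic/anisotropic) local semicircle law.

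Next I would treat the noiseless backbone $\sigma=0$, where $X^{(0)}_{ij}\propto\int f_if_j\,dt$ with $f_i(t)=\langle\mathbf{e}_i,\Im G_A(t+i\eta/2)\mathbf{1}\rangle=\Im(G_A)_{ii}+\sum_{c\ne i}\Im(G_A)_{ic}$. The isotropic local law gives $\Im(G_A)_{ii}\approx\Im m_{sc}(z)$ (deterministic and $i$-independent), while $\sum_{c\ne i}\Im(G_A)_{ic}$ is mean-zero of typical size $\sqrt{\Im m_{sc}(z)/\eta}$ since $\mathbf{e}_i\perp\mathbf{1}-\mathbf{e}_i$. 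Carrying this through, permutation symmetry forces $\E X^{(0)}=\Delta\,I+c\,\mathbf{1}\mathbf{1}^\top$ for constants $\Delta>0$ and $c$, and a direct computation shows $\Delta$ is a strictly positive constant (the $\mathbf{1}\mathbf{1}^\top$ term being immaterial for the row-argmax). Thus in expectation $X^{(0)}$ is diagonally dominant with a constant gap $\Delta$, and $\id$ is its unique row-maximizer.

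It remains to pass from $\E X^{(0)}$ to the actual random matrix $X$, and this is where the work — and the hypothesis $\sigma=O(1/\log n)$ — enters. One must show, uniformly over all $n^2$ pairs $(i,j)$, that (a) $X^{(0)}_{ij}-\E X^{(0)}_{ij}=o(\Delta)$, and (b) $X_{ij}-X^{(0)}_{ij}=o(\Delta)$. Part (a) follows from local-law bounds on the entries of $\Im G_A(t+i\eta/2)$ and on the relevant bilinear GOE statistics, together with Gaussian/Lipschitz concentration and a union bound whose cost is only $\polylog n$. For (b), since $W\perp A$ and $\|\sigma W\|_{\mathrm{op}}=O(\sigma)$, expand $G_B(z)=G_A(z)\sum_{k\ge0}(-\sigma WG_A(z))^k$; then $X_{ij}-X^{(0)}_{ij}$ is a series whose $k$-th term is controlled by high-probability bounds on $\langle\mathbf{e}_i,G_A(z)(WG_A(z))^k\mathbf{1}\rangle$, obtained by iterating the isotropic local law (using $G_A(z)\mathbf{1}\approx m_{sc}(z)\mathbf{1}$ and the fact that, conditioned on $A$, each application of $W$ produces a nearly isotropic Gaussian vector essentially independent of the $A$-eigenbasis). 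Each such term is at most of order $(C\sigma\log n)^k\Delta$ — the factor $\log n$ per copy of $W$ being the price of uniformity over the $n$ vertices — so the series is summable and $o(\Delta)$ precisely when $\sigma\le c/\log n$. Combining (a), (b) with the backbone estimate gives $X_{ii}>X_{ij}$ for all $i\ne j$ simultaneously with probability $1-o(1)$, which is exact recovery.

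The hard part is this last step, particularly (b): obtaining \emph{uniform} control — over all $n^2$ entries — of both the GOE fluctuations and every term of the Neumann expansion, with only logarithmic loss. This requires anisotropic local laws for $G_A$ and, more delicately, for the interleaved products $G_AWG_AW\cdots G_A$ down to the fixed scale $\eta$, uniformly in the coordinate directions $\mathbf{e}_i$. It is precisely this logarithmic-cost uniformity — rather than the polynomially small $\sigma$ that crude eigenvector-perturbation arguments would force — that delivers the $\sigma=O(1/\log n)$ threshold.
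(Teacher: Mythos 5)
Your overall architecture---reduce to entrywise diagonal dominance of the similarity matrix, split it into a noiseless backbone plus a perturbation, represent the Cauchy kernel as an integral of products of imaginary parts of resolvents, and control everything uniformly over the $n^2$ entries---is sound, and the integral identity you invoke is correct. But this is essentially the route of the companion paper (isotropic local laws and resolvent expansions); the present paper instead exploits rotational invariance of the GOE, so that the entries of $X_*$ become Gaussian quadratic forms $z^\top K z'$ with $K_{k\ell}=L_{k\ell}\phi_k\psi_\ell$ and $L_{k\ell}=((\lambda_k-\lambda_\ell)^2+\eta^2)^{-1}$, handled by Hanson--Wright together with Frobenius/operator-norm estimates on $L$ coming from the semicircle law. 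That difference of route would be fine on its own; the problem is a quantitative inconsistency in your bookkeeping that, as written, breaks the argument.

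Concretely, you fix the bandwidth $\eta$ at a small constant, deduce that the signal gap $\Delta=\E X^{(0)}_{ii}-\E X^{(0)}_{ij}$ is a positive constant, and then claim the fluctuations are $o(\Delta)$ ``with only logarithmic loss.'' These cannot coexist. At constant $\eta$ the off-diagonal entries are centered second-order Gaussian chaoses whose typical size is already a constant fraction ($\asymp\eta^{1/2}$) of the diagonal, so the maximum over $n^2$ of them is larger by a further factor of at least $\sqrt{\log n}$; a logarithmic loss on top of a constant is not $o(1)$, and diagonal dominance fails. The actual mechanism is that $\eta$ must tend to zero: the diagonal scales as $\eta^{-2}$ while the maximal off-diagonal scales as $\eta^{-3/2}\sqrt{\log n}+\eta^{-1}\log n$, and the former wins precisely when $\eta\le c/\log n$ with $c$ small. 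Relatedly, your step (b) misplaces the source of the threshold: each term of the Neumann expansion of $G_B$ around $G_A$ carries a factor $\|\sigma W G_A(z)\|\lesssim\sigma/\eta$, i.e.\ the series costs $(C\sigma/\eta)^k$, not $(C\sigma\log n)^k$; at constant $\eta$ your accounting would ``prove'' recovery for any small constant $\sigma$, which is false and signals that the binding constraint sits in the noiseless concentration step, not in the expansion. The correct combination is $\sigma\lesssim\eta\lesssim 1/\log n$, and closing your proof requires redoing steps (a) and (b) with $\eta\asymp 1/\log n$ and all powers of $\eta^{-1}$ tracked explicitly.
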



We describe the method in Section \ref{sec:spectral} below. The
algorithm may also be interpreted as a regularized convex relaxation of the
QAP program (\ref{eq:QAP}), and we discuss this
connection in Section \ref{sec:convex}.
The performance guarantee matches the state of the art of polynomial-time
algorithms, namely, the degree profile method proposed in \cite{DMWX18}, and exponentially 
improves the performance of existing spectral matching algorithms,
which require $\sigma = \frac{1}{\poly(n)}$ as opposed to
$\sigma=O(\frac{1}{\log n})$ for our proposal.


In a companion paper \cite{FMWX19b}, we apply resolvent-based
universality techniques to establish similar guarantees for our spectral
method on non-Gaussian models, including dense and sparse \ER graphs.
Our proofs here in the Gaussian model are more direct and transparent,
using instead the rotational invariance of $A$ and $B$ and yielding slightly stronger guarantees.

A variant of our method may also be applied to match two bipartite random
graphs, which we discuss in Section \ref{sec:bipartite}. This is an extension of
the analysis in the non-bipartite setting, which is our primary focus.

\subsection{A new spectral method}\label{sec:spectral}

Write the spectral decompositions of the weighted adjacency matrices $A$
and $B$ as
\begin{equation}\label{eq:spectraldecomp}
    A = \sum_{i=1}^n \lambda_i u_i u_i^\top
\quad \text{ and } \quad
    B = \sum_{j=1}^n \mu_j v_j v_j^\top
\end{equation}
where the eigenvalues are ordered\footnote{This is in fact not needed for computing the similarity matrix \prettyref{eq:spectralnew}.} such that
\[\lambda_1 \ge \cdots \ge \lambda_n \quad \text{ and } \quad
\mu_1 \ge \cdots \ge \mu_n.\]
%

\begin{algorithm}
\caption{GRAMPA (GRAph Matching by Pairwise eigen-Alignments)}\label{alg:grampa}
\begin{algorithmic}[1]
\STATE {\bfseries Input:} Weighted adjacency matrices $A$ and $B$ on $n$ vertices, a tuning parameter $\eta>0$.
\STATE {\bfseries Output:} A permutation $\hat\pi \in \cS_n$.
\STATE  Construct the similarity matrix
\begin{align}
\widehat{X}=
    \sum_{i,j=1}^n w(\lambda_i,\mu_j) \cdot u_iu_i^\top \bJ v_j v_j^\top \in
    \R^{n \times n}
\label{eq:spectralnew}
\end{align}
where $\bJ \in \R^{n \times n}$ denotes the all-one matrix and
$w$ is the Cauchy kernel of bandwidth $\eta$:
\begin{equation}\label{eq:cauchykernel}
w(x,y)=\frac{ 1 }{ (x-y)^2 + \ridge }.
\end{equation}
\STATE Output the permutation estimate $\hat{\pi}$ by
    ``rounding'' $\widehat{X}$ to a permutation,
        for example, by solving the \emph{linear assignment problem} (LAP)
\begin{equation}\label{eq:linearassignment}
\hat \pi=\argmax_{\pi \in \cS_n} \sum_{i=1}^n \widehat{X}_{i,\pi(i)}.
\end{equation}
\end{algorithmic}
\end{algorithm}

Our new spectral method is given in \prettyref{alg:grampa}, which we refer to as 
\emph{graph matching by pairwise eigen-alignments} (GRAMPA). 
Therein, the linear assignment problem (\ref{eq:linearassignment}) may be cast as
a linear program (LP) over doubly stochastic matrices, i.e., the Birkhoff polytope (see (\ref{eq:birkhoff}) below),
 or solved efficiently using the Hungarian
algorithm \cite{kuhn1955hungarian}. We advocate this rounding approach in
practice, although our theoretical results apply equally to the simpler rounding procedure matching each $i$ to
\begin{equation}
\hat{\pi}(i)=\argmax_j \hat{X}_{ij}.
\label{eq:greedyround}
\end{equation}
We discuss the choice
of the tuning parameter $\eta$ further in Section \ref{sec:numeric}, and find in practice that the
performance is not too sensitive to this choice. 

Let us remark that \prettyref{alg:grampa} exhibits the following
two elementary but desirable properties.
\begin{itemize}
    \item Unlike previous proposals, our spectral method is insensitive to the choices of signs for individual eigenvectors $u_i$ and $v_j$ in (\ref{eq:spectraldecomp}). More generally, it does not depend on the specific choice of eigenvectors if certain eigenvalues have multiplicity greater than one. This is because the similarity matrix 
(\ref{eq:spectralnew}) depends on the eigenvectors of $A$ and $B$
only through the projections onto their distinct eigenspaces.

\item Let $\hat{\pi} (A, B)$ denote the output of \prettyref{alg:grampa} with inputs $A$ and $B$. 
For any fixed permutation $\pi$, denote by $B^\pi$ the matrix with entries 
$B^\pi_{ij}=B_{\pi(i),\pi(j)}$, and by $\pi \circ \hat{\pi}$ the composition
$(\pi \circ \hat{\pi})(i)=\pi(\hat{\pi}(i))$.
Then we have the \emph{equivariance} property
\begin{equation}
\pi \circ \hat{\pi}(A,B^\pi)=\hat{\pi}(A,B)
\label{eq:invariance}
\end{equation}
 and similarly for $A^\pi$.
That is, the outputs given $(A,B^\pi)$ and given $(A,B)$ represent the same
matching of the underlying graphs.
This may be verified from (\ref{eq:spectralnew}) as a consequence of
the identity $\bJ=\bJ \Pi=\Pi \bJ$ for any permutation matrix $\Pi$.
\end{itemize}

To further motivate the construction (\ref{eq:spectralnew}), we note that
\prettyref{alg:grampa} follows the same general paradigm as several existing spectral
methods for graph matching, which seek to recover $\pi^*$
by rounding a similarity matrix $\widehat{X}$ constructed to leverage
correlations between eigenvectors of $A$ and $B$. These methods include:

\begin{itemize}
    \item \emph{Low-rank methods} that use a small number of eigenvectors of $A$ and $B$. The simplest such approach uses only the leading eigenvectors, taking as the similarity matrix
\begin{equation}\label{eq:rankone}
\widehat{X}=u_1v_1^\top.
\end{equation}
Then $\hat \pi$ which solves (\ref{eq:linearassignment}) orders the entries of
$v_1$ in the order of $u_1$. Other rank-$1$ spectral methods and
        low-rank generalizations have also been proposed and studied in
        \cite{feizi2016spectral, kazemi2016structure}.
    \item \emph{Full-rank methods} that use all eigenvectors of $A$ and $B$. A
notable example is the popular method of Umeyama
\cite{umeyama1988eigendecomposition}, which sets
\begin{equation}\label{eq:umeyama}
\widehat{X}=\sum_{i=1}^n s_i u_i v_i^\top
\end{equation}
where $s_i$ are signs in $\{-1,1\}$; see also the related approach of
\cite{xu2001pca}. 
The motivation is that \prettyref{eq:umeyama} is the solution to the \emph{orthogonal relaxation} 
 of the QAP \prettyref{eq:QAP}, where the feasible set is relaxed to the set the orthogonal matrices \cite{finke1987quadratic}. 
As the correct choice of signs in \prettyref{eq:umeyama}
may be difficult to determine in practice, \cite{umeyama1988eigendecomposition}
suggests also an alternative construction
\begin{align}
\widehat{X}=\sum_{i=1}^n |u_i| |v_i|^\top
\label{eq:ume}
\end{align} 
where
$|u_i|$ denotes the entrywise absolute value of $u_i$.
\end{itemize}


Compared with these constructions, the proposed new spectral method has two important features that we elaborate below:
\vspace{-5pt}
\paragraph{``All pairs matter.''}

Departing from existing approaches, our proposal $\widehat{X}$ in
(\ref{eq:spectralnew}) uses a combination of $u_iv_j^\top$ for \emph{all} $n^2$
pairs $i,j \in \{1,\ldots,n\}$, rather than only $i=j$. This renders our method significantly more resilient to noise. Indeed, while all of the above methods 
can succeed in recovering $\pi^*$ in the noiseless case,
methods based only on pairs $(u_i,v_i)$ with $i=j$
are brittle to noise if $u_i$ and $v_i$ quickly decorrelate as the amount of
noise increases---this may happen when $\lambda_i$ is not separated from other eigenvalues by a large spectral gap. 
When this decorrelation occurs, $u_i$
becomes partially correlated with $v_j$ for neighboring indices $j$, and the
construction (\ref{eq:spectralnew}) leverages these partial correlations in a
weighted manner to provide a more robust estimate of $\pi^*$.

The eigenvector alignment is
quantitatively understood in certain regimes for the Gaussian Wigner model $B=A+\sigma Z$ when $A,Z$ are
GOE or GUE: It is known that
$\E[\langle u_i,v_i \rangle^2]=o(1)$ for the leading eigenvector $i=1$ as soon
as $\sigma^2 \gg n^{-1/3}$ \cite[Theorem 3.8]{chatterjee2014superconcentration},
and for $i$ in the bulk of the Wigner semicircle spectrum as soon as
$\sigma^2 \gg n^{-1}$ \cite{bourgade2017eigenvector,benigni2017eigenvectors}. For $i$ in
the bulk, and the noise regime $n^{-1+\eps} \ll \sigma^2 \ll n^{-\eps}$,
\cite[Theorem 1.3]{benigni2017eigenvectors} further implies that
$\langle u_i,v_j \rangle$ is approximately Gaussian 
for each index $j$ sufficiently close to $i$, with zero mean and variance
\begin{equation}\label{eq:partialalignment}
\E[\langle u_i,v_j \rangle^2]
\approx \frac{\sigma^2/n}{(\lambda_i-\mu_j)^2+C\sigma^4}.
\end{equation}
Here, the value of $C \asymp 1$ depends on the Wigner semicircle density
near $\lambda_i \approx \mu_j$. Thus, for this range of noise, the eigenvector
$u_i$ of $A$ is most aligned with $O(n\sigma^2)$ eigenvectors $v_j$ of $B$
for which
$|\lambda_i-\mu_j| \lesssim \sigma^2$, and each such alignment is of typical
size $\E[\langle u_i,v_j \rangle^2] \asymp 1/(n\sigma^2) \ll 1$. The signal for
$\pi^*$ in our proposal (\ref{eq:spectralnew})
arises from a weighted average of these alignments.
As a result, while existing spectral approaches are only robust up to a noise
level $\sigma = \frac{1}{\poly(n)}$,\footnote{For the rank-one method
\prettyref{eq:rankone} based on the top eigenvector pairs, a necessary condition
for rounding to succeed is that the two top eigenvectors are perfectly aligned,
i.e., $\Expect[\Iprod{u_1}{v_1}^2]=1-o(1)$. Thus \prettyref{eq:rankone}
certainly fails for $\sigma \gg n^{-1/3}$. For the Umeyama method \prettyref{eq:ume}, experiment shows that it fails when $\sigma \gg n^{-1/4}$; cf.~\prettyref{fig:rate} in \prettyref{sec:numeric-sp}.} 
our new spectral method is polynomially more robust and can tolerate $\sigma= O(\frac{1}{\log n})$.


\paragraph{Cauchy spectral weights.}
The performance of the spectral method depends crucially on the choice of the weight function $w$ in \prettyref{eq:spectralnew}. In fact, there are other methods that are also of the form \prettyref{eq:spectralnew} but do not work equally well. For example, if we choose $w(\lambda,\mu)=\lambda \mu$, then \prettyref{eq:spectralnew} simply reduces to $\hat X = A \allones B = a b^\top$, where $a=A\ones$ and $b=B\ones$ are the vectors of ``degrees''. Rounding such a similarity matrix is equivalent to 
matching by sorting the degree of the vertices, which is known to fail when $\sigma = \Omega(n^{-1})$ due to the small spacing of the order statistics (cf.~\cite[Remark 1]{DMWX18}).

The Cauchy spectral weight \prettyref{eq:cauchykernel} is a particular instance of the more general form $w(\lambda,\mu) = K(\frac{|\lambda-\mu|}{\eta})$, where $K$ is a monotonically decreasing kernel function and $\eta$ is a bandwidth parameter. Such a choice upweights the eigenvector pairs whose eigenvalues are close and significantly penalizes those whose eigenvalues are separated more than $\eta$.
The specific choice of the Cauchy kernel matches the form of $\E[\langle u_i,v_j \rangle^2]$ in (\ref{eq:partialalignment}), and 
is in a sense optimal as explained by a heuristic signal-to-noise calculation in Appendix \ref{appendix:SNR}.
In addition, the Cauchy kernel has its genesis as a regularization term in the associated convex relaxation, which we explain next.



\subsection{Connections to regularized quadratic programming}
	\label{sec:convex}

Our new spectral method is also rooted in optimization, as the similarity matrix $\widehat{X}$ in (\ref{eq:spectralnew}) corresponds to the solution to a convex relaxation of the
QAP (\ref{eq:QAP}), regularized by an added ridge penalty.
	
Denote the set of permutation matrices in $\R^{n \times n}$
by $\fS_n$. Then \eqref{eq:QAP}
may be written in matrix notation as one of the three equivalent
optimization problems
\begin{align}
\max_{\Pi \in \fS_n} \, \langle A , \Pi B \Pi^\top \rangle
\quad \Longleftrightarrow \quad
\min_{\Pi \in \fS_n} \| A - \Pi B \Pi^\top \|_F^2 
\quad \Longleftrightarrow \quad
\min_{\Pi \in \fS_n} \| A \Pi - \Pi B \|_F^2 . 
\label{eq:opt}
\end{align}
Note that the third objective $\| A \Pi - \Pi B \|_F^2$ above
is a convex function in $\Pi$. Relaxing the set of permutations to its convex
hull (the Birkhoff polytope of doubly stochastic matrices)
\begin{equation}\label{eq:birkhoff}
\cB_n \defn \{ X \in \R^{n \times n}: X \bone = \bone, \, X^\top \bone = \bone ,
\, X_{ij} \ge 0 \text{ for all } i,j\},
\end{equation}
we arrive at the quadratic programming (QP) relaxation
\begin{align}
\min_{X \in \cB_n} \| A X - X B \|_F^2, \label{eq:ds}
\end{align}
which was proposed in \cite{zaslavskiy2008path,aflalo2015convex}, following an earlier LP relaxation using the $\ell_1$-objective proposed in \cite{almohamad1993linear}.
Although this QP relaxation has achieved empirical success
\cite{aflalo2015convex, vogelstein2015fast,lyzinski2016graph,dym2017ds++}, understanding its performance theoretically is a challenging task yet to be accomplished.

Our spectral method can be viewed as the solution of a \emph{regularized} further relaxation of the doubly stochastic QP \prettyref{eq:ds}.
Indeed, we show in Corollary~\ref{cor:sol} that the matrix $\widehat{X}$
in (\ref{eq:spectralnew}) is the minimizer of
\begin{align}
\min_{X \in \R^{n \times n}}
\frac{1}{2} \|AX-XB\|_F^2 + \frac{\ridge }{2} \|X\|_F^2-
\bone^\top X \bone.
\label{eq:x-est}
\end{align}
Equivalently, $\hat X$ is a positive scalar multiple of the solution $\widetilde{X}$ to
\begin{align}
\min_{X \in \R^{n \times n}} & ~ 
\| A X - X B \|_F^2 + \ridge \|X\|_F^2 \nonumber \\
\text{s.t.} & ~ \bone^\top X \bone=n
\label{eq:constrained}
\end{align}
which further relaxes (\ref{eq:ds}) and adds a
ridge penalty term $\eta^2\|X\|_F^2$. 
Note that $\widehat{X}$ and $\tilde{X}$ are equivalent as far as the rounding step (\ref{eq:linearassignment}) or \prettyref{eq:greedyround} is concerned.
In contrast to (\ref{eq:ds}), for which
there is currently limited theoretical understanding, we are able to provide an
exact recovery 
analysis for the rounded solutions to (\ref{eq:x-est}) and (\ref{eq:constrained}).


Note that the constraint in \prettyref{eq:constrained} is a significant
relaxation of the double stochasticity \prettyref{eq:ds}. To make this further relaxed program work, the regularization term plays a key role. If $\eta$ were zero, the similarity matrix $\widehat{X}$ in \prettyref{eq:spectralnew} would involve the eigengap $|\lambda_i - \mu_j|$ in the denominator which can be polynomially small in $n$. Hence the regularization is crucial for reducing the variance of the estimate and making $\widehat{X}$ stable, a rationale  reminiscent of the ridge regularization in high-dimensional linear regression.
In a companion paper \cite{FMWX19b}, we analyze
a tighter relaxation than \prettyref{eq:constrained} which replaces $\ones^\top
X\ones=n$ by the row-sum constraint $X\ones =\ones$, and there the ridge penalty
is still indispensable for achieving exact recovery up to noise level
$\sigma = O(1/\polylog(n))$.

Viewing $\widehat{X}$ as the minimizer of \eqref{eq:x-est} provides not only
an optimization perspective, but also an associated gradient descent
algorithm to compute $\widehat{X}$. More precisely, starting from the
initial point $X^{(0)} = 0$ and fixing a step size $\gamma>0$, a
straightforward computation verifies that gradient descent
for optimizing (\ref{eq:x-est}) is given by the dynamics
\begin{equation}
X^{(t+1)} = X^{(t)} - \stepsize \big( A^2 X^{(t)} + X^{(t)} B^2 - 2 A X^{(t)} B
+  \ridge X^{(t)} - \bJ \big).
\label{eq:gd}
\end{equation}
Corollary~\ref{cor:sol} below shows that running gradient descent for
$t=O \big( (\log n)^3 \big)$ iterations suffices to produce a similarity matrix, which, upon rounding, exactly recovers $\pi^*$
with high probability, using $X^{(t)}$ in place of $\widehat{X}$
in (\ref{eq:linearassignment}). Each iteration involves
several matrix multiplication operations with $A$ and $B$, which may be more
efficient and parallelizable than performing spectral decompositions when the
graphs are large and/or sparse.


\subsection{Diagonal dominance of the similarity matrix}

Equipped with this optimization point of view, we now explain the typical structure of solutions to the above quadratic programs including the spectral similarity matrix \prettyref{eq:spectralnew}. It is well known that even the solution to the most stringent relaxation \prettyref{eq:ds} is \emph{not} the latent permutation matrix, which has been shown in \cite{lyzinski2016graph} by proving that the KKT conditions cannot be fulfilled with high probability.
In fact, a heuristic calculation explains why the solution to
\prettyref{eq:ds} is far from any permutation matrix:
Let us consider the ``population version'' of \prettyref{eq:ds}, where the objective function is replaced by its expectation over the random instances $A$ and $B$.
Consider $\pi^* = \id$ and the Gaussian Wigner model $B = A + \sigma Z$, where $A$ and $Z$ are independent GOE matrices with $\sN(0, \frac{1}{n})$ off-diagonal entries and $\sN(0, \frac{2}{n})$ diagonal entries. 
Then the expectation of the objective function is
\begin{align*}
\E  \| A X - X B \|_F^2 
&= \E \| A X \|_F^2  + \E \| X B \|_F^2 - 2 \E \langle A X , X A \rangle \\
&= (2+\sigma^2) \frac{n+1}{n} \|X\|_F^2 - \frac{2}{n} \Tr ( X )^2 - \frac{2}{n} \langle X, X^\top \rangle .
\end{align*}
Hence the population version of the quadratic program \eqref{eq:ds} is 
\begin{align}
\min_{X \in \cB_n} 
(2+\sigma^2) (n+1) \|X\|_F^2 - 2 \Tr ( X )^2 - 2 \langle X, X^\top \rangle , \label{eq:ds-pop}
\end{align}
whose solution\footnote{In fact, \prettyref{eq:sol-struc} is the solution to \prettyref{eq:ds-pop} even if the constraint is relaxed to $\ones^\top X \ones =n$.} is
\begin{align}
\overline{X} \triangleq \epsilon \bI + (1-\epsilon) \flat,
\qquad \epsilon = \frac{ 2 }{ 2 + (n+1) \sigma^2 } \approx \frac{2}{n \sigma^2}.
\label{eq:sol-struc}
\end{align}
This is a convex combination of the true permutation matrix and the center of the Birkhoff polytope $\flat  = \frac{1}{n} \allones$.
Therefore, the population solution $\overline{X}$ is in fact a very ``flat''
matrix, with each entry on the order of $\frac{1}{n}$, and is close to the center of the Birkhoff polytope and far from any of its vertices.

This calculation nevertheless provides us with important structural information about the solution to such a QP relaxation:  $\overline{X}$ is \emph{diagonally dominant} for small $\sigma$, with diagonals about $2/\sigma^2$ times the off-diagonals.  
Although the actual solution of the relaxed program \eqref{eq:ds} or
\eqref{eq:x-est} is not equal to the population solution $\overline{X}$ in
expectation, it is reasonable to expect that it inherits the diagonal dominance
property in the sense that $\widehat{X}_{i,\pi^*(i)}>\widehat{X}_{ij}$ for all
$j \neq \pi^*(i)$, which enables rounding procedures such as (\ref{eq:linearassignment}) to succeed.


With this intuition in mind, let us revisit the regularized quadratic program \eqref{eq:x-est} whose solution is the spectral similarity matrix \prettyref{eq:spectralnew}. By a similar calculation, the solution to the population version of \eqref{eq:x-est} is given by
$\alpha \identity + \beta \allones$, with $\alpha = \frac{2 n^2}{\left(n \left(\eta ^2+\sigma ^2\right)+\sigma ^2\right) \left(n \left(\eta ^2+\sigma ^2+2\right)+\sigma ^2\right)} \approx \frac{2}{\left(\eta ^2+\sigma ^2\right) \left(\eta ^2+\sigma ^2+2\right)} $ and $\beta  = \frac{n}{n (\eta ^2+\sigma ^2+2)+\sigma ^2} \approx \frac{1}{\eta ^2+\sigma ^2 + 2}$, which is diagonally dominant for small $\sigma$ and $\eta$. 
In turn, the basis of our theoretical guarantee is to establish the diagonal dominance of the actual solution $\hat X$; see \prettyref{fig:diag-dom} for an empirical illustration.

Although the ridge penalty $\eta^2 \|X\|_F^2$ guarantees the stability of the solution as discussed in \prettyref{sec:convex}, it may seem counterintuitive since it moves the solution closer to the center of the Birkhoff polytope and further away from the vertices (permutation matrices). 
In fact, several works in the literature \cite{fogel2013convex, dym2017ds++} advocate adding a negative ridge penalty, in order to make the solution closer to a permutation at the price of potentially making the optimization non-convex. 
This consideration, however, is not necessary, as the ensuing rounding step can automatically map the solution to the correct permutation, even if they are far away in the Euclidean distance.

\begin{figure}[!ht]
\begin{subfigure}{0.48\textwidth}
\includegraphics[clip, trim=4.0cm 8.5cm 4.0cm 9.0cm, width=\textwidth]{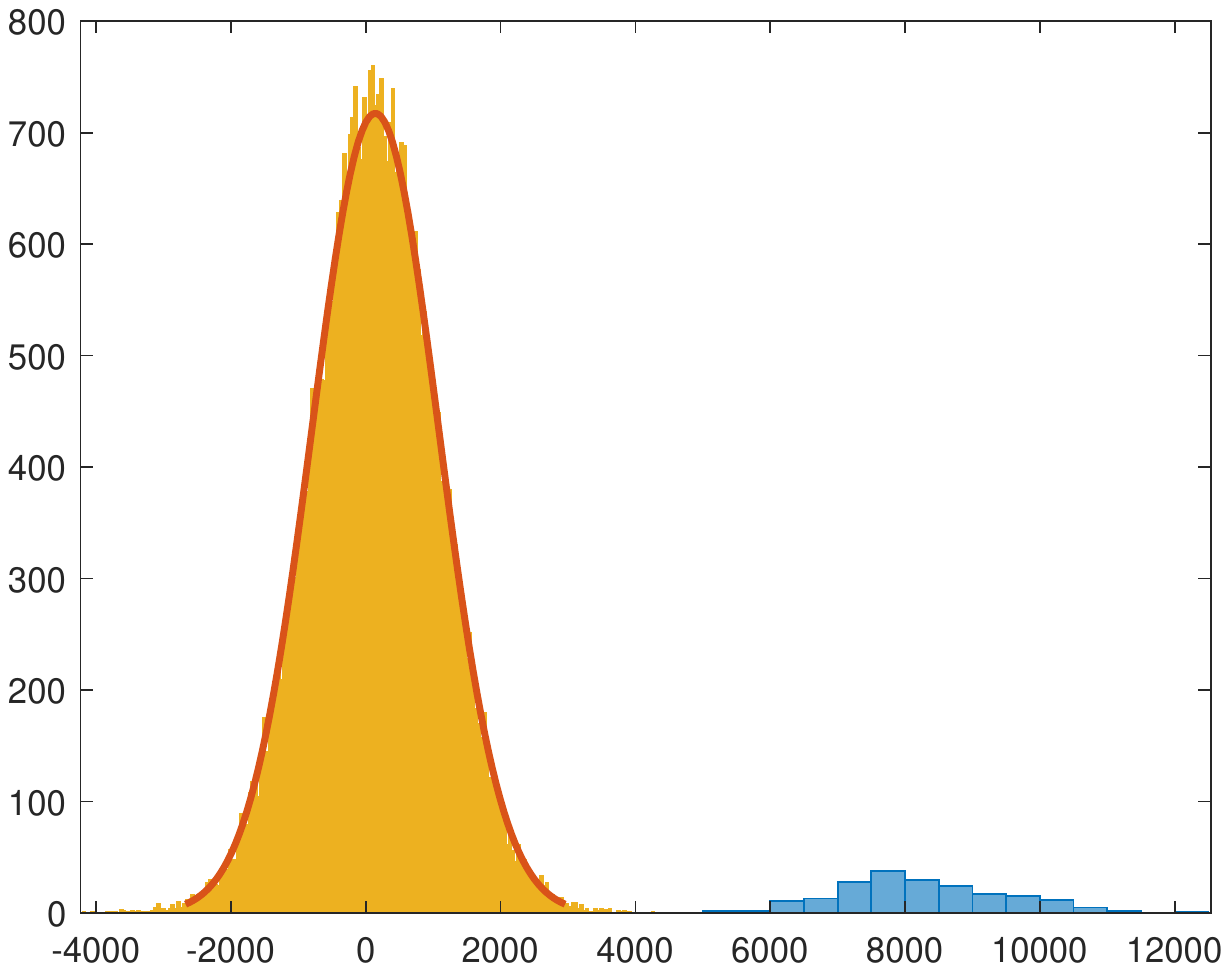}
\caption{Histogram of diagonal (blue) and off-diagonal (yellow with a normal fit) entries of $\widehat{X}$.}
\label{fig:histogram}
\end{subfigure}
\hspace{0.5cm}
\begin{subfigure}{0.48\textwidth}
\includegraphics[clip, trim=4.0cm 8.5cm 4.0cm 9.0cm, width=\textwidth]{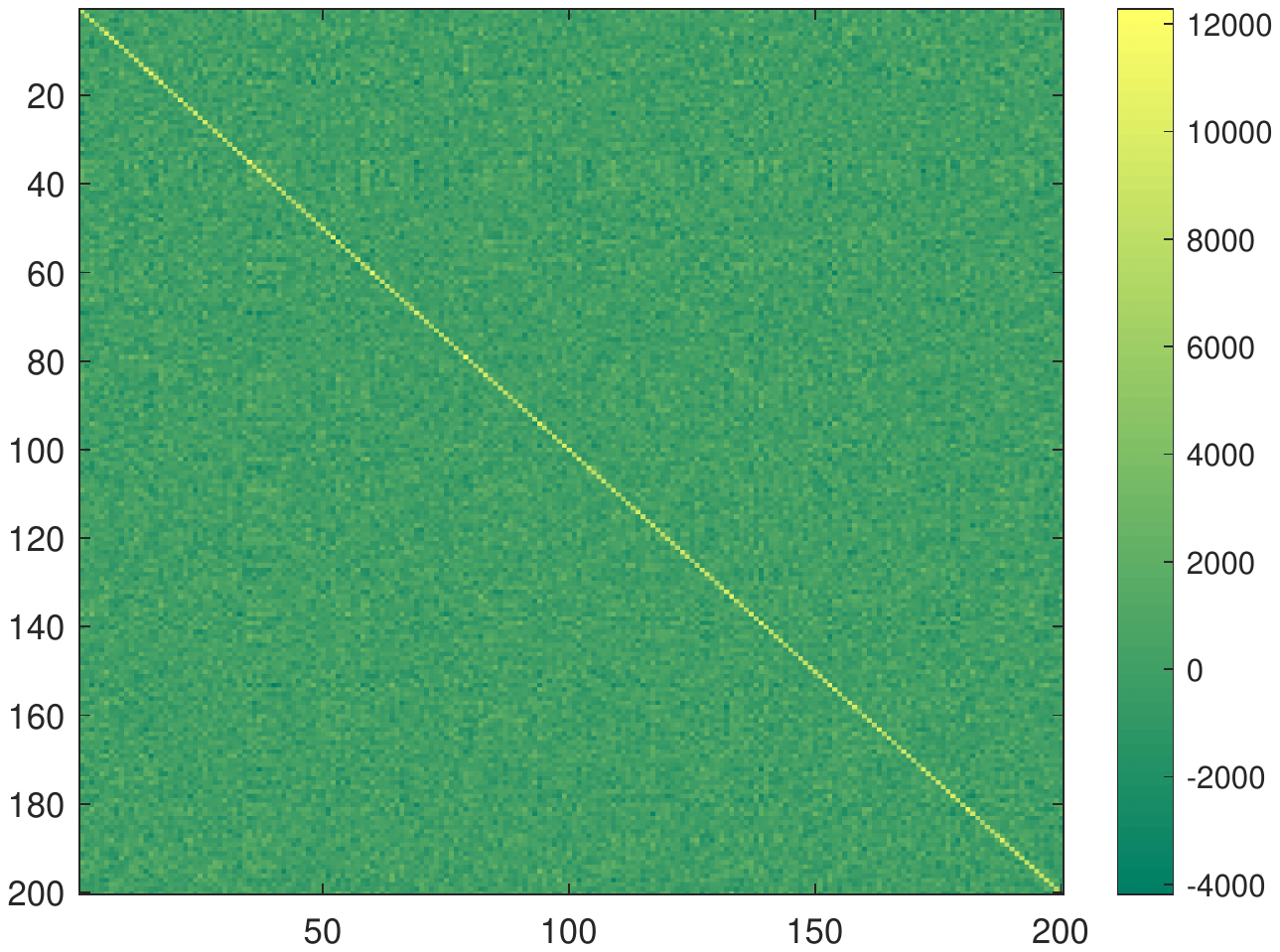}
\caption{Heat map of $\widehat{X}$. \\ \ }
\label{fig:heatmap}
\end{subfigure}
\caption{Diagonal dominance of the similarity matrix $\widehat{X}$ defined by \eqref{eq:spectralnew} or \eqref{eq:x-est} for the Gaussian Wigner model $B = A + \sigma Z$ with $n = 200$, $\sigma = 0.05$ and $\eta = 0.01$.}
\label{fig:diag-dom}
\end{figure}


It is worth noting that, in contrast to the prevalent analysis of convex
relaxations in statistics and machine learning (where the goal is to show that
the solution to the relaxed program is close to the ground truth in a certain
distance) or optimization (where the goal is to bound the gap of the objective
value to the optimum), here our goal is not to show that
the optimal solution \emph{per se} constitutes a good estimator, but to show
that it exhibits a diagonal dominance structure, which guarantees the success of the subsequent rounding procedure. For this reason, it is unclear from first principles that the guarantees obtained for one program, such as \prettyref{eq:constrained}, automatically carry over to a tighter program, such as \prettyref{eq:ds}. 
In the companion paper \cite{FMWX19b}, we analyze a tighter relaxation than
\prettyref{eq:constrained}, where the constraint is tightened to $X\ones
=\ones$, and show that this has a similar performance guarantee; however, this requires a different analysis.

%

\subsection{Notation}
Let $[n] \defn \{1, \dots, n\}$. 
We use $C$ and $c$ to denote universal constants that may change from line to line. 
For two sequences $\{a_n\}_{n=1}^\infty$ and $\{b_n\}_{n=1}^\infty$ of real numbers, we write
$a_n \lesssim b_n$ if there is a universal constant $C$ such that $a_n
\leq C b_n$ for all $n \geq 1$. The relation $a_n \gtrsim b_n$ is
defined analogously. We write $a_n \asymp b_n$ if both the relations $a_n \lesssim b_n$ and $a_n \gtrsim b_n$ hold. 
Let $x \vee y=\max(x,y)$. Let $\id$ denote the
identity permutation, i.e.~$\id(i)=i$ for every
$i$.
In a Euclidean space $\R^n$ or $\C^n$, let $\coord_i$ be the $i$-th standard
basis vector, $\bone = \bone_n$ the all-ones vector, $\bJ = \bJ_n$ the $n \times
n$ all-ones matrix, and $\bI = \bI_n$ the $n \times n$ identity matrix. We omit
the subscripts when there is no ambiguity. Let $\|\cdot\|=\|\cdot\|_2$ denote
the Euclidean vector norm on $\R^n$ or $\C^n$. Let
$\|M\|=\max_{v \neq 0} \|Mv\|_2/\|v\|_2$ denote the Euclidean operator norm,
$\|M\|_F=(\Tr M^*M)^{1/2}$ the Frobenius norm, and
$\|M\|_\infty=\max_{ij} |M_{ij}|$ the elementwise
$\ell_\infty$ norm of a matrix $M$. An
eigenvector is always a unit column vector by convention.
Denote by $X \eqdistr Y$ if random variables $X$ and $Y$ are equal in law.

\section{Main results}
\label{sec:gw}

In this section, we formulate the models, state more precisely
our main results, provide an outline of the proof, and discuss the extension to
bipartite graphs.

\subsection{Gaussian Wigner model}

We say that $A \in \R^{n \times n}$ is from the Gaussian Orthogonal
Ensemble, or simply $A \sim \GOE$, if $A$ is symmetric,
$\{A_{ij}:i \leq j\}$ are independent, and $A_{ij} \sim \sN(0, \frac{1}{n})$
for $i \neq j$ and $\sN(0, \frac{2}{n})$ for $i=j$.
We say that the pair $A,B \in \R^{n \times n}$
follows the \emph{Gaussian Wigner model} for graph matching if
\begin{align}
B^{\pi^*} = A + \sigma Z , 
\label{eq:model}
\end{align}
where $\pi^*$ is an unknown permutation (ground truth),
$B^{\pi^*}$ denotes the permuted matrix  $B^{\pi^*}_{ij}=B_{\pi^*(i),\pi^*(j)}$,
the matrices $A,Z \sim \GOE$ are independent, and $\sigma \geq 0$ is the noise
level. Our goal is to recover the latent permutation $\pi^*$ from $(A,B)$.

We may also consider the rescaled definition
$B^{\pi^*}=\sqrt{1-\sigma^2} A + \sigma Z $, so that $A$
and $B$ have the same marginal law with correlation coefficient $1-\sigma^2$. Our proofs are easily adapted to this
setup, and we assume \eqref{eq:model} for simplicity and a
cleaner presentation. 

We now formalize the exact recovery guarantee for \prettyref{alg:grampa}.

\begin{theorem} \label{thm:wigner}
Consider the model \eqref{eq:model}. There exist constants $c,c'>0$ such that if
\[1/n^{0.1} \leq \eta \leq c/\log n \quad \text{ and } \quad
    \sigma \leq c'\eta,\]
then with probability at least $1-n^{-4}$ for all large $n$, the matrix
$\widehat{X}$ in (\ref{eq:spectralnew}) satisfies
\begin{equation}\label{eq:diagonaldominance}
    \min_{i \in [n]} \widehat{X}_{i, \pi^*(i)} >
    \max_{i, j \in [n]: \, j \ne \pi^*(i)} \widehat{X}_{ij}
\end{equation}
and hence, \prettyref{alg:grampa} recovers $\hat{\pi}=\pi^*$.
\end{theorem}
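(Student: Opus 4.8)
\emph{Reduction and basic representation.} The plan is first to reduce to $\pi^* = \id$ using the equivariance property \eqref{eq:invariance}, so that $B = A + \sigma Z$ with $A, Z \sim \GOE$ independent; it then suffices to establish the diagonal dominance \eqref{eq:diagonaldominance}, because under \eqref{eq:diagonaldominance} both the LAP rounding \eqref{eq:linearassignment} and the greedy rounding \eqref{eq:greedyround} return $\id$ --- for any $\pi\ne\id$, $\sum_i\widehat X_{ii} - \sum_i\widehat X_{i,\pi(i)}\ge |\{i:\pi(i)\ne i\}|\cdot\big(\min_i\widehat X_{ii} - \max_{i\ne j}\widehat X_{ij}\big) > 0$. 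I would then work from the expansion of \eqref{eq:spectralnew}, writing $\bJ=\ones\ones^\top$:
\[
  \widehat X \;=\; \sum_{i,j=1}^n w(\lambda_i,\mu_j)\,x_i y_j\, u_i v_j^\top,
  \qquad x_i := \langle u_i,\ones\rangle,\quad y_j := \langle v_j,\ones\rangle
\]
(equivalently $\widehat X = (\mathcal T^2 + \eta^2 I)^{-1}(\bJ)$ for the self-adjoint Sylvester operator $\mathcal T(X) = AX - XB$, cf.~\prettyref{cor:sol}), and use the partial fraction $w(x,y) = \frac{1}{2\im\eta}\big(\frac{1}{x-y-\im\eta} - \frac{1}{x-y+\im\eta}\big)$ together with $\sum_i\frac{x_i}{\lambda_i - z}u_i = (A-z)^{-1}\ones$ to rewrite each entry $\widehat X_{ab}$ in terms of resolvents of $A$ and of $B = A+\sigma Z$ at spectral parameters with imaginary part of order $\eta$; this is the representation on which the subsequent random-matrix estimates are based.

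\emph{Deterministic approximation and the margin.} Next I would exploit the rotational invariance of the $\GOE$: conjugating everything into the eigenbasis of $A$ reduces the problem to a deterministic diagonal $\Lambda = \mathrm{diag}(\lambda_i)$ obeying eigenvalue rigidity, a random direction $g = U^\top\ones$ that is Haar-uniform on the sphere of radius $\sqrt n$ (so the $x_i$ behave like i.i.d.\ $\sN(0,1)$ up to the constraint $\|g\|^2 = n$), and a fresh independent $\GOE$ $W$, with $B$'s spectral data determined by $\Lambda + \sigma W$; only the sign-invariant weighted overlaps $x_iy_j\langle u_i,v_j\rangle$ ultimately enter, with $\langle u_i,v_j\rangle^2$ following the profile \eqref{eq:partialalignment}. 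Replacing all resolvents by their deterministic (semicircle / subordination) approximations should then show that $\widehat X$ is close to the population solution of \eqref{eq:x-est} computed after \eqref{eq:sol-struc}, namely $\widehat X \approx \alpha I + \beta\bJ$ with $\alpha \asymp \frac{1}{\eta^2 + \sigma^2}$ and $\beta \asymp 1$. This deterministic approximation is diagonally dominant with margin $\widehat X_{aa} - \widehat X_{ab} \approx \alpha \asymp \frac{1}{\eta^2+\sigma^2}$ for all $a \ne b$.

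\emph{Fluctuations and the union bound.} The main remaining task is to show that every entry $\widehat X_{ab}$ deviates from this deterministic approximation by $o(\alpha)$, simultaneously over all $n^2$ pairs $(a,b)$ and with probability at least $1 - n^{-4}$. I would establish this by direct computation rather than black-box local-law machinery: after the rotational-invariance reduction, the relevant quantities become linear and quadratic forms such as $\sum_i \frac{x_i^2}{\lambda_i - z}$ in a random direction, to which standard concentration for quadratic forms in Gaussian/Haar vectors applies, and Gaussianity of $A,Z$ then yields sub-Gaussian tails that survive a union bound over the $n^2$ entries at a $\sqrt{\log n}$ cost. The hypotheses are used precisely here: $\eta \ge n^{-0.1}$ keeps the spectral parameters a polynomial distance from the real axis, so every fluctuation term is polynomially smaller than the signal $\alpha \asymp \eta^{-2}$; and $\sigma \le c'\eta \le c/\log n$ is what is needed to keep the accumulated error below the margin $\alpha$. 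Combining the displays yields $\min_a \widehat X_{aa} > \max_{a \ne b}\widehat X_{ab}$, proving \prettyref{thm:wigner}.

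\emph{Anticipated main obstacle.} The crux is the uniform fluctuation control just described. Because $A$ and $B$ share the common part $A$, the resolvents of $A$ and $B$ are strongly dependent and cannot be treated independently; the rotational-invariance reduction --- writing $B$ in the eigenbasis of $A$ as $\Lambda + \sigma W$ with $W$ a fresh $\GOE$ and $U^\top\ones$ a random direction --- is what I expect to make the conditional moment and concentration computations go through, but one still has to push sharp estimates through a non-commuting product of two resolvents and then control $n^2$ correlated matrix entries at once. Getting the noise threshold right --- $\sigma = O(1/\log n)$ rather than a worse power of $\log n$ or a power of $n$ --- will hinge on tracking the $\log n$-dependence of these large-deviation bounds carefully, and I expect this bookkeeping to be the most delicate part of the argument.
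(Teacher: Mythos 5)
Your outline follows the paper's proof in all of its essential ingredients: reduce to $\pi^*=\id$ by equivariance; exploit rotational invariance so that $U$ is Haar-distributed and $\tilde Z=U^\top ZU$ is a fresh $\GOE$ independent of $U$; represent the entries of $\widehat X$ through resolvents at spectral parameters with imaginary part $\eta$; and control the resulting quadratic forms in the (nearly Gaussian) columns of $U$ by Hanson--Wright, surviving a union bound over the $n^2$ entries. The one organizational difference is the centering: the paper compares $X=\widehat X(A,B)$ to the \emph{noiseless random} matrix $X_*=\widehat X(A,A)$ (Lemmas~\ref{lem:noiseless} and~\ref{lem:noise}) rather than to the deterministic population matrix $\alpha\bI+\beta\bJ$, and for $X_*$ it proves only a lower bound of order $\eta^{-2}$ on the diagonal and an upper bound $O(\eta^{-3/2}\sqrt{\log n}+\eta^{-1}\log n)$ on the off-diagonals; no entrywise concentration of the off-diagonals around $\beta\asymp 1$ is needed or established. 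Your framing would also work, but one quantitative claim in your fluctuation step is wrong and worth correcting: the fluctuations are \emph{not} polynomially smaller than the signal $\alpha\asymp\eta^{-2}$. The dominant off-diagonal fluctuation is of order $\eta^{-3/2}\sqrt{\log n}$, which beats $\eta^{-2}$ only by the factor $\sqrt{\eta\log n}\le\sqrt{c}$ --- a constant --- and the noise contribution $\sigma(\eta^{-3}+\eta^{-2}\log n)$ likewise beats $\eta^{-2}$ only by constants of order $c'$ and $c\,c'$; this is precisely why the theorem requires $\eta\log n$ and $\sigma/\eta$ to be \emph{sufficiently small constants}, rather than merely $o(1)$ in some polynomial sense. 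The hypothesis $\eta\ge n^{-0.1}$ plays a different role from the one you assign it: it suppresses genuinely lower-order errors (the $n^{-0.5}$ rate of convergence of the empirical spectral distribution to the semicircle law in Lemma~\ref{lem:l-fro}, and $n^{-1/4}$-type Gaussian-approximation errors multiplied by $\|L\|_\infty=\eta^{-2}$), not the leading fluctuations. With that correction, the bookkeeping you anticipate as the main obstacle is exactly the content of Lemmas~\ref{lem:sig-conc}, \ref{lem:l-fro}, \ref{lem:noise-conc} and~\ref{lem:Hbounds}.
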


Choosing $\eta \asymp 1/\log n$ in \prettyref{alg:grampa},
we thus obtain exact recovery for $\sigma \lesssim 1/\log n$.
The same exact recovery guarantee clearly also holds if
rounding were performed by the simple scheme (\ref{eq:greedyround}),
instead of solving the linear assignment (\ref{eq:linearassignment}). The
probability $n^{-4}$ is arbitrary and may be strengthened to $n^{-D}$
for any constant $D>0$, where $c,c'$ above depend on $D$.

Consider next the gradient descent iterates $X^{(t)}$ defined by \eqref{eq:gd}.
We verify that these iterates converge to $\widehat{X}$, and that the same
guarantee holds for $X^{(t)}$ for sufficiently large $t$.

\begin{corollary} \label{cor:sol}
Let $X^{(0)}=0$, define recursively $X^{(t)}$ by the gradient descent dynamics
    (\ref{eq:gd}), and let $\widehat{X}$ be the similarity matrix
    (\ref{eq:spectralnew}).
\begin{enumerate}[(a)]
    \item The matrix $\widehat{X}$ is the minimizer of the unconstrained program
        (\ref{eq:x-est}), and $\alpha \widehat{X}$ is the minimizer of the
        constrained program (\ref{eq:constrained})
        for some (random) scalar multiplier $\alpha>0$.
    \item In terms of the spectral decompositions of $A$ and $B$ in
        (\ref{eq:spectraldecomp}), each iterate $X^{(t)}$ is given by
        \[X^{(t)} = \sum_{i, j = 1}^n \frac{ 1 - [ 1 - \stepsize \ridge - \stepsize (
    \lambda_i - \mu_j )^2 ]^t }{ \ridge + ( \lambda_i - \mu_j )^2 } u_i u_i^\top
        \bJ v_j v_j^\top.\]
In particular, if the step size satisfies 
$\stepsize<1/(\eta^2+(\lambda_i-\mu_j)^2)$ for all $i,j$, then
$X^{(t)} \to \widehat{X}$ as $t \to \infty$.
    \item In the setting of Theorem \ref{thm:wigner}, for some constants
        $C,c>0$, if $\stepsize<c$ and
        \[t>\frac{C\log n}{\stepsize \eta^2},\]
        then the guarantees of Theorem \ref{thm:wigner} also hold with
        probability at least $1-n^{-4}$ with $X^{(t)}$ in place of $\widehat{X}$.
\end{enumerate}
\end{corollary}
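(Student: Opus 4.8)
\emph{Parts (a) and (b).} The plan is to exploit the fact that the linear map $X \mapsto A^2X + XB^2 - 2AXB + \ridge X$, the iteration \eqref{eq:gd}, and all the matrices in play live on a common eigenbasis. Fix the spectral decompositions \eqref{eq:spectraldecomp} and let $\mathcal V \defn \mathrm{span}\{u_iu_i^\top \bJ v_jv_j^\top : i,j\in[n]\}$. Since $\sum_i u_iu_i^\top = \bI = \sum_j v_jv_j^\top$, we have $\bJ = \sum_{ij} u_iu_i^\top\bJ v_jv_j^\top \in \mathcal V$, and for any $X = \sum_{ij} c_{ij}\, u_iu_i^\top\bJ v_jv_j^\top \in \mathcal V$,
\[ A^2 X + XB^2 - 2AXB + \ridge X = \sum_{ij} d_{ij}\, c_{ij}\, u_iu_i^\top\bJ v_jv_j^\top, \qquad d_{ij} \defn (\lambda_i-\mu_j)^2 + \ridge = \tfrac{1}{w(\lambda_i,\mu_j)} . \]
For (a): the objective of \eqref{eq:x-est} is strictly convex (its quadratic part dominates $\tfrac{\ridge}{2}\|X\|_F^2$), so its minimizer is the unique solution of $A^2X+XB^2-2AXB+\ridge X = \bJ$; reading this equation coefficientwise in $\mathcal V$ forces $c_{ij} = 1/d_{ij} = w(\lambda_i,\mu_j)$, i.e.\ $X=\widehat X$. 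The program \eqref{eq:constrained} is a strictly convex quadratic over the hyperplane $\{\bone^\top X\bone = n\}$, so its unique minimizer satisfies the KKT condition $A^2X+XB^2-2AXB+\ridge X = \tfrac{\nu}{2} \bJ$ for a multiplier $\nu$; hence it equals $\tfrac{\nu}{2}\widehat X$, and the constraint gives $\tfrac{\nu}{2} = \alpha \defn n/(\bone^\top\widehat X\bone)$, which is positive because $\bone^\top\widehat X\bone = \sum_{ij} w(\lambda_i,\mu_j)(\bone^\top u_i)^2(\bone^\top v_j)^2 > 0$ (as $w>0$ and $\sum_i(\bone^\top u_i)^2 = n$). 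For (b): \eqref{eq:gd} preserves $\mathcal V$, so with $X^{(0)}=0$ we have $X^{(t)} = \sum_{ij} c_{ij}^{(t)} u_iu_i^\top\bJ v_jv_j^\top$ with $c_{ij}^{(t+1)} = (1-\stepsize d_{ij})c_{ij}^{(t)} + \stepsize$ and $c_{ij}^{(0)} = 0$; summing the geometric series gives $c_{ij}^{(t)} = (1-(1-\stepsize d_{ij})^t)/d_{ij}$, which is the stated formula, and when $\stepsize < 1/d_{ij}$ for all $i,j$ one has $|1-\stepsize d_{ij}| < 1$, so $c_{ij}^{(t)} \to 1/d_{ij} = w(\lambda_i,\mu_j)$ and $X^{(t)} \to \widehat X$.

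\emph{Part (c).} Condition on the event of Theorem~\ref{thm:wigner} intersected with $\{\|A\|\vee\|B\|\le 3\}$ (a standard GOE operator-norm bound), still of probability at least $1-n^{-4}$ after harmless adjustment of constants. By (b), $X^{(t)} = \widehat X - E^{(t)}$ with $E^{(t)} = \sum_{ij}\rho_{ij}\, u_iu_i^\top\bJ v_jv_j^\top$, $\rho_{ij} = (1-\stepsize d_{ij})^t/d_{ij}$. On this event $(\lambda_i-\mu_j)^2 \le 36$, so $d_{ij} \in [\ridge,\, \ridge + 36]$, and for $\stepsize < c$ with $c \le 1/37$ we get $0 \le 1-\stepsize d_{ij} \le 1-\stepsize\ridge \le e^{-\stepsize\ridge}$, hence $\rho_{ij} \le \ridge^{-1} e^{-t\stepsize\ridge} \le n^{0.2} e^{-t\stepsize\ridge}$ using $\eta \ge n^{-0.1}$. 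Writing $E^{(t)} = UMV^\top$ with $U=[u_1,\dots,u_n]$, $V=[v_1,\dots,v_n]$ orthogonal and $M_{ij} = \rho_{ij}(\bone^\top u_i)(\bone^\top v_j)$, and using $\|\cdot\|_\infty \le \|\cdot\|$ together with $\sum_i(\bone^\top u_i)^2 = \sum_j(\bone^\top v_j)^2 = n$,
\[ \|E^{(t)}\|_\infty \;\le\; \|E^{(t)}\| \;=\; \|M\| \;\le\; \|M\|_F \;\le\; \Big(\max_{ij}\rho_{ij}\Big)\, n \;\le\; n^{1.2}\, e^{-t\stepsize\ridge}. \]
The proof of Theorem~\ref{thm:wigner} in fact yields \eqref{eq:diagonaldominance} with a quantitative margin $\delta \defn \min_i \widehat X_{i,\pi^*(i)} - \max_{j \ne \pi^*(i)} \widehat X_{ij}$ bounded below by an inverse power $n^{-K}$; taking $C > K+2$ (and shrinking $c$ if needed), the hypothesis $t > C\log n/(\stepsize\ridge)$ forces $\|E^{(t)}\|_\infty \le n^{1.2 - C} < \delta/2$, so $\min_i X^{(t)}_{i,\pi^*(i)} - \max_{j\ne\pi^*(i)} X^{(t)}_{ij} \ge \delta - 2\|E^{(t)}\|_\infty > 0$, and the conclusion of Theorem~\ref{thm:wigner} holds verbatim with $X^{(t)}$ in place of $\widehat X$.

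\emph{Main obstacle.} Parts (a) and (b) are mechanical once the commuting-eigenbasis structure is observed. In (c) the only point requiring care is that the bare strict inequality in \eqref{eq:diagonaldominance} is not enough: one needs the \emph{quantitative} margin $\delta \ge n^{-O(1)}$, which must be read off from the proof of Theorem~\ref{thm:wigner} (its concentration estimates deliver such a bound). Any inverse-polynomial lower bound on $\delta$ suffices, since $\|E^{(t)}\|_\infty$ can be made to decay like an arbitrarily large negative power of $n$ by enlarging $C$. Everything else—the GOE norm bound defining the conditioning event, and the inequality $\|M\|_\infty \le \|M\| \le \|M\|_F$—is routine.
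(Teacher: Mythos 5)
Your proposal is correct and follows essentially the same route as the paper: diagonalizing the quadratic operator in the product eigenbasis $\{u_iv_j^\top\}$ (the paper does this via vectorization and Kronecker products), summing the geometric series for the gradient-descent iterates, and in (c) bounding the entrywise error $\|X^{(t)}-\widehat X\|_\infty$ by a quantity decaying like $(1-\stepsize\ridge)^t\cdot\poly(n)$ and comparing it to the diagonal-dominance margin of order $1/\eta^2$ supplied by Lemmas \ref{lem:noiseless} and \ref{lem:noise}. The only cosmetic caveat is in (a): since some $u_i^\top\bJ v_j$ may vanish, the coefficients $c_{ij}$ are not literally forced by "reading coefficientwise," but uniqueness of the minimizer together with the direct verification that $\widehat X$ satisfies the stationarity equation closes this immediately.
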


In particular, setting $\stepsize$ to be a small constant and
$\eta \asymp 1/\log n$, we obtain the same
diagonal dominance property for $X^{(t)}$ as long as $t \gtrsim (\log n)^3$, and
consequently either the rounding scheme \prettyref{eq:linearassignment} or
\prettyref{eq:greedyround} applied to
$X^{(t)}$ recovers the true matching $\pi^*$. 

\subsection{Proof outline for Theorem~\ref{thm:wigner}}\label{sec:proofideas}

We give an outline of the proof for Theorem \ref{thm:wigner}.
By the permutation invariance property \prettyref{eq:invariance} of the algorithm, we may assume without loss of
generality that $\pi^*=\id$, the identity permutation.
Then we must show in (\ref{eq:diagonaldominance}) that every diagonal entry
of $\widehat{X}$ is larger than every off-diagonal entry.

Denote the similarity matrix in \prettyref{eq:spectralnew} by
\begin{equation}
X=\widehat{X}(A,B)= \sum_{i,j=1}^n
\frac{1}{(\lambda_i-\mu_j)^2+\eta^2}u_iu_i^\top \bJ v_jv_j^\top
\label{eq:X}
\end{equation}
 and
introduce $X_*$ as the similarity matrix constructed in the noiseless setting with $A$ in place of
$B$. That is,
\begin{equation}
X_* =\widehat{X}(A,A) =\sum_{i,j=1}^n
\frac{1}{(\lambda_i-\lambda_j)^2+\eta^2}u_iu_i^\top \bJ u_ju_j^\top.
\label{eq:Xstar}
\end{equation}
We divide the proof into establishing the diagonal dominance of $X_*$,
and then bounding the entrywise difference $X-X_*$.

\begin{lemma}\label{lem:noiseless}
    For some constants $C,c>0$, if $1/n^{0.1} < \eta < c/\log n$, then
    with probability at least $1-5n^{-5}$ for large $n$,
    \[\min_{i \in [n]} (X_*)_{ii}>1/(3\eta^2)\]
    and
    \begin{equation}\label{eq:offdiag}
        \max_{i,j \in [n]:i \neq j} (X_*)_{ij}<
    C\left(\frac{\sqrt{\log n}}{\eta^{3/2}}+\frac{\log n}{\eta}\right).
    \end{equation}
\end{lemma}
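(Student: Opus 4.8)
\medskip
\noindent\emph{Proof plan.}
The plan is to exploit the rotational invariance of $A$. Write $A=U\Lambda U^\top$ with $\Lambda=\diag{\lambda_1,\dots,\lambda_n}$ and $U$ Haar on the orthogonal group, independent of $\Lambda$; since $\bJ=\bone\bone^\top$, the matrix in \eqref{eq:Xstar} collapses to $X_*=U\tilde KU^\top$, where $K\in\R^{n\times n}$ has entries $K_{ij}=((\lambda_i-\lambda_j)^2+\eta^2)^{-1}$ and $\tilde K=\diag{g}\,K\,\diag{g}$ with $g=U^\top\bone$. Hence $(X_*)_{ab}=p_a^\top\tilde Kp_b$ with $p_a=U^\top e_a$, and $p_a^\top g=e_a^\top UU^\top\bone=1$. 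A key structural fact is $K\succeq0$: from $\frac1{t^2+\eta^2}=\int_\R e^{-\eta|s|}e^{-\im ts}\,\frac{ds}{2\eta}$ one sees $K$ is the Gram matrix of $\{s\mapsto e^{\im\lambda_i s}\}_i$, so $\tilde K\succeq0$ as well. I would condition on $\Lambda$ and on $g$, compute the conditional mean of $(X_*)_{ab}$, and then bound the conditional fluctuations, handling the diagonal (where only one-sided deviations matter) and the off-diagonal separately.

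First I would record the deterministic inputs, valid off an event of probability $\lesssim n^{-5}$. For $\Lambda$: $\|A\|\le3$, and the local semicircle law at the mesoscopic scale $\eta\ge n^{-0.1}$, which gives $\max_i\sum_j K_{ij}=\max_i\tfrac n\eta\Im m_n(\lambda_i+\im\eta)\lesssim n/\eta$, hence $\|K\|\le\max_i\sum_j K_{ij}\lesssim n/\eta$ and $\|K\|_F^2\le\eta^{-2}\sum_{ij}K_{ij}\lesssim n^2/\eta^3$. For $g$, which is uniform on $\sqrt n\,S^{n-1}$: $\|g\|_\infty^2\lesssim\log n$, and, by concentration of the relevant polynomials of $g$ around their means (whose orders dominate the deviations here), $\|\tilde K\|_F^2=\sum_{ij}K_{ij}^2g_i^2g_j^2\lesssim n^2/\eta^3$, $(g^{\odot2})^\top Kg^{\odot2}\lesssim n^2/\eta$, $\|\diag{g}Kg^{\odot2}\|^2\lesssim n^3/\eta^2$, and $\|\tilde K\|\lesssim n/\eta$ (the last via a moment bound $\Tr\tilde K^{2p}\lesssim\Tr K^{2p}\le\|K\|^{2p-2}\|K\|_F^2$ with $p\asymp\log n$, since the crude $\|g\|_\infty^2\|K\|$ loses a logarithm). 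Call $\cE$ the intersection and work on $\cE$ conditionally on $(\Lambda,g)$.

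For the conditional mean, write $p_a=\tfrac1n g+f_a$, where $f_a=U^\top(e_a-\tfrac1n\bone)$ is the image of the fixed vector $e_a-\tfrac1n\bone\in\bone^\perp$ under the map $U^\top$, which, conditionally on $g$, is Haar among orthogonal maps $\bone^\perp\to g^\perp$; thus $\E[f_a\mid g]=0$ and $\E[f_af_b^\top\mid g]=\frac{\langle e_a-\bone/n,\,e_b-\bone/n\rangle}{n-1}\,\Pi_{g^\perp}$. Substituting into $(X_*)_{ab}=p_a^\top\tilde Kp_b$ and using $\Tr\tilde K=\sum_i K_{ii}g_i^2=\eta^{-2}\|g\|^2=n/\eta^2$ together with $\Tr(\tilde K\Pi_{g^\perp})=\Tr\tilde K-\tfrac1n g^\top\tilde Kg$, all $g^\top\tilde Kg$ terms cancel on the diagonal, giving the exact identity $\E[(X_*)_{aa}\mid g]=1/\eta^2$, while for $a\ne b$ one gets $\E[(X_*)_{ab}\mid g]=(1+\tfrac1{n(n-1)})\tfrac1{n^2}(g^{\odot2})^\top Kg^{\odot2}-\tfrac1{(n-1)\eta^2}$, which is $O(1/\eta)$ on $\cE$.

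It remains to control the fluctuations on $\cE$, conditionally on $(\Lambda,g)$. Put $w=\diag{g}Kg^{\odot2}$. The diagonal satisfies $(X_*)_{aa}-\eta^{-2}=\tfrac2n\langle w,f_a\rangle+\big(f_a^\top\tilde Kf_a-\E[f_a^\top\tilde Kf_a\mid g]\big)$; the linear term is sub-Gaussian at scale $\|w\|/n^{3/2}\lesssim\eta^{-1}$, hence $\gtrsim-\sqrt{\log n}/\eta$ uniformly over $a$, and—since $\tilde K\succeq0$—the quadratic term has a sub-Gaussian \emph{left} tail at the Frobenius scale $\|\tilde K\|_F/n\lesssim\eta^{-3/2}$ (no operator norm enters a one-sided bound for a PSD form), hence $\gtrsim-\sqrt{\log n}/\eta^{3/2}$ uniformly over $a$. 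Thus $\min_a(X_*)_{aa}\ge\eta^{-2}(1-C\sqrt{\eta\log n}-C\eta\sqrt{\log n})>1/(3\eta^2)$ for $c$ small, as $\eta<c/\log n$. For $a\ne b$, $(X_*)_{ab}-\E[(X_*)_{ab}\mid g]=\tfrac1n\langle w,f_a+f_b\rangle+\big(f_a^\top\tilde Kf_b-\E[f_a^\top\tilde Kf_b\mid g]\big)$; the linear term is $O(\sqrt{\log n}/\eta)$, and, conditioning further on $f_b$, $f_a^\top\tilde Kf_b=\langle\tilde Kf_b,f_a\rangle$ is linear in the sphere-uniform $f_a$, hence sub-Gaussian at scale $\|\tilde Kf_b\|/\sqrt n\lesssim\eta^{-3/2}$ (using $\|\tilde Kf_b\|^2=f_b^\top\tilde K^2f_b\lesssim\|\tilde K\|_F^2/n\lesssim n/\eta^3$, which invokes $\|\tilde K\|\lesssim n/\eta$ only through a lower-order term). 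A union bound over the $n$ diagonal and $n^2$ off-diagonal entries, combined with the conditional means, gives on $\cE$ the bound $\max_{a\ne b}(X_*)_{ab}\lesssim\sqrt{\log n}/\eta^{3/2}+\log n/\eta$, i.e.\ \eqref{eq:offdiag}; summing the $\lesssim n^{-5}$ failure probabilities of the constituent events yields the claimed $1-5n^{-5}$.

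\medskip
\noindent\emph{Main obstacle.}
The crux is the sharp control of the $g$-dependent quantities on $\cE$---especially $\|\tilde K\|\lesssim n/\eta$ without an extra logarithm, which forces a high-moment (Schatten) argument rather than $\|g\|_\infty^2\|K\|$---and verifying that the deviations of $\|\tilde K\|_F^2$, $(g^{\odot2})^\top Kg^{\odot2}$, and $\|\diag{g}Kg^{\odot2}\|^2$ about their means are of lower order than the means, so no logarithm leaks into the $\eta^{-3/2}$-scale term. The observation that $\tilde K\succeq0$---so that on the diagonal only the sub-Gaussian left tail, governed by $\|\tilde K\|_F$, is needed---is what keeps the diagonal lower bound robust, while pushing the operator-norm-scale contribution on the off-diagonal down to $\log n/\eta$ is the remaining delicate point.
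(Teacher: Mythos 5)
Your proposal is correct in substance, and its skeleton coincides with the paper's: rotate by $U$ so that everything is expressed through the kernel matrix $L_{ij}=((\lambda_i-\lambda_j)^2+\eta^2)^{-1}$ and the nearly-uniform vectors $U^\top \coord_a$, $U^\top\bone$; establish $\max_i\sum_j L_{ij}\lesssim n/\eta$ and $\|L\|_F\lesssim n/\eta^{3/2}$ from convergence of the empirical spectral distribution; then use Hanson--Wright-type bounds at Frobenius scale for the $\eta^{-3/2}$ term and at operator-norm scale for the $(\log n)/\eta$ term. Where you genuinely differ is the decomposition: you condition on $g=U^\top\bone$ and split $U^\top \coord_a=g/n+f_a$, which yields the exact conditional mean $1/\eta^2$ on the diagonal and an $O(1/\eta)$ conditional mean off the diagonal, whereas the paper fixes $\phi=U^\top \coord_1,\psi=U^\top \coord_2$ and Gaussianizes $\xi=U^\top\bone$ around the span of $\phi,\psi$, leading to the nine-term expansion \eqref{eq:nine-terms} and a direct proof that $\sum_i\phi_i^2\xi_i^2>1/2$ (Lemma~\ref{lem:diag-dominate}). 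Your treatment of the diagonal is cleaner, and your observation that $L\succeq 0$ (Gram matrix of $s\mapsto e^{\im\lambda_i s}$ in $L^2(e^{-\eta|s|}\,ds)$), so that only a sub-Gaussian \emph{lower} tail at Frobenius scale is needed there, is a genuine simplification the paper does not exploit.

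The one step you assert rather than prove is the log-free operator norm bound $\|\diag{g}\,L\,\diag{g}\|\lesssim n/\eta$ (the analogue of Lemma~\ref{lem:k-op}). You correctly identify that the crude bound $\|g\|_\infty^2\|L\|$ loses a factor of $\log n$, and that this loss would be fatal at $\eta\asymp 1/\log n$ (it inflates the off-diagonal bound to $(\log n)^2/\eta$, which is no longer dominated by $1/\eta^2$). Your proposed fix via $\Tr\tK^{2p}\lesssim C^{2p}\Tr L^{2p}$ with $p\asymp\log n$ cannot be a deterministic inequality (it fails when $g$ concentrates on few coordinates), so it requires a genuine moment computation over $g$ with the attendant combinatorics; this is plausible but is the one nontrivial claim left open in your sketch. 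The paper closes exactly this point by a different two-step argument: first $\sum_i\phi_i^2L_{ij}L_{ik}\lesssim\frac1n\sum_iL_{ij}L_{ik}$ uniformly in $j,k$, which crucially uses the lower bound $L_{ij}\ge c$ so that the mean dominates the fluctuation (Lemma~\ref{lem:l-lem}), and then a row-sum/Cauchy--Schwarz bootstrap (Lemma~\ref{lem:k-op}). The remaining concentration claims you invoke for $\|\tK\|_F^2$, for $\sum_{ij}K_{ij}g_i^2g_j^2$, and for $\|\diag{g}Kg^{\odot 2}\|^2$ correspond to Lemmas~\ref{lem:k-fro} and~\ref{lem:l-lem} and do hold, though they need hypercontractivity for degree-$4$ polynomials rather than plain sub-Gaussian tails; with those and the operator-norm step filled in, your argument goes through and delivers the stated bounds.
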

\begin{lemma}\label{lem:noise}
    If 
		$\eta>1/n^{0.1}$, then for a constant $C>0$,
    with probability at least $1-2n^{-5}$ for large $n$,
    \begin{equation}\label{eq:noise}
        \max_{i,j \in [n]} |X_{ij}-(X_*)_{ij}|
    <
    C \sigma \left(\frac{1}{\eta^3}
    +\frac{\log n}{\eta^2}\left(1+\frac{\sigma}{\eta}\right)\right).
    \end{equation}
\end{lemma}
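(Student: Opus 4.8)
}
Keeping the normalization $\pi^*=\id$ of the proof outline, so that $B=A+\sigma Z$ with $A,Z\sim\GOE$ independent, note that $\sum_j w(\lambda_i,\mu_j)v_jv_j^\top=g_i(B)$ where $g_i(x):=w(\lambda_i,x)=\big((\lambda_i-x)^2+\eta^2\big)^{-1}=\tfrac1\eta\Im\,(x-\lambda_i-\im\eta)^{-1}$. Hence $X=\sum_i u_iu_i^\top\bJ\,g_i(B)$ and $X_*=\sum_i u_iu_i^\top\bJ\,g_i(A)$, and since $\bJ=\bone\bone^\top$,
\[
(X-X_*)_{k\ell}=\sum_{i=1}^n\langle u_i,\bone\rangle\,u_i(k)\,\big\langle\bone,\,(g_i(B)-g_i(A))\,\coord_\ell\big\rangle .
\]
The plan is to condition on $A$ and use rotational invariance of the GOE: writing $A=U\Lambda U^\top$ with $\Lambda=\mathrm{diag}(\lambda_1,\ldots,\lambda_n)$, the matrix $W:=U^\top ZU\sim\GOE$ is independent of $A$, $B=U(\Lambda+\sigma W)U^\top$, and $g_i(B)-g_i(A)=U\big(g_i(\Lambda+\sigma W)-g_i(\Lambda)\big)U^\top$. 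With the deterministic-given-$A$ vectors $p:=U^\top\bone$ (of norm $\sqrt n$) and the unit vectors $q:=U^\top\coord_k$, $r:=U^\top\coord_\ell$, this rewrites $(X-X_*)_{k\ell}=\sum_i p_iq_i\,\big\langle p,\,(g_i(\Lambda+\sigma W)-g_i(\Lambda))\,r\big\rangle$: a functional of the random perturbation, by a GOE, of a \emph{diagonal} matrix.

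Next I would Neumann-expand $(\Lambda+\sigma W-z_i)^{-1}=(\Lambda-z_i)^{-1}\sum_{m\ge0}\big(-\sigma W(\Lambda-z_i)^{-1}\big)^m$ at $z_i:=\lambda_i+\im\eta$; this is licit since $\|(\Lambda-z_i)^{-1}\|\le\eta^{-1}$ and $\sigma\|W\|\le\tfrac12\eta$ on an event of probability $1-O(n^{-5})$, and the order-$\sigma^m$ term of $g_i(\Lambda+\sigma W)-g_i(\Lambda)$ is then a degree-$m$ polynomial in the entries of $W$, successive orders gaining a factor $\lesssim\sigma/\eta$. The order-$\sigma$ term is $\sigma(\Psi^{(i)}\circ W)$, the Daleckii--Krein derivative, with divided-difference matrix $\Psi^{(i)}_{ab}=\frac{g_i(\lambda_a)-g_i(\lambda_b)}{\lambda_a-\lambda_b}=\frac{2\lambda_i-\lambda_a-\lambda_b}{\big((\lambda_i-\lambda_a)^2+\eta^2\big)\big((\lambda_i-\lambda_b)^2+\eta^2\big)}$ (and $g_i'(\lambda_a)$ on the diagonal). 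Substituting into the display and summing over $i$, the order-$\sigma$ part of $(X-X_*)_{k\ell}$ collapses to the single \emph{linear} form $\sigma\sum_{a,b}W_{ab}\,p_ar_b\,T_{ab}$ in the Gaussian entries of $W$, with $T_{ab}:=\sum_i p_iq_i\,\Psi^{(i)}_{ab}$; conditionally on $A$ it is centered Gaussian with variance $\le\tfrac{2\sigma^2}{n}\sum_{a,b}p_a^2r_b^2T_{ab}^2$. Bounding this last sum by its expectation over the Haar eigenbasis (which is $\lesssim n/\eta^5$) rather than by the cruder $n\max_{a,b}T_{ab}^2$ gives a variance $\lesssim\sigma^2/\eta^5$; a Gaussian tail bound and a union over the $n^2$ pairs $(k,\ell)$ then yield a linear contribution of order $\sigma\sqrt{\log n}/\eta^{5/2}=\sqrt{(\sigma/\eta^3)(\sigma\log n/\eta^2)}\le\tfrac12(\sigma/\eta^3+\sigma\log n/\eta^2)$ --- the first two terms of \eqref{eq:noise}. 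The order-$\sigma^2$ term, handled via the Hanson--Wright inequality after subtracting its $W$-mean, contributes $\lesssim\sigma^2\log n/\eta^3$ --- the third term --- and the higher orders, estimated the same way with an extra $(\sigma/\eta)^{m-2}\le(c')^{m-2}$, are absorbed.

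I expect the hard part to be the variance/proxy estimates above: this is an \emph{entrywise} ($\ell_\infty$) perturbation bound, and the temptation to bound $g_i(B)-g_i(A)$ by $\sigma\|Z\|/\eta^3$ in operator norm and then pass to entries through $\|\bone\|=\sqrt n$ loses a factor of $n$. What averts this is, first, a package of sharp spectral estimates for $A$ valid on a probability-$(1-n^{-5})$ event --- from the semicircle law and eigenvalue rigidity one gets $\sum_i\big((\lambda_i-\lambda_a)^2+\eta^2\big)^{-1}\asymp n/\eta$, $\sum_i\big((\lambda_i-\lambda_a)^2+\eta^2\big)^{-2}\asymp n/\eta^3$, a resonant count $\#\{i:|\lambda_i-\lambda_a|\lesssim\eta\}\asymp n\eta$, and, crucially, the \emph{cancellation} $\tfrac1n\sum_i\Psi^{(i)}_{ab}=O(1/\eta)$ because $\Psi^{(i)}_{ab}$ is odd in $\lambda_i$ about $\tfrac{\lambda_a+\lambda_b}{2}$ (so only the slowly varying density contributes); and second, eigenvector delocalization, $\|u_i\|_\infty\lesssim\sqrt{(\log n)/n}$ and $|\langle u_i,\bone\rangle|\lesssim\sqrt{\log n}$, together with concentration of the Haar-coordinate products $p_iq_i$ about their mean $1/n$ --- writing $p_iq_i=\tfrac1n+\xi_i$ with $\E\xi_i=0$, $|\xi_i|\lesssim n^{-1/2}$, splits $T_{ab}$ into the cancelling smooth part $\tfrac1n\sum_i\Psi^{(i)}_{ab}=O(1/\eta)$ and a mean-zero part of size $\lesssim\big(\tfrac1n\sum_i(\Psi^{(i)}_{ab})^2\big)^{1/2}\sqrt{\log n}\lesssim\eta^{-5/2}\sqrt{\log n}$, which is exactly the scale needed. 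Making the fixed directions $\bone,\coord_k,\coord_\ell$ behave ``generically'' against the Haar eigenbasis of $A$, order by order in the $\sigma$-expansion, is where all the bookkeeping goes; finally one unions this $A$-event with the $W$-event above to reach probability $1-2n^{-5}$.
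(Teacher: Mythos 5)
Your strategy is genuinely different from the paper's, and in its present form it has concrete gaps. The paper does not expand perturbatively in $\sigma$: after the same initial reduction (rotate by $U$, use that $\tZ=U^\top Z U\sim\GOE$ is independent of $U$), it writes the entrywise difference exactly as $\frac{1}{\eta}\Im\,(\phi\otimes\psi)^\top H(\xi\otimes\xi)$ for the $n^2\times n^2$ resolvent difference $H$ in \prettyref{eq:H} (Lemma~\ref{lem:vectorization}), proves a single quadratic-form concentration estimate $|(\phi\otimes\psi)^\top H(\xi\otimes\xi)|\lesssim \|H\|+\|H\|_F\frac{\log n}{n}$ for \emph{deterministic} $H$ (Lemma~\ref{lem:noise-conc}), and then bounds $\|H\|\lesssim\sigma/\eta^2$ and $\|H\|_F\lesssim \sigma n\eta^{-1}(1+\sigma/\eta)$ using only the exact identity $A^{-1}-B^{-1}=A^{-1}(B-A)B^{-1}$ and the deterministic bound $\|(\cL-\im\eta\bI)^{-1}\|\le 1/\eta$ (Lemma~\ref{lem:Hbounds}). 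The entire gain over the naive operator-norm argument is concentrated in the Frobenius-norm computation for $H$, which is done explicitly because $\cL_*$ is diagonal and $\tZ$ is independent of $\Lambda$; no eigenvalue rigidity, delocalization, or cancellation identities are needed beyond the crude Kolmogorov-distance bound of Proposition~\ref{prop:wignerproperties}(b) feeding Lemma~\ref{lem:l-fro}.

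Two specific problems with your route. First, the Neumann expansion of $(\Lambda+\sigma W-z_i)^{-1}$ requires $\sigma\|W\|<\eta$, i.e.\ $\sigma\lesssim\eta$; the lemma is stated with no such restriction, and the factor $(1+\sigma/\eta)$ in \prettyref{eq:noise} exists precisely to cover $\sigma\gg\eta$. (This restriction is harmless for Theorem~\ref{thm:wigner}, where $\sigma\le c'\eta$, but it means you are proving a weaker statement than Lemma~\ref{lem:noise}.) Second, the steps that carry the actual mathematical content are asserted rather than proved, and they are not routine: the cancellation $\frac1n\sum_i\Psi^{(i)}_{ab}=O(1/\eta)$ relies on approximate symmetry of the empirical spectral measure about $(\lambda_a+\lambda_b)/2$ and degrades near the spectral edge, where the semicircle density is only H\"older-$1/2$ and the $n^{-1}\polylog(n)$ rigidity budget must be checked against an integrand of size $\eta^{-3}$ with total variation $\eta^{-4}$; the fluctuation bound on $\sum_i\xi_i\Psi^{(i)}_{ab}$ treats the Haar-coordinate products $\xi_i=p_iq_i-\tfrac1n$ as if they were independent, whereas they are constrained ($\sum_i p_iq_i=\langle\bone,\coord_k\rangle$ is deterministic) and must be controlled uniformly over all $n^2$ pairs $(a,b)$; and the higher-order terms need entrywise (not operator-norm) concentration with tails that deteriorate in the degree $m$, so the series must be truncated and its tail controlled separately. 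None of these is fatal, but each requires an argument, and together they make this route substantially heavier than the one actually used.
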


\begin{proof}[Proof of Theorem \ref{thm:wigner}]
Assuming these lemmas, for some $c,c'>0$ sufficiently small,
setting $\eta<c/\log n$ and $\sigma<c'\eta$ ensures that the
right sides of both (\ref{eq:offdiag}) and (\ref{eq:noise}) are at most
$1/(12 \eta^2)$. Then when $\pi^*=\id$, these lemmas combine to imply
\[\min_{i \in [n]} X_{ii}>
\frac{1}{4\eta^2}>\frac{1}{6\eta^2}>\max_{i,j \in [n]:i \neq j} X_{ij}\]
with probability at least $1-n^{-4}$. On this event, by definition,
both the LAP rounding procedure (\ref{eq:linearassignment}) and the simple greedy rounding \prettyref{eq:greedyround} output  $\hat{\pi}=\id$.
The result for general $\pi^*$ follows from the equivariance of the algorithm, 
applying this result to the inputs $A$ and $B^\pi$ with $\pi=(\pi^*)^{-1}$.
\end{proof}

A large portion of the technical work will lie in establishing
Lemma \ref{lem:noiseless} for the noiseless setting.
We give here a short sketch of the intuition for this lemma,
ignoring momentarily any factors that are logarithmic in $n$ and are hidden by the notations $\approx$ and $\lessapprox$ below.
Let us write
\begin{align}
X_* = \sum_{i=1}^n \frac{ 1 }{ \ridge } (u_i^\top \bJ u_i) u_i  u_i^\top + \sum_{i \ne j} \frac{ 1 }{ \ridge + ( \lambda_i - \lambda_j )^2 } (u_i^\top \bJ u_j) u_i u_j^\top . 
\label{eq:spec-noiseless}
\end{align}
We explain why the first term is diagonally dominant, while the second term is
a perturbation of smaller order. Central to our proof is the fact that $A \sim
\GOE$ is rotationally invariant in law, so that $U=(u_1,\ldots,u_n)$ is 
uniformly distributed on the orthogonal group and
independent of $\lambda_1,\ldots,\lambda_n$. The
coordinates of $U$ are approximately independent with
distribution $\sN(0,\frac{1}{n})$.

For the first term in \prettyref{eq:spec-noiseless}, with high probability $u_i^\top \bJ u_i = \Iprod{u_i}{\ones}^2 \approx 1$ for
every $i$. Then for each $k$, the $k^\text{th}$ diagonal entry of the first term satisfies
\begin{equation}
\sum_{i=1}^n \frac{ 1 }{ \ridge } (u_i^\top \bJ u_i) (u_i)_k^2
\approx \sum_{i=1}^n \frac{ 1 }{ \ridge } (u_i)_k^2 \approx \frac{1}{\ridge}.
\label{eq:spec-noiseless1diag}
\end{equation}
Applying the heuristic $(u_i)_k \sim \sN(0,\frac{1}{n})$,
each $(k,\ell)^\text{th}$ off-diagonal entry satisfies
\begin{equation}
\sum_{i=1}^n \frac{ 1 }{ \ridge } (u_i^\top \bJ u_i) (u_i)_k (u_i)_\ell
\lessapprox \frac{1}{\ridge \sqrt{n} } . 
\label{eq:spec-noiseless1off}
\end{equation}

For the second term in \prettyref{eq:spec-noiseless}, each $(k,\ell)^\text{th}$ entry is
\[
\sum_{i \ne j} \frac{ 1 }{ \ridge + ( \lambda_i - \lambda_j )^2 } (u_i^\top
\bJ u_j) (u_i)_k (u_j)_\ell =  g^\top Q h , 
\]
where $Q$ is defined by $Q_{ii} = 0$ and $Q_{ij} = \frac{ 1 }{
    \ridge + ( \lambda_i - \lambda_j )^2 } (u_i^\top \bJ u_j) $ for $i \ne j$,
and the vectors $g$ and $h$ are defined by $g_i = (u_i)_k$ and $h_j =
(u_j)_\ell$. Applying the heuristic that $g,h$ are approximately iid $\sN(0, \frac 1n \bI)$ and approximately independent of $Q$, we have
a Hanson-Wright type bound
$$
g^\top Q h \lessapprox \frac{1}{ n } \| Q \|_F . 
$$
As $n \to \infty$, the empirical spectral distribution
$n^{-1}\sum_{i=1}^n \delta_{\lambda_i}$ of $A$
converges to the Wigner semicircle law with density $\rho$.
Then, applying also $u_i^\top \bJ u_j = \Iprod{u_i}{\ones} \Iprod{u_j}{\ones}   \lessapprox 1$, we obtain
$$
\frac{1}{ n^2 } \| Q \|_F^2 \lessapprox \frac{1}{ n^2 } \sum_{i \ne j} \Big( \frac{ 1 }{ \ridge + ( \lambda_i - \lambda_j )^2 } \Big)^2
\approx \iint \Big( \frac{ 1 }{ \ridge + ( x - y )^2 } \Big)^2 \rho(x) \rho(y)
dx dy \lessapprox \frac{1}{ \eta^3 },
$$
where the last step is an elementary computation that holds for any bounded
density $\rho$ with bounded support. As a result, each entry of the second term of
(\ref{eq:spec-noiseless}) satisfies
\begin{equation}
\sum_{i \ne j} \frac{ 1 }{ \ridge + ( \lambda_i - \lambda_j )^2 } (u_i^\top \bJ
u_j) (u_i)_k (u_j)_\ell \lessapprox \frac{1}{ \eta^{3/2} }.
\label{eq:spec-noiseless2}
\end{equation}
Combining \prettyref{eq:spec-noiseless1diag}--\prettyref{eq:spec-noiseless2} shows that the noiseless solution $X_*$ in \eqref{eq:spec-noiseless} is indeed diagonally
dominant, with diagonals approximately $\eta^{-2}$ and off-diagonals at most of the order $\eta^{-3/2}$, omitting logarithmic factors. We carry out
this proof more rigorously in Section~\ref{sec:noiseless}
to establish Lemma \ref{lem:noiseless}.

\subsection{Gaussian bipartite model}\label{sec:bipartite}

Consider the following asymmetric variant of this problem:
Let $F,G \in \R^{n \times m}$ be adjacency matrices of two weighted
bipartite graphs on $n$ left vertices and $m$ right vertices, where $m \geq n$ is assumed without loss of generality.
Suppose, for latent permutations
$\pi_1^*$ on $[n]$ and $\pi_2^*$ on $[m]$, that
$\{(F_{ij},G_{\pi_1^*(i),\pi_2^*(j)}):1 \leq i \leq n, 1 \leq j \leq m\}$
are \iid pairs of correlated random variables. We wish to recover
$(\pi_1^*,\pi_2^*)$ from $F$ and $G$.

We propose to apply \prettyref{alg:grampa} on the left singular
values and singular vectors of $F$ and $G$ to first recover the (smaller) row permutation $\pi_1^*$, and
then solve a second LAP to recover the (bigger) column permutation $\pi_2^*$. This is summarized as follows:

%
%
%

\begin{algorithm}
\caption{Bi-GRAMPA (Bipartite GRAph Matching by Pairwise eigen-Alignments)}\label{alg:bigrampa}
\begin{algorithmic}[1]
\STATE {\bfseries Input:} $F,G \in \reals^{n\times m}$, a tuning parameter $\eta>0$.
\STATE {\bfseries Output:} Row permutation $\hat\pi_1 \in \cS_n$ and column permutation $\hat\pi_2 \in \cS_m$.
\STATE  Construct the similarity matrix $\widehat{X}$ as in
(\ref{eq:spectralnew}), where now $\lambda_1 \geq \ldots \geq \lambda_n$ and
$\mu_1 \geq \ldots \geq \mu_n$ are the singular values of $F$ and $G$, and
$u_i$ and $v_j$ are the corresponding left singular vectors.

\STATE Let $\hat{\pi}_1$ be the estimate in (\ref{eq:linearassignment}),
and denote by $G^{\hat{\pi}_1, \id} \in \R^{n \times m}$ the matrix
$G^{\hat{\pi}_1, \id}_{ij}=G_{\hat{\pi}_1(i),j}$.

\STATE Find $\hat{\pi}_2$ by solving the linear assignment problem
\begin{align}
\hat{\pi}_2=\argmax_{\pi \in \cS_m} \sum_{j=1}^m (F^\top
G^{\hat{\pi}_1,\id})_{j,\pi(j)}.
\label{eq:lin-step}
\end{align}

\end{algorithmic}
\end{algorithm}

We also establish an exact recovery guarantee for this method in a Gaussian
setting: We say that the pair $F,G \in \R^{\dimone \times \dimtwo}$ follows the
\emph{Gaussian bipartite model} for graph matching if
\begin{align}
    G^{\pi_1^*,\pi_2^*}
    = F + \sigma W , \label{eq:model-wishart}
\end{align}
where $G^{\pi_1^*,\pi_2^*}$ denotes the permuted matrix
$G^{\pi_1^*,\pi_2^*}_{ij}=G_{\pi_1^*(i),\pi_2^*(j)}$,
the matrices $F$ and $W$ are independent with \iid~$\sN(0, \frac{1}{\dimtwo})$
entries, and $\sigma \ge 0$ is the noise level.
We assume the asymptotic regime
\begin{equation}
\dimtwo = \dimtwo (\dimone )
\quad \text{ and } \quad
\dimone/ \dimtwo \to \kappa \in (0,1] \quad \text{ as } \quad
n \to \infty.
\label{eq:bipartite-ass}
\end{equation}

\begin{theorem} \label{thm:bipartite}
Consider the model~\eqref{eq:model-wishart}, where
$\dimone/ \dimtwo \to \kappa \in (0, 1]$. There exist $\kappa$-dependent
constants $c,c'>0$ such that if
\[1/n^{0.1} \leq \eta \leq c/\log n \quad \text{ and } \quad
    \sigma \log(1/\sigma) \leq c'\eta/\log n,\]
then with probability at least $1-n^{-4}$ for all large $n$,
\prettyref{alg:bigrampa} recovers $(\hat{\pi}_1, \hat{\pi}_2)=(\pi_1^*,\pi_2^*)$.
\end{theorem}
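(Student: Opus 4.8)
The plan is to reduce \prettyref{thm:bipartite} to the analysis already developed for \prettyref{thm:wigner}, replacing the Gaussian Wigner matrices by Wishart-type matrices, and to dispatch the final linear-assignment step \prettyref{eq:lin-step} by a direct and much easier argument. First I would use equivariance: permuting the rows of $G$ by a permutation matrix $\Pi_1$ permutes the columns of $\widehat{X}$ by $\Pi_1$ (since $\bJ\Pi_1=\bJ$) and leaves its singular values unchanged, while permuting the columns of $G$ does not affect $\widehat{X}$ at all (it only changes the right singular vectors of $G$); both \prettyref{eq:linearassignment} and \prettyref{eq:lin-step} transform correspondingly. Hence we may assume $\pi_1^*=\pi_2^*=\id$, so that $G=F+\sigma W$ with $F,W$ independent and both having i.i.d.\ $\sN(0,1/m)$ entries.

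The key point is that the Bi-GRAMPA similarity matrix is exactly the GRAMPA similarity matrix \prettyref{eq:spectralnew} applied to the positive semidefinite square roots $\big(\sqrt{FF^\top},\sqrt{GG^\top}\big)$, whose eigenvalues and eigenvectors are precisely the singular values and left singular vectors of $F$ (resp.\ $G$). Since $F$ and $G=F+\sigma W$ are both invariant in law under left multiplication by orthogonal matrices, the eigenvector matrices of $FF^\top$ and of $GG^\top$ are Haar-distributed on $O(n)$ and independent of the corresponding eigenvalues, exactly as $U,V$ were for the GOE in \prettyref{sec:proofideas}. Moreover the empirical distributions of $\{\lambda_i^2\}$ and $\{\mu_j^2\}$ converge to Marchenko--Pastur laws with ratio $\kappa$, so the empirical distributions of the singular values converge to bounded densities supported on the compact interval $[1-\sqrt\kappa,\,1+\sqrt\kappa]$ (the quarter-circle law when $\kappa=1$), and with high probability the extreme singular values are $O(1)$ and the empirical singular value distributions are uniformly close to their limits. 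These are precisely the structural facts used in the proof of \prettyref{lem:noiseless} sketched in \prettyref{sec:proofideas} --- Haar eigenvectors independent of the eigenvalues, an ESD converging to a bounded compactly supported density $\rho$, the elementary estimate $\iint (\eta^2+(x-y)^2)^{-2}\rho(x)\rho(y)\,dx\,dy\lesssim\eta^{-3}$, and Gaussian/Haar concentration together with a Hanson--Wright bound --- so \prettyref{lem:noiseless} carries over to $X_*=\widehat{X}\big(\sqrt{FF^\top},\sqrt{FF^\top}\big)$, giving $\min_i (X_*)_{ii}>1/(3\eta^2)$ and $\max_{i\ne j}(X_*)_{ij}\lesssim\sqrt{\log n}\,\eta^{-3/2}+(\log n)\,\eta^{-1}$.

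Next I would prove the bipartite analogue of \prettyref{lem:noise}: that $\big\|\widehat{X}\big(\sqrt{FF^\top},\sqrt{GG^\top}\big)-X_*\big\|_\infty$ is small. The noise now enters multiplicatively, $GG^\top=FF^\top+\sigma\big(FW^\top+WF^\top\big)+\sigma^2 WW^\top=FF^\top+\sigma E$ with $\|E\|=O(1)$ with high probability, so by Weyl's inequality $|\lambda_i^2-\mu_i^2|=O(\sigma)$; for the pairs $(i,j)$ that actually matter --- those with at least one of $\lambda_i,\mu_j$ of order $\gtrsim\eta$, where $\lambda_i+\mu_j\gtrsim\eta$ --- this translates to $|\lambda_i-\mu_j|=O(\sigma/\eta)$, while for pairs with both singular values $\ll\eta$ the Cauchy weight is $\approx 1/\eta^2$ in both similarity matrices and the contributions nearly cancel. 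With these inputs I would re-run the resolvent/interpolation argument underlying \prettyref{lem:noise}, representing the Cauchy kernel through the resolvents of $\sqrt{FF^\top}$ and $\sqrt{GG^\top}$ and tracking their change under the perturbation; carefully accounting for the behaviour near the scale $\sigma$ (a dyadic sum over $O(\log(1/\sigma))$ scales) produces $\|\widehat{X}-X_*\|_\infty\lesssim\sigma\log(1/\sigma)\cdot\polylog(n)\cdot\eta^{-3}$. Combined with the noiseless bound and the hypothesis $\sigma\log(1/\sigma)\le c'\eta/\log n$, the right-hand side is at most $1/(12\eta^2)$, so exactly as in the proof of \prettyref{thm:wigner} the matrix $\widehat{X}$ is diagonally dominant and the LAP in Step~4 of \prettyref{alg:bigrampa} returns $\hat\pi_1=\id=\pi_1^*$ with probability $\ge 1-n^{-4}$.

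Finally, conditioning on $\hat\pi_1=\id$ so that $G^{\hat\pi_1,\id}=G$, Step~5 maximizes $\sum_j (F^\top G)_{j,\pi(j)}$ over $\pi\in\cS_m$, and $F^\top G=F^\top F+\sigma F^\top W$. The diagonal entries of the $m\times m$ matrix $F^\top F$ concentrate at $n/m\to\kappa>0$ with fluctuations $O(\sqrt{(\log m)/m})$, its off-diagonal entries are $O(\sqrt{(\log m)/m})$, and every entry of $\sigma F^\top W$ is $O(\sigma\sqrt{(\log m)/m})$, all with high probability (union bounds over $O(m^2)$ Gaussian bilinear forms); since $\kappa$ is a constant and $\sigma=O(1)$, for $m$ large every diagonal entry of $F^\top G$ strictly exceeds every off-diagonal entry, so \prettyref{eq:lin-step} returns $\hat\pi_2=\id=\pi_2^*$, and a union bound over the two events finishes the proof. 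The main obstacle is the noise step: because the perturbation enters $GG^\top$ multiplicatively rather than as an independent additive GOE term, the joint law of the eigenstructures of $\sqrt{FF^\top}$ and $\sqrt{GG^\top}$ is less transparent than in the Wigner model, and carefully controlling the resolvent perturbation --- including the behaviour of the singular values near the origin --- is what yields the slightly lossier threshold $\sigma\log(1/\sigma)\lesssim\eta/\log n$; by contrast the second linear-assignment step is essentially routine, succeeding for any constant $\kappa$ and $\sigma=O(1)$.
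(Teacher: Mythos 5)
Your reduction to $\pi_1^*=\pi_2^*=\id$ and your treatment of the noiseless matrix $X_*=\widehat{X}\big(\sqrt{FF^\top},\sqrt{FF^\top}\big)$ are exactly the paper's: one only needs the three properties of Proposition \ref{prop:wignerproperties} (Haar eigenvectors independent of eigenvalues, an ESD converging to a bounded compactly supported density, $\|A\|=O(1)$), all of which hold for $\sqrt{FF^\top}$ with $\kappa$-dependent constants, so Lemma \ref{lem:noiseless} transfers verbatim. Your argument for recovering $\pi_2^*$ is correct but genuinely different from the paper's: you show entrywise diagonal dominance of $F^\top G$ directly from concentration of the $O(m^2)$ Gaussian bilinear forms, which suffices because under the theorem's hypotheses $n\asymp m$ and $\sigma=o(1)$. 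The paper instead proves the sharper standalone Theorem \ref{thm:linear} via a union bound over permutations and a $\chi^2$-MGF computation involving the singular values of $\identity-\Pi$; that version succeeds already when $n\gtrsim \log m/\log(1+\sigma^{-2})$, i.e.\ for constant $\sigma$ and $n$ merely logarithmic in $m$, which your elementary route does not give. For the theorem as stated, your simpler argument is enough.

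The genuine gap is in the noise step. Saying you will ``re-run the resolvent/interpolation argument underlying Lemma \ref{lem:noise}'' with ``a dyadic sum over $O(\log(1/\sigma))$ scales'' is not a proof, and the Wigner argument does not transfer as written: the proof of Lemma \ref{lem:Hbounds} used the explicit factorization $-H=(\cL_*-\im\eta\bI)^{-1}(\sigma\tZ\otimes\bI)(\cL-\im\eta\bI)^{-1}$ with $\tZ=U^\top ZU\sim\GOE$ independent of $\Lambda$, and there is no analogous independent additive Gaussian perturbation of $\sqrt{FF^\top}$ here. Your Weyl-inequality observation $|\lambda_i^2-\mu_i^2|=O(\sigma)$ concerns same-index eigenvalue perturbation and does not control the entrywise difference $\widehat{X}-X_*$; what is actually needed is an operator-norm bound on $B-A=\sqrt{GG^\top}-\sqrt{FF^\top}$, which controls $\|H\|\le\|\cL-\cL_*\|/\eta^2=\|A-B\|/\eta^2$, together with the crude estimate $\|H\|_F\le n\|H\|$. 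The paper obtains $\|A-B\|\le\frac{2}{\pi}\|F-G\|\big(2+\log\frac{\|F\|+\|G\|}{\|F-G\|}\big)\lesssim\sigma\log(1/\sigma)$ from Kato's log-Lipschitz bound for the matrix absolute value map; this is the concrete source of the $\log(1/\sigma)$ in the hypothesis, and it is the tool your sketch is missing (your intuition that the loss comes from the square root's behaviour near the origin is correct in spirit, but you never produce the inequality). You also need to re-verify that $(\phi,\psi,\xi)$ remains independent of $H$ in the Wishart setting; the paper does this by writing $U^\top BU=\sqrt{(\Lambda V^\top+\sigma U^\top W)(\Lambda V^\top+\sigma U^\top W)^\top}$ and noting it is a function of $(\Lambda,V,U^\top W)$ only, hence independent of $U$. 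With these two ingredients, Lemma \ref{lem:noise-conc} applies and yields $|X_{k\ell}-(X_*)_{k\ell}|\lesssim\sigma\log(1/\sigma)\log n/\eta^3$, which matches the final bound you asserted.
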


\noindent Setting $\eta \asymp 1/\log n$, we obtain exact recovery under the
condition $\sigma \lesssim (\log n)^{-2}(\log\log n)^{-1}$.

The proof is an extension of that of Theorem \ref{thm:wigner}:
Note that the first step of \prettyref{alg:bigrampa} is equivalent to applying \prettyref{alg:grampa} on
the symmetric polar parts $A=\sqrt{FF^\top}$ and $B=\sqrt{GG^\top}$, where
$\sqrt{\cdot}$ denotes the symmetric matrix square root. In the Gaussian
setting, $A$ is still rotationally invariant, and Lemma \ref{lem:noiseless} will
extend directly to $X_*$ constructed from this $A$. We will show a simpler and
slightly weaker version of Lemma \ref{lem:noise} to establish exact recovery
of the left permutation $\pi_1^*$, under the stronger requirement for $\sigma$
above. We will then analyze separately the linear assignment for
recovering $\pi_2^*$. 
Details of the argument are provided in Section
\ref{sec:bipartiteproof}.

We conclude this section by discussing the assumption
\prettyref{eq:bipartite-ass}. The condition
$n\to\infty$ is information-theoretically
necessary to recover the right permutation $\pi_2^*$, unless $\sigma$ is as
small as $1/\poly(n)$. This can be seen by considering the oracle setting when $\pi_1^*$ is given, in which case the necessary and sufficient condition for the maximal likelihood (linear assignment) to succeed is given by $n \log \left( 1 + \frac{1}{\sigma^2} \right)- 4 \log m \to \infty$ \cite{dai2019database}.
The condition of finite aspect ratio $n=\Theta(m)$ is assumed for the analysis
of the Bi-GRAMPA algorithm; otherwise, if $n=o(m)$, then the empirical
distribution of singular values of $F$ converges to a point mass at $1$, and it is unclear whether the  spectral similarity matrix in \prettyref{eq:X} or \prettyref{eq:Xstar} continues to be diagonally dominant.
We note that such a condition is not information-theoretically necessary. In fact, as long as 
$n$ and $m$ are polynomially related, running the degree profile matching algorithm \cite[Section 2]{DMWX18} on the row-wise and column-wise empirical distributions succeeds for $\sigma = O(\frac{1}{\log n})$.

\section{Proofs}\label{sec:proofs}

We prove our main results in this section. Section
\ref{sec:noiseless} proves Lemma \ref{lem:noiseless}, which shows the diagonal
dominance of $X_*$
in the noiseless setting of $A=B$. Section \ref{sec:noise} proves Lemma
\ref{lem:noise}, which bounds the difference $X-X_*$.
Together with the argument in Section \ref{sec:proofideas},
these yield Theorem \ref{thm:wigner} on the exact recovery in the
Gaussian Wigner model.

Section
\ref{sec:bipartiteproof} extends this analysis to establish Theorem
\ref{thm:bipartite} for the bipartite model. Finally, Section \ref{sec:gd}
(which may be read independently)
proves Corollary \ref{cor:sol} relating $\widehat{X}$ to the gradient descent
dynamics (\ref{eq:gd}) and the optimization problems (\ref{eq:x-est}) and
(\ref{eq:constrained}).

\subsection{Analysis in the noiseless setting}\label{sec:noiseless}
We first prove Lemma \ref{lem:noiseless}, showing diagonal dominance in the
noiseless setting. Throughout, we write the spectral decomposition 
\begin{align}
A = U \Lambda U^\top
\quad \text{ where } \quad U = [u_1 \cdots u_n] \text{ and } \Lambda = \mathsf{diag} ( \lambda_1, \dots, \lambda_n ) . \label{eq:a-dec}
\end{align}

\subsubsection{Properties of $A$ and rotation by $U$}

In the proof, we will in fact only use the properties of the matrix
$A \sim \GOE$ recorded in the following proposition. The same proof will then apply to the
bipartite case wherein the suitably defined $A$ satisfies the same properties.

\begin{proposition}\label{prop:wignerproperties}
Suppose $A \sim \GOE$. Then for constants $C,c>0$,

\begin{enumerate}[(a)]
\item 
    $U$ is a uniform random orthogonal matrix independent of $\Lambda$.

\item The empirical spectral distribution $\rho_n = \frac 1n \sum_{i=1}^n
\delta_{\lambda_i}$ of $A$ converges to a limiting law $\rho$, which has 
a density function bounded by $C$ and support contained in
$[-C, C]$. Moreover, for all large $n$,
$$
\p \Big\{\sup_x |F_n(x) - F(x)|>Cn^{-1}(\log n)^5 \Big\}<n^{-c\log\log n} ,
$$
where $F_n$ and $F$ are the cumulative distribution functions of $\rho_n$ and
$\rho$ respectively. 

\item For all large $n$, $\p \big\{\|A\|>C\}<e^{-cn}$.

\end{enumerate}
\end{proposition}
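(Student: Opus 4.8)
\textbf{Plan of proof for Proposition~\ref{prop:wignerproperties}.}
The plan is to verify each of the three claims separately, each by invoking a standard result from random matrix theory about the Gaussian Orthogonal Ensemble. Part~(a) is the structural input that drives the entire noiseless analysis, so I want to state it cleanly. Write $A = U\Lambda U^\top$ with $\Lambda = \diag(\lambda_1,\dots,\lambda_n)$ and $U$ orthogonal. The law of $A \sim \GOE$ is invariant under the conjugation $A \mapsto O A O^\top$ for any fixed orthogonal $O$, since the entries are independent Gaussians with a variance profile preserved by orthogonal conjugation (this is the classical defining property of the GOE). From this invariance one gets, by the standard argument (condition on $\Lambda$; the conditional law of $U$ is then invariant under left multiplication by any fixed $O$, hence Haar; and the joint law factorizes), that $U$ is Haar-distributed on the orthogonal group $O(n)$ and independent of $\Lambda$. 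I should note the minor caveat that $U$ is only defined up to signs/ordering of columns and up to the (measure-zero) event of repeated eigenvalues, which does not affect anything. This part is essentially a citation plus two lines.

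For part~(b), the limiting law $\rho$ is the semicircle law on $[-2,2]$ with density $\frac{1}{2\pi}\sqrt{(4-x^2)_+}$, which is bounded by $\frac 1\pi$ and compactly supported — so the ``bounded density, bounded support'' assertions hold with $C = 2$ (say). The quantitative Kolmogorov-distance bound $\sup_x |F_n(x) - F(x)| \lesssim n^{-1}(\log n)^5$ with probability $1 - n^{-c\log\log n}$ is the non-routine ingredient: I would cite an optimal-rate convergence result for the GOE empirical spectral distribution. The natural reference is the work on rate of convergence to the semicircle law (Götze--Tikhomirov, Bai, and for the near-optimal polylog form, the rigidity estimates of Erd\H{o}s--Yau--Yin, which give $|\lambda_k - \gamma_k| \lesssim n^{-2/3+\eps}$ for bulk eigenvalues, $\gamma_k$ the classical location, and hence a Kolmogorov distance of order $n^{-1}\polylog n$ with overwhelming probability). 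Translating an eigenvalue rigidity statement into the stated cdf bound is a short deterministic step: if every $\lambda_k$ lies within $\delta_k$ of its classical location $\gamma_k$, then $\sup_x |F_n(x)-F(x)| \le \frac 1n + \sup_k \rho(\xi) \delta_k$ near each point; uniformizing the $\delta_k$ to the worst-case bulk rate and handling the edge separately gives the claim. The precise exponent $5$ on the log and the $n^{-c\log\log n}$ failure probability are chosen to be comfortably implied by whichever rigidity bound one cites, so there is slack here.

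For part~(c), the operator-norm bound $\p\{\|A\| > C\} < e^{-cn}$ for some $C>0$ (e.g.\ any $C>2$, or even $C=3$) is a standard large-deviation estimate for the largest and smallest eigenvalues of the GOE: $\lambda_1 \to 2$ and $\lambda_n \to -2$ almost surely, and the upper-tail large deviations are at speed $n$ (the rate function for $\lambda_1 > 2+t$ being positive). One can cite the classical results of Ben Arous--Guionnet or simply use a net argument on the sphere together with Gaussian concentration of $\langle Av, v\rangle$. I'd phrase it as: $\|A\| = \max(\lambda_1, -\lambda_n)$, and apply the known large-deviation upper bound for each, or an $\eps$-net plus union bound giving $\p\{\|A\| > 3\} \le \p\{\|A\|_F^2 > 9n/4\} \vee (\text{net bound}) \le e^{-cn}$.

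\textbf{Main obstacle.} The only genuinely non-elementary step is the quantitative cdf bound in part~(b); parts~(a) and (c), and the qualitative content of~(b), are textbook. The work there is to locate a rigidity/convergence-rate result in the literature whose conclusion implies $n^{-1}(\log n)^5$-closeness in Kolmogorov distance with the stated probability, and to carry out the short deduction from eigenvalue-location control to cdf control, being slightly careful near the spectral edge where the semicircle density vanishes (there the local eigenvalue spacing is $n^{-2/3}$ rather than $n^{-1}$, but since the density is also small there the contribution to $\sup_x|F_n - F|$ is still $O(n^{-1}\polylog n)$, or one can simply absorb the $O(n^{-2/3})$ edge fluctuation into the bound as it is much smaller than needed after noting the cdf gap there is correspondingly tiny — in any case the edge is a lower-order effect). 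I would present this deduction in a couple of lines and cite the rigidity estimate as a black box.
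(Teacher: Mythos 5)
Your proposal is correct and follows essentially the same route as the paper, which proves all three parts by citation: (a) and (c) from standard GOE facts (Anderson--Guionnet--Zeitouni, Corollary 2.5.4 and Lemma 2.6.7) and (b) from the rate-of-convergence result of G\"otze--Tikhomirov, whose Theorem 1.1 directly yields the stated Kolmogorov bound with the $(\log n)^5$ factor and $n^{-c\log\log n}$ failure probability, so no detour through eigenvalue rigidity is needed. Your elaborations of the rotational-invariance argument for (a) and the net/large-deviation argument for (c) are fine but are exactly the content of the cited results.
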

\begin{proof}
    Parts (a) and (c) are well-known, see for example
    \cite[Corollary 2.5.4 and Lemma 2.6.7]{AGZbook}.
    For (b), $\rho$ is the Wigner
    semicircle law on $[-2,2]$, and the rate of convergence follows
    from \cite[Theorem 1.1]{GotTik13}. 
\end{proof}

Recall the definition
\[X_* = \sum_{i, j = 1}^n \frac{ 1 }{ \ridge + ( \lambda_i - \lambda_j )^2 } u_i
u_i^\top  \bJ u_j u_j^\top .\]
Our goal is to exhibit the diagonal dominance of this matrix. Without loss of
generality, we analyze $(X_*)_{11}=\coord_1^\top X_* \coord_1$
and $(X_*)_{12}=\coord_1^\top X_* \coord_2$.

Applying \prettyref{prop:wignerproperties} (a) above, let us rotate by $U$ to
write the quantities of interest in a more convenient form. Namely, we set 
\begin{align}
\phi = U^\top \coord_1, \quad \psi = U^\top \coord_2, \quad \text{and} \quad \xi
    = U^\top \bone. \label{eq:ppx}
\end{align} 
These vectors satisfy
\begin{align}
\| \phi \|_2 = \| \psi \|_2 = 1, \quad  \| \xi \|_2 = \sqrt{n}, \quad  \langle
\phi, \xi \rangle = 1, \quad  \langle \psi, \xi \rangle = 1 , \quad
\text{and} \quad \langle \phi, \psi \rangle = 0, \label{eq:def-3}
\end{align}
and are otherwise ``uniformly random''. By this, we mean that $(\phi,\psi,\xi)$
is equal in law to $(O\phi,O\psi,O\xi)$ for any
orthogonal matrix $O \in \R^{n \times n}$, which follows from Proposition
\ref{prop:wignerproperties}(a).

Define a symmetric matrix $L \in \R^{n \times n}$ by
\begin{align}
L_{ij} = \frac{ 1 }{ \ridge + ( \lambda_i - \lambda_j )^2 },\label{eq:l-def}
\end{align}
and define $\tL \in \R^{n \times n}$ such that
$\tL_{ii}=0$ and $\tL_{ij} = L_{ij}$ for $i \ne j$. Then
\begin{align}
(X_*)_{12}=\sum_{i,j=1}^n L_{ij} \phi_i\psi_j\xi_i\xi_j,\label{eq:summed}
\end{align}
and
\begin{align}
(X_*)_{11}
= \frac{ 1 }{ \ridge } 
\sum_{i = 1}^n \phi_i^2\xi_i^2
+ \sum_{i,j=1}^n \tL_{ij} \phi_i\phi_j\xi_i\xi_j. \label{eq:diag-summed}
\end{align}
Importantly, by Proposition \ref{prop:wignerproperties}(a), the triple
$(\phi,\psi,\xi)$ is independent of $L$ and $\tL$. We will establish the
following technical lemmas.

\begin{lemma} \label{lem:diag-dominate}
With probability at least $1- 3 n^{-7}$ for large $n$,
$$
\sum_{i=1}^n \phi_i^2 \xi_i^2>\frac {1}{2} .
$$
\end{lemma}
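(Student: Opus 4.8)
The plan is to exploit the rotational invariance of $(\phi,\xi)$ recorded in \prettyref{prop:wignerproperties}(a) together with the constraints $\|\phi\|_2 = 1$, $\|\xi\|_2^2 = n$, $\langle \phi,\xi\rangle = 1$ from \prettyref{eq:def-3}. First I would argue that, by rotational invariance, I may replace $(\phi,\xi)$ by a convenient explicit model: write $\xi = \sqrt{n}\,e$ for a random unit vector $e$, and decompose $\phi = \frac{1}{n}\xi + \phi^\perp = \frac{1}{\sqrt n} e + \phi^\perp$ where $\phi^\perp \perp e$ and $\|\phi^\perp\|_2^2 = 1 - \tfrac1n$. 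The quantity $\sum_i \phi_i^2 \xi_i^2 = n \sum_i \phi_i^2 e_i^2$ is then invariant under rotations fixing $e$, so conditionally on $e$ the vector $\phi^\perp$ is uniform on the unit sphere of $e^\perp$ (scaled), and I can generate it as $\phi^\perp = \sqrt{1-\tfrac1n}\, g^\perp/\|g^\perp\|$ with $g \sim \mathcal N(0,\frac1n \bI)$ and $g^\perp$ its projection onto $e^\perp$. Equivalently, and more cleanly, I would generate the whole picture from two independent standard Gaussian vectors and Gram–Schmidt.

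The key estimate is then a concentration bound. Expanding, $n\sum_i \phi_i^2 e_i^2 = n \sum_i \big(\tfrac{1}{\sqrt n} e_i + \phi^\perp_i\big)^2 e_i^2 = \sum_i e_i^4 + 2\sqrt n \sum_i e_i^3 \phi^\perp_i + n\sum_i \phi^\perp_i{}^2 e_i^2$. The dominant contribution is the last term: with $e$ a uniform unit vector, $\max_i e_i^2 \lesssim (\log n)/n$ with probability $1 - n^{-c\log n}$ (standard sphere-concentration / Gaussian maximal inequality), and $\sum_i \phi^\perp_i{}^2 e_i^2$ concentrates around its conditional mean. Conditionally on $e$, since $\phi^\perp$ is uniform on the unit sphere of the $(n-1)$-dimensional space $e^\perp$ scaled to norm $\sqrt{1-1/n}$, one has $\E[\phi^\perp_i{}^2 \mid e] = \frac{1 - 1/n}{n-1}(1 - e_i^2) = \frac1n(1 - e_i^2)$, so $\E\big[n\sum_i \phi^\perp_i{}^2 e_i^2 \,\big|\, e\big] = \sum_i e_i^2 - \sum_i e_i^4 = 1 - O((\log n)/n)$. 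For the fluctuation, I would condition on $e$ and apply a Hanson–Wright / quadratic-form concentration inequality for $\phi^\perp$ (which is a Lipschitz function of a Gaussian vector, so Gaussian concentration of the map $g \mapsto n\sum_i (P_{e^\perp} g)_i^2 e_i^2 / \|P_{e^\perp}g\|^2$ applies), giving deviations of order $\sqrt{(\log n)/n} \cdot \|\text{diag}(e_i^2)\|_F$-type, which is $o(1)$. The cross term $2\sqrt n \sum_i e_i^3 \phi^\perp_i$ is handled similarly: conditionally on $e$ it is a centered linear functional of $\phi^\perp$ with conditional variance $\asymp n \sum_i e_i^6 \lesssim n \cdot (\log n)^2/n^2 = (\log n)^2/n \to 0$, so by Gaussian concentration it is $o(1)$ except on an event of probability $n^{-c\log n}$. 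Putting the three pieces together, $\sum_i \phi_i^2\xi_i^2 = 1 - o(1) > 1/2$ with probability at least $1 - 3n^{-7}$ (the three bad events being: $e$ not well-spread, and the two concentration failures).

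**Main obstacle.** The conceptual steps are routine; the one place requiring care is handling the conditional distribution of $\phi^\perp$ given $e$ and $\xi$ simultaneously while respecting the \emph{two} linear constraints $\langle\phi,\xi\rangle = 1$ and $\langle\psi,\xi\rangle = 1$ plus $\langle\phi,\psi\rangle = 0$ — i.e., making sure the "uniform on a sphere in $e^\perp$" description is exactly correct and that the resulting Lipschitz-of-Gaussian representation is valid so that Gaussian concentration (with the sharp $n^{-c\log n}$-type tail matching the $3n^{-7}$ target) can be invoked cleanly. An alternative that sidesteps the explicit Gaussian construction is to note directly that $f(O) = \sum_i (O^\top \coord_1)_i^2 (O^\top\bone)_i^2$, as a function of the Haar-random orthogonal matrix $U = O$, is Lipschitz in the appropriate metric on $O(n)$ with constant $O(1/\sqrt n)$ after controlling $\|\xi\|_\infty$, so concentration on the orthogonal group (e.g.\ via the log-Sobolev inequality for Haar measure) gives the bound around $\E f = 1 - 1/n$ directly; I would likely present whichever of these two routes yields the cleanest constants. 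Either way the heart of the matter is the elementary identity $\E\sum_i \phi_i^2\xi_i^2 = 1 - \tfrac1n$ combined with an $O(1/\sqrt n)$-Lipschitz concentration statement.
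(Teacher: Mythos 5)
Your argument is correct and is essentially the mirror image of the paper's proof: the paper conditions on $\phi$ and represents $\xi = \alpha z + \beta\phi$ with a fresh Gaussian $z$, whereas you condition on $\xi$ and represent $\phi = \tfrac1n\xi + \phi^\perp$; in both cases the lemma reduces to a three-term expansion controlled by the $\ell_\infty$ bound $\max_i e_i^2 \lesssim (\log n)/n$ together with linear and quadratic (Hanson--Wright-type) Gaussian concentration, with the main term concentrating near $1$. One minor slip that does not affect the conclusion: since $\E[(\phi_i^\perp)^2\mid e]\approx 1/n$, the conditional variance of the cross term $2\sqrt{n}\sum_i e_i^3\phi_i^\perp$ is of order $\sum_i e_i^6 \lesssim (\log n)^2/n^2$ rather than $n\sum_i e_i^6$ --- both tend to zero, so the argument goes through.
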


\begin{lemma} \label{lem:sig-conc}
For some constants $C,c>0$, if $1/n^{0.1}<\eta<c$, then
with probability at least $1 - 2 n^{-7}$ for large $n$,
\begin{align}
\Big| \sum_{i, j = 1}^n L_{ij} \phi_i \psi_j \xi_i \xi_j \Big| \lor
\Big| \sum_{i, j = 1}^n \tL_{ij} \phi_i \phi_j \xi_i \xi_j \Big|<
    C\left(\frac{\sqrt{\log n} }{\eta^{3/2}} + \frac{ \log n}{ \eta}\right).
\end{align}
\end{lemma}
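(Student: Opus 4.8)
The plan is to condition first on the eigenvalues, which by Proposition~\ref{prop:wignerproperties}(a) freezes $L$ and $\tilde L$ while leaving $(\phi,\psi,\xi)$ a uniformly random triple as in \eqref{eq:def-3}, and then to absorb $\xi$ into the matrix so that each of the two quantities becomes a bilinear, respectively quadratic, form in the near-Gaussian vectors $\phi,\psi$. Writing $D_\xi=\diag{\xi}$,
\[
\sum_{i,j}L_{ij}\phi_i\psi_j\xi_i\xi_j=\phi^\top(D_\xi L D_\xi)\psi,
\qquad
\sum_{i,j}\tilde L_{ij}\phi_i\phi_j\xi_i\xi_j=\phi^\top(D_\xi\tilde L D_\xi)\phi ,
\]
so it suffices to prove a Hanson--Wright-type deviation bound for these forms, the two ingredients being (i) sharp control of $\|D_\xi L D_\xi\|_F$, $\|D_\xi \tilde L D_\xi\|_F$, $\|D_\xi L D_\xi\|_{op}$, $\|D_\xi\tilde L D_\xi\|_{op}$, and (ii) concentration of quadratic/bilinear forms in uniform spherical vectors.

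For (i), I would start from Proposition~\ref{prop:wignerproperties}(b): comparing the discrete sums $\sum_j f(\lambda_i,\lambda_j)$ to $n\iint f\,d\rho$ via the $O(n^{-1}(\log n)^5)$ Kolmogorov bound on $F_n-F$ (the error being governed by the total variation of $f$, which is $O(\eta^{-2})$ for a single Cauchy kernel and $O(\eta^{-4})$ for its square), one obtains with probability $1-n^{-c\log\log n}$ the deterministic estimates $\max_i\sum_j L_{ij}\lesssim n/\eta$ and $\|L\|_F^2=\sum_{i,j}L_{ij}^2\lesssim n^2/\eta^3$, together with $L_{ii}=\eta^{-2}$; the lower bound $\eta\ge n^{-0.1}$ is exactly what forces the integral terms to dominate the discretization error. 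Since $L$ is a Cauchy (hence positive-definite) kernel of the $\lambda_i$, the Schur test gives $\|L\|_{op}\le\max_i\sum_j L_{ij}\lesssim n/\eta$. On the event $\|\xi\|_\infty\lesssim\sqrt{\log n}$ (probability $\ge 1-n^{-7}$ as $\xi$ is uniform on the sphere of radius $\sqrt n$), one has $\|D_\xi L D_\xi\|_F^2=\sum_{i,j}\xi_i^2\xi_j^2 L_{ij}^2$, whose $\xi$-expectation is $\approx\|L\|_F^2$ and which concentrates, giving $\lesssim n^2/\eta^3$; likewise for $\tilde L$. The delicate point is the operator norm: the crude bound $\|D_\xi\tilde L D_\xi\|_{op}\le\|\xi\|_\infty^2\|\tilde L\|_{op}$ costs an extra $\log n$ that the lemma cannot afford. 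Instead I would use the \emph{weighted} Schur test with weight vector $|\xi|$: since $\tilde L\ge 0$ entrywise,
\[
\|D_\xi\tilde L D_\xi\|_{op}\le\max_i\frac{1}{|\xi_i|}\sum_j|\xi_i|\,|\xi_j|\,\tilde L_{ij}\,|\xi_j|=\max_i\sum_j\xi_j^2\,\tilde L_{ij},
\]
and $\sum_j\xi_j^2\tilde L_{ij}$ concentrates over $\xi$, uniformly in $i$ by a union bound, around $\sum_j\tilde L_{ij}\lesssim n/\eta$ — with no logarithmic factor, because now one factor of $|\xi_i|$ cancels and the remaining $\xi_j^2$ average out. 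The same applies to $D_\xi L D_\xi=D_\xi\tilde L D_\xi+\eta^{-2}D_\xi^2$, whose extra term has operator norm $\eta^{-2}\|\xi\|_\infty^2\lesssim\eta^{-2}\log n\lesssim n/\eta$ in our range of $\eta$.

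Conditioning now additionally on the component of $\xi$ orthogonal to $\phi,\psi$ (so that $\phi,\psi$ remain a uniform $2$-frame in that hyperplane, in particular near-independent and near-Gaussian with covariance $\approx\frac1n\bI$), a Hanson--Wright inequality for quadratic and bilinear forms in uniform spherical vectors yields, with probability $\ge 1-n^{-7}$,
\[
|\phi^\top M\psi|\lesssim\tfrac1n\big(\|M\|_F\sqrt{\log n}+\|M\|_{op}\log n\big),
\quad
\big|\phi^\top M'\phi-\tfrac1n\Tr M'\big|\lesssim\tfrac1n\big(\|M'\|_F\sqrt{\log n}+\|M'\|_{op}\log n\big),
\]
where $M=D_\xi L D_\xi$, $M'=D_\xi\tilde L D_\xi$, and crucially $\Tr M'=\sum_i\xi_i^2\tilde L_{ii}=0$. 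Plugging in $\|M\|_F,\|M'\|_F\lesssim n\eta^{-3/2}$ and $\|M\|_{op},\|M'\|_{op}\lesssim n/\eta$ gives precisely $C(\sqrt{\log n}\,\eta^{-3/2}+\log n\,\eta^{-1})$. The remaining work is bookkeeping for the fact that $\phi,\psi$ are not exact mean-zero Gaussians but satisfy $\langle\phi,\xi\rangle=\langle\psi,\xi\rangle=1$, $\langle\phi,\psi\rangle=0$, and that $\xi$ itself contains the small pieces $\phi+\psi$: expanding $D_\xi$ and decomposing $\phi=n^{-1}\xi+\phi_\perp$, $\psi=n^{-1}\xi+\psi_\perp$, the cross terms that appear are either of the form $(\text{const})\cdot n^{-2}\bone^\top L\bone\lesssim 1/\eta$ or linear forms in $\phi$ or $\psi$ that concentrate about zero at a scale $\ll\log n/\eta$, and are thus absorbed into the stated bound. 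A union bound over the $O(1)$ events (the semicircle estimate, $\|\xi\|_\infty\lesssim\sqrt{\log n}$, the weighted-Schur concentration over $i\in[n]$, and the Hanson--Wright events) gives the overall probability $1-2n^{-7}$.

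I expect the main obstacle to be the operator-norm control in step (i): recognizing that submultiplicativity loses a logarithmic factor which is fatal, and that the weighted Schur test with weight $|\xi|$ (equivalently, averaging $\xi_j^2$ against a row of $\tilde L$ rather than extracting $\|\xi\|_\infty^2$) restores the correct $n/\eta$ bound. A secondary but nontrivial point is making the reduction to a genuinely Gaussian-like bilinear form rigorous — porting Hanson--Wright to the spherical setting and dispatching all the lower-order cross terms that arise from the dependence between $\xi$ and $(\phi,\psi)$.
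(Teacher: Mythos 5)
Your proposal is correct and reaches the stated bound, but it organizes the computation dually to the paper, so a comparison is worthwhile. The paper absorbs the spherical vectors into the kernel, setting $K_{ij}=L_{ij}\phi_i\psi_j$ (and $\tK_{ij}=\tL_{ij}\phi_i\phi_j$), and treats each quantity as a quadratic form $\xi^\top K\xi$ in the single dependent vector $\xi$, Gaussian-approximated as $\tilde\xi=\alpha z+\beta_1\phi+\beta_2\psi$; the needed norm bounds $\|K\|_F\lesssim\eta^{-3/2}$ and $\|K\|\lesssim\eta^{-1}$ are Lemmas~\ref{lem:k-fro} and~\ref{lem:k-op}, the latter proved by a two-step Cauchy--Schwarz argument that is, in substance, your weighted Schur test in disguise: both reduce to the weighted row-sum estimate of Lemma~\ref{lem:l-lem} ($\sum_j\psi_j^2L_{ij}\lesssim 1/\eta$, i.e.\ your $\sum_j\xi_j^2L_{ij}\lesssim n/\eta$ after normalization), and you correctly identify that the naive bound $\|\xi\|_\infty^2\|\tL\|$ loses a fatal $\log n$. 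You instead absorb $\xi$ into the kernel via $\diag{\xi}\,L\,\diag{\xi}$ and treat the quantities as bilinear/quadratic forms in $(\phi,\psi)$, conditioning on $\xi$ and Gaussian-approximating $\phi,\psi$ in the hyperplane orthogonal to $\xi$. What each buys: your matrix stays symmetric with nonnegative entries, so the operator-norm control is a clean Schur test; the cost is Gaussian-approximating two dependent vectors rather than one, and your cross terms (the $n^{-2}(\xi^2)^\top L\,\xi^2\lesssim 1/\eta$ piece and the linear forms of size $\sqrt{\log n}/\eta$) play exactly the role of terms (ii)--(vi) in the paper's expansion \eqref{eq:nine-terms}. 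One small point to record when writing this up: in your bilinear form the diagonal of $L$ contributes $\eta^{-2}\sum_i\xi_i^2\phi_i\psi_i$, which, unlike the paper's $\Tr K=\eta^{-2}\langle\phi,\psi\rangle=0$, does not vanish exactly; it is nevertheless harmless because the bilinear Hanson--Wright bound for independent mean-zero vectors carries no trace term, and the diagonal of $\diag{\xi}\,L\,\diag{\xi}$ (Frobenius norm $\asymp\sqrt{n}\,\eta^{-2}$, operator norm $\lesssim\eta^{-2}\log n$) does not inflate the norm bounds $n\eta^{-3/2}$ and $n\eta^{-1}$ in the range $\eta\ge n^{-0.1}$. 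The structural inputs on $L$ that both proofs rest on (row sums $\lesssim n/\eta$, squared row sums $\lesssim n/\eta^{3}$, via the Kolmogorov-distance bound of Proposition~\ref{prop:wignerproperties}(b)) are identical to Lemma~\ref{lem:l-fro}.
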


Lemma \ref{lem:noiseless} follows immediately from these results.
Indeed, for $\eta<c/\log n$ and sufficiently small $c>0$,
these results and the forms (\ref{eq:summed}--\ref{eq:diag-summed})
combine to yield $(X_*)_{11}>1/(3\eta^2)$ and $(X_*)_{12}<C(
\sqrt{\log n}/\eta^{3/2}+(\log n)/\eta)$ with probability at least $1-5n^{-7}$.
By symmetry,
the same result holds for all $(X_*)_{ii}$ and $(X_*)_{ij}$, and Lemma
\ref{lem:noiseless} follows from taking a union bound over $i$ and $j$.

It remains to show Lemmas \ref{lem:diag-dominate} and \ref{lem:sig-conc}.
The general strategy is to approximate the law of $(\phi,\psi,\xi)$ by 
suitably defined Gaussian random vectors, and then to apply Gaussian tail bounds
and concentration inequalities which are collected in Appendix
\ref{appendix:gaussian}. As an intermediate step, we will show the
following estimates for the matrix $L$, using the convergence
of the empirical spectral distribution in Proposition
\ref{prop:wignerproperties}(b).

\begin{lemma} \label{lem:l-fro}
    For constants $C,c>0$, with probability at least $1-n^{-10}$ for large $n$,
    \[\min_{i, j \in [n]} L_{ij} \ge c, \quad  \max_{i,j \in [n]} L_{ij}
    \le \frac{1}{\ridge} , \quad \frac{1}{n}\|L\|_F \leq \frac{C}{\eta^{3/2}},\]
\[ 
\frac 1n \max_{i \in [n]} \sum_{j=1}^n L_{ij}^2 \le \frac{C}{\eta^3} 
\quad \text{ and } \quad
    \frac 1n \max_{i \in [n]} \sum_{j=1}^n L_{ij} \le \frac{C}{\eta}.\]
\end{lemma}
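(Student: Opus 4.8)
\textbf{Proof plan for Lemma~\ref{lem:l-fro}.} Since every one of the five estimates is a purely deterministic statement about the multiset of eigenvalues $\{\lambda_1,\dots,\lambda_n\}$, the plan is to isolate two high-probability events and then argue analytically on their intersection. Take $E_1=\{\|A\|\le C_0\}$ with the constant $C_0$ from Proposition~\ref{prop:wignerproperties}(c) and $E_2=\{\sup_x|F_n(x)-F(x)|\le C_1 n^{-1}(\log n)^5\}$ from Proposition~\ref{prop:wignerproperties}(b); by a union bound $\p(E_1\cap E_2)\ge 1-e^{-cn}-n^{-c\log\log n}\ge 1-n^{-10}$ for $n$ large, so it suffices to verify the five bounds deterministically on $E_1\cap E_2$.

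Two of the bounds are immediate: $\max_{i,j}L_{ij}\le\eta^{-2}$ holds always since $(\lambda_i-\lambda_j)^2\ge 0$, and on $E_1$ we have $|\lambda_i-\lambda_j|\le 2\|A\|\le 2C_0$, so $L_{ij}\ge(\eta^2+4C_0^2)^{-1}\ge(1+4C_0^2)^{-1}=:c$ in the regime $\eta\le 1$. The remaining three concern the row sums $\tfrac1n\sum_j L_{ij}$ and $\tfrac1n\sum_j L_{ij}^2$ (the Frobenius bound then follows by averaging, since $\tfrac1n\|L\|_F^2=\tfrac1n\sum_i\big(\tfrac1n\sum_j L_{ij}^2\big)\le\max_i\tfrac1n\sum_j L_{ij}^2$). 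For these, fix $i$, set $a=\lambda_i$, and for $p\in\{1,2\}$ let $f_p(y)=(\eta^2+(a-y)^2)^{-p}$, so that $\tfrac1n\sum_j L_{ij}^p=\int f_p\,dF_n$. I would compare this with $\int f_p\,d\rho$ via Riemann--Stieltjes integration by parts: $\int f_p\,dF_n-\int f_p\,dF=-\int(F_n-F)\,df_p$, so that
\[
\Big|\tfrac1n\textstyle\sum_j L_{ij}^p-\int f_p\,d\rho\Big|\le\|F_n-F\|_\infty\cdot\mathrm{TV}(f_p)=2\eta^{-2p}\,\|F_n-F\|_\infty,
\]
using that $f_p$ rises monotonically from $0$ to $\eta^{-2p}$ and falls back, so $\mathrm{TV}(f_p)=2\eta^{-2p}$. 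The main term is controlled by enlarging the domain and using the bounded density of $\rho$: $\int f_p\,d\rho\le C\int_\reals(\eta^2+(a-y)^2)^{-p}\,dy=C\kappa_p\,\eta^{-(2p-1)}$ with $\kappa_1=\pi$ and $\kappa_2=\pi/2$. On $E_2$ the error is $\le 2C_1(\log n)^5 n^{-1}\eta^{-2p}$, which is $o(\eta^{-(2p-1)})$ precisely because $\eta\ge n^{-0.1}$ forces $(\log n)^5 n^{-1}\eta^{-1}\le(\log n)^5 n^{-0.9}\to 0$. Taking $p=1$ and $p=2$ and maximizing over $i$ gives $\tfrac1n\max_i\sum_j L_{ij}\le C'/\eta$ and $\tfrac1n\max_i\sum_j L_{ij}^2\le C'/\eta^3$, and hence $\tfrac1n\|L\|_F\le\sqrt{C'}/\eta^{3/2}$.

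The computations are all elementary, so I do not expect a serious obstacle; the one place that needs care is the discrete-to-continuum passage, where the Kolmogorov-distance bound from Proposition~\ref{prop:wignerproperties}(b) is multiplied by the large total variation $2\eta^{-2p}$, and one must check that the hypothesis $\eta\ge n^{-0.1}$ is exactly what keeps this product negligible against the main term $\eta^{-(2p-1)}$. If one prefers to avoid the Stieltjes estimate, an equivalent route is to group the indices $j$ into dyadic shells $\{j:|\lambda_i-\lambda_j|\in[2^{k-1}\eta,2^k\eta)\}$, bound $L_{ij}$ crudely on each shell and count its elements using $E_2$ together with the bounded density of $\rho$, and sum the resulting geometric series over $k$.
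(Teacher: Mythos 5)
Your proposal is correct and follows essentially the same route as the paper: reduce everything to the row sums $\tfrac1n\sum_j L_{ij}^p$ for $p\in\{1,2\}$, pass from $\rho_n$ to the limiting law $\rho$ using the Kolmogorov-distance bound of Proposition~\ref{prop:wignerproperties}(b) (paying a factor $\eta^{-2p}$, harmless since $\eta\ge n^{-0.1}$), and bound the continuum integral by $C\eta^{1-2p}$ via the bounded density of $\rho$. The only cosmetic difference is that you control the discrete-to-continuum error by Stieltjes integration by parts against $\mathrm{TV}(f_p)=2\eta^{-2p}$, whereas the paper uses the layer-cake representation over the level sets of $g_\lambda$; both yield the identical bound $2\eta^{-2p}\,\|F_n-F\|_\infty$.
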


\subsubsection{Proof of Lemma~\ref{lem:diag-dominate}}

Let  $z$ be a standard Gaussian vector in $\R^n$ independent of $\phi$. 
First, we note that marginally $\phi$ is equal to $z/\|z\|_2$ in law.
 By standard bounds on $\max_j |z_j|$
and $\|z\|_2$ (see Lemmas~\ref{lem:g-concentrate} and \ref{lem:g-norm}), we have that with probability at least $1 - 2 n^{-7}$,
\begin{align}
\max_{i \in [n]} \big| \phi_i \big| \le 5 \left( \frac{\log n}{n} \right)^{1/2} .
\label{eq:sup-bd}
\end{align}

Next, the random vectors $\phi$ and $\xi$ satisfy that $\|\phi\|_2 = 1$, $\|\xi\|_2 = \sqrt{n}$ and $\langle \phi, \xi \rangle = 1$, and are otherwise uniformly random. Hence if we let $z$ be a standard Gaussian vector in $\R^n$ and define
$$
\tilde{\xi} \defn \sqrt{n-1} \, \frac{ z - (\phi^\top z) \phi }{ \big\| z -
(\phi^\top z) \phi \big\|_2}  + \phi,
$$
then $(\phi, \xi)\eqdistr(\phi, \tilde{\xi})$. Note that we can
write $\tilde{\xi} = \alpha z + \beta \phi$, where $\alpha$ and $\beta=1-\alpha (\phi^\top z)$ are
random variables satisfying $0.9 \le \alpha \le 1.1$ and $|\beta| \le 4
\sqrt{\log n}$ with probability at least $1 - 4 n^{-8}$ by concentration of $\|z\|_2$ and $\phi^\top z$ (Lemmas~\ref{lem:g-concentrate} and~\ref{lem:g-norm}).
Therefore, we obtain
\begin{align}
\sum_{i=1}^n \phi_i^2 \tilde{\xi}_i^2
= \sum_{i=1}^n \phi_i^2 \left( \alpha^2 z_i^2 + \beta^2 \phi_i^2 + 2 \alpha \beta z_i \phi_i \right)  
\ge 0.8 \sum_{i=1}^n \phi_i^2 z_i^2  - 9 \sqrt{\log n} \left| \sum_{i=1}^n \phi_i^3 z_i \right| . \label{eq:three-terms}
\end{align}
For the first term of~\eqref{eq:three-terms}, applying Lemma~\ref{lem:g-norm} and then~\eqref{eq:sup-bd} yields
$$
\sum_{i=1}^n \phi_i^2 z_i^2  \ge 1 - 22 (\log n) \left( \sum_{i=1}^n \phi_i^4 \right)^{1/2} \ge 1 - 22 (\log n) \left( \sum_{i=1}^n 5^4 \left(\frac{\log n}{n} \right)^2 \right)^{1/2} \ge 0.9
$$
with probability at least $1 - 3 n^{-7}$. 
For the second term of~\eqref{eq:three-terms}, we once again apply Lemma~\ref{lem:g-concentrate} and then~\eqref{eq:sup-bd} to obtain
$$
9 \sqrt{\log n} \Big| \sum_{i=1}^n \phi_i^3 z_i \Big| \le 20 \log n \left( \sum_{i=1}^n \phi_i^6 \right)^{\frac 12} \le 0.1 
$$
with probability at least $1 - 3 n^{-7}$. Combining the three terms finishes the proof.  

\subsubsection{Proof of Lemma \ref{lem:l-fro}}

Let $\rho_n$ be the empirical spectral distribution of $A$.
For a large enough constant $C_1>0$ where $[-C_1,C_1]$ contains the support of
$\rho$, let $\mathcal{E}$ be the event where it also contains the support of
$\rho_n$, and
\begin{align}
\sup_x | F_n(x) - F(x) |<n^{-0.5}.
\label{eq:conv-spd}
\end{align}
By Proposition \ref{prop:wignerproperties}, $\mathcal{E}$ holds with
probability at least $1-n^{-10}$.

The bound $L_{ij} \leq 1/\eta^2$ holds by the definition (\ref{eq:l-def}).
The bound $n^{-1}\|L\|_F \le C \eta^{-3/2}$ follows from summing
$n^{-1}\sum_j L_{ij}^2 \le C \eta^{-3}$ also over $i$ and taking a
square root. The bound $c \leq L_{ij}$ also holds on $\mathcal{E}$
by the definition of $L$. It remains to prove the last two bounds on the rows of $L$. 

For this, fix $a=1$ or $a=2$. 
For each $\lambda \in [-C_1, C_1]$, define a function 
$$
g_\lambda(r) \defn \left(\frac{ 1 }{ \ridge + (r - \lambda)^2 } \right)^a.
$$
Then for each $i$,
\begin{equation}\label{eq:Lija}
    \frac 1n \sum_{j=1}^n L_{ij}^a
=\frac 1n \sum_{j=1}^n g_{\lambda_i} ( \lambda_j )
=\int_{-C_1}^{C_1} g_{\lambda_i} ( r ) d \rho_n(r) .
\end{equation}
For some constants $C,C'>0$ and every $\lambda \in [-C_1,C_1]$,
replacing $\rho_n$ by the limiting density $\rho$, we have 
\begin{align}
\int_{-C_1}^{C_1} g_\lambda ( r ) d \rho(r) 
    &\le C \int_{-C_1}^{C_1} \left(\frac{ 1 }{ \ridge + (r - \lambda)^2
    }\right)^a d r \nonumber \\
    &\le C \left( \int_{|r - \lambda| \le \eta} \frac{1}{\eta^{2a}} d r +
    \int_{\eta \le |r - \lambda| \le 2C_1}  \frac{ 1  }{ (r - \lambda)^{2a} } d
    r \right) \le C' \eta^{1-2a} . \label{eq:limit-bd-1-row}
\end{align}

To bound the difference between $\rho_n$ and $\rho$,
note that $g_\lambda(r) \ge y$ for $y \ge 0$ if and only
if $|r - \lambda| \le b$ for some $b = b(y) \ge 0$. 
Consider random variables $R_n \sim \rho_n$ and $R \sim \rho$. Since
$g_\lambda \le \eta^{-2a}$, we have
\begin{align*}
    &\Big| \int_{- C_1}^{C_1} g_\lambda(r) d \rho_n(r) - \int_{- C_1 }^{C_1}
    g_\lambda(r) d \rho (r) \Big|\\
&= \Big| \int_0^{\eta^{-2a}} \left( \p \big\{ g_\lambda(R_n) \ge y \big\}  - \p
    \big\{ g_\lambda(R) \ge y \big\}  \right) dy \Big| \\
& \le  \int_0^{\eta^{-2a}} \Big| \p \big\{ |R_n-\lambda| \le b(y) \big\}  - \p \big\{
    |R-\lambda| \le b(y) \big\}  \Big| dy \\
& \le \int_0^{\eta^{-2a}} 2n^{-0.5} dy = 2\eta^{-2a} n^{-0.5},
\end{align*}
where the last inequality holds on the event $\mathcal{E}$ by \eqref{eq:conv-spd}.
Combining the last display with \eqref{eq:limit-bd-1-row}, we get that (\ref{eq:Lija}) is at most
$C\eta^{1-2a}$ for $\eta>n^{-0.1}$. This gives the desired bounds for $a=1$ and
$a=2$.

\subsubsection{Proof of Lemma~\ref{lem:sig-conc}}
We now use Lemma \ref{lem:l-fro} to prove Lemma \ref{lem:sig-conc}.
Recall $L$ defined in (\ref{eq:l-def}), and $\tL$ which sets its diagonal to 0.
We need to bound the quantities 
\begin{align}
\text{(I)} : 
\sum_{i, j = 1}^n L_{ij} \phi_i \psi_j \xi_i \xi_j
\quad \text{ and } \quad
\text{(II)} : 
\sum_{i, j = 1}^n \tL_{ij} \phi_i \phi_j \xi_i \xi_j  . \label{eq:two-terms}
\end{align}
The proof for (II) is almost the same as that for (I), so we focus on (I) and briefly discuss the differences for (II). 
Let us define a matrix $K \in \R^{n \times n}$ by setting 
\begin{align}
K_{ij} = L_{ij} \phi_i \psi_j .
\label{eq:def-k}
\end{align}

\paragraph{Estimates for $K$.}
We translate the estimates for $L$ in Lemma \ref{lem:l-fro} to ones for $K$.
Note that since $\phi$ and $\psi$ are independent of $L$ and uniform over the sphere with entries on the order of $\frac{1}{\sqrt{n}}$, it is reasonable to expect that
$\|K\|_F \lesssim \frac{1}{n} \|L\|_F$ and $\|K\| \lesssim \frac{1}{n} \|L\|$ with high probability; 
however, neither statement is true in general, as shown by the counterexamples $L=\coord_1\coord_1^\top$ and $L=\identity$.
Fortunately, both statements hold for $L$ defined by \prettyref{eq:l-def} thanks to the structural properties established in \prettyref{lem:l-fro}.

\begin{lemma} \label{lem:k-fro}
In the setting of \prettyref{lem:sig-conc}, for the matrix $K \in \R^{n \times n}$ defined by~\eqref{eq:def-k}, we have $\| K \|_F  \lesssim \frac{1}{\eta^{3/2}}$ with probability at least $1 - 2 n^{-8}$.
\end{lemma}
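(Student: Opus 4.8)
The plan is to control $\|K\|_F^2=\sum_{i,j=1}^n L_{ij}^2\phi_i^2\psi_j^2$ directly, using only the randomness of the orthonormal pair $(\phi,\psi)$, which by Proposition~\ref{prop:wignerproperties}(a) is uniformly distributed and independent of $L$ (note that $\xi$ plays no role here, so this is lighter than the full Lemma~\ref{lem:sig-conc}). The first—and most delicate—point is that the naive estimate $\|K\|_F\lesssim\frac1n\|L\|_F$ is \emph{false} in general, as the counterexamples $L=\coord_1\coord_1^\top$ and $L=\identity$ show; so instead of the Frobenius norm of $L$ I will rely on the sharper structural estimates of Lemma~\ref{lem:l-fro}, namely the row bound $\max_i\sum_j L_{ij}^2\le Cn/\eta^3$ together with the entrywise bound $L_{ij}\le\eta^{-2}$, which hold off an event of probability at most $n^{-10}$. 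I condition on this event and on $L$ throughout.

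Next I would condition additionally on $\phi$ and write $\|K\|_F^2=\psi^\top D\psi$, a quadratic form in $\psi$ with $D=\diag(m_1,\dots,m_n)$ and $m_j=\sum_i L_{ij}^2\phi_i^2$. Using $\|\phi\|_2=1$ and Lemma~\ref{lem:l-fro} one gets $\sum_j m_j=\sum_i\phi_i^2\big(\sum_j L_{ij}^2\big)\le Cn/\eta^3$, $\max_j m_j\le\eta^{-4}$, and hence $\sum_j m_j^2\le(\max_j m_j)(\sum_j m_j)\le Cn\eta^{-7}$; since $\eta\ge n^{-0.1}$, both $\eta^{-4}$ and $\sqrt{n\eta^{-7}}$ are $o\big(n\eta^{-3}/\log n\big)$. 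I then represent $\psi$, conditionally on $\phi$, as $\psi=(\identity-\phi\phi^\top)h\big/\big\|(\identity-\phi\phi^\top)h\big\|_2$ with $h\sim\sN(0,\identity_n)$ fresh, so that $\psi^\top D\psi=\big(h^\top\tilde D h\big)\big/\big\|(\identity-\phi\phi^\top)h\big\|_2^2$ where $\tilde D=(\identity-\phi\phi^\top)D(\identity-\phi\phi^\top)$ satisfies $\Tr\tilde D\le\sum_j m_j$, $\|\tilde D\|_F\le\|D\|_F$, and $\|\tilde D\|\le\|D\|$. Applying the Hanson--Wright inequality (Appendix~\ref{appendix:gaussian}) with deviation parameter of order $\log n$ yields $h^\top\tilde D h\le\Tr\tilde D+C\sqrt{\log n}\,\|\tilde D\|_F+C(\log n)\|\tilde D\|\le 2Cn/\eta^3$ for large $n$, with probability at least $1-2n^{-10}$; meanwhile $\big\|(\identity-\phi\phi^\top)h\big\|_2^2\ge n/2$ with probability at least $1-e^{-cn}$ by Lemma~\ref{lem:g-norm}, being $\chi^2$ with $n-1$ degrees of freedom. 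On the intersection of the three events this gives $\|K\|_F^2=\psi^\top D\psi\le 4C/\eta^3$, i.e.\ $\|K\|_F\lesssim\eta^{-3/2}$, and a union bound gives total failure probability at most $2n^{-8}$ after adjusting constants.

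I expect the main obstacle to be exactly the first step: recognizing that the quantity about $L$ that one can afford to use here is the row-norm control $\max_i\sum_j L_{ij}^2\lesssim n/\eta^3$ (not $\|L\|_F$), and that this is precisely what the structural part of Lemma~\ref{lem:l-fro} delivers. Once this replacement is in place, the conditional second-moment computation and the Hanson--Wright concentration of the resulting quadratic form are routine.
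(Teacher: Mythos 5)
Your proof is correct, and it takes a genuinely different route from the paper's. The paper decouples the two sphere vectors via $\phi_i^2\psi_j^2\le\tfrac12(\phi_i^4+\psi_j^4)$, reduces to the degree-$4$ Gaussian polynomial $\sum_i z_i^4\alpha_i$ with $\alpha_i=\sum_j L_{ij}^2$, and controls it by hypercontractivity (Theorem~\ref{thm:hyper}), arriving at the intermediate bound $\|K\|_F\lesssim\frac1n\|L\|_F+\frac{\log n}{n^{1/4}}\|L\|_\infty$ before invoking Lemma~\ref{lem:l-fro}. You instead condition on $\phi$, observe that $\|K\|_F^2=\psi^\top D\psi$ is a quadratic form in $\psi$ with diagonal entries $m_j=\sum_i L_{ij}^2\phi_i^2$, control $\Tr D$, $\|D\|_F$, and $\|D\|$ deterministically from $\|\phi\|_2=1$ together with the row bound $\max_i\sum_j L_{ij}^2\le Cn/\eta^3$ and $\|L\|_\infty\le\eta^{-2}$ of Lemma~\ref{lem:l-fro}, and then apply Hanson--Wright to the Gaussian representation of $\psi$ in $\phi^\perp$. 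Both routes consume essentially the same structural information about $L$ (your row bound and the paper's $\|L\|_F\lesssim n\eta^{-3/2}$ are interchangeable here), but yours needs only degree-$2$ concentration and no randomness of $\phi$ beyond its unit norm, which is arguably cleaner; the paper's AM--GM decoupling has the minor advantage that the same one-vector computation also handles the later variant $\tK_{ij}=\tL_{ij}\phi_i\phi_j$ verbatim. Your details check out: projecting the PSD diagonal matrix onto $\phi^\perp$ only decreases $\Tr$, $\|\cdot\|_F$, and $\|\cdot\|$; the Hanson--Wright error terms $\sqrt{n\log n}\,\eta^{-7/2}$ and $(\log n)\eta^{-4}$ are indeed $O(n\eta^{-3})$ once $\eta\ge n^{-0.1}$; and the denominator $\|(\identity-\phi\phi^\top)h\|_2^2$ is a $\chi^2_{n-1}$ variable exceeding $n/2$ with overwhelming probability, so the probability accounting closes with room to spare.
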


\begin{proof}
It suffices to prove that  conditional on $L$, with probability at least $1 - n^{-8}$, we have
$$
\| K \|_F
\lesssim 
\frac{1}{n} \| L \|_F + \frac{\log n}{n^{1/4}} \|L\|_\infty .
$$
This together with Lemma~\ref{lem:l-fro} yields that
$$
\| K \|_F \lesssim \frac{1}{\eta^{3/2}} + \frac{ \log n}{ n^{1/4} \ridge } \lesssim \frac{1}{\eta^{3/2}} ,
$$
where the last inequality holds since we choose $\eta \gtrsim n^{-0.1}$. 

Note that we have
$$
\| K \|_F^2 = \sum_{i, j = 1}^n \phi_i^2 \psi_j^2 L_{ij}^2 \le \frac 12 \sum_{i, j = 1}^n \phi_i^4 L_{ij}^2 + \frac 12 \sum_{i, j = 1}^n \psi_j^4 L_{ij}^2 .
$$
It suffices to bound the first term, as the second term has the same distribution. Let $z$ be a standard Gaussian vector in $\R^n$. Then $z/\|z\|_2\eqdistr\phi$. By the concentration of $\|z\|_2$ around $\sqrt{n}$ (Lemma~\ref{lem:g-norm}), it remains to prove that with probability at least $1 - n^{-10}$,
$$
\sum_{i, j = 1}^n z_i^4 L_{ij}^2 = \sum_{i=1}^n z_i^4 \alpha_i
\lesssim 
\|L\|_F^2 + \| L \|_\infty^2 n^{3/2} (\log n)^2 
$$
where $\alpha_i \triangleq \sum_{j = 1}^n L_{ij}^2$.

To this end, we compute
$$
\E\left[ \sum_{i = 1}^n z_i^4 \alpha_i \right]
= 3 \|L\|_F^2
$$
and moreover
\begin{align*}
\Var \left( \sum_{i= 1}^n z_i^4 \alpha_i \right) 
&=   \sum_{i= 1}^n \Var(z_i^4) \alpha_i^2 = 105 \sum_{i= 1}^n \alpha_i^2 \lesssim n^3 \|L\|_\infty^4.
\end{align*}
Therefore, applying Theorem~\ref{thm:hyper} with $d=4$ we obtain
$$
\left| \sum_{i= 1}^n z_i^4 \alpha_i^2 - 3 \|L\|_F^2 \right| \lesssim 
\| L \|_\infty^2 n^{3/2} (\log n)^2 
$$
with probability at least $1 - n^{-10}$, which completes the proof.
\end{proof}

\begin{lemma} \label{lem:l-lem}
It holds with probability at least $1 - n^{-8}$ that for all $j, k \in [n]$,
$$
\sum_{i=1}^n \phi_i^2 L_{ij} L_{ik} \lesssim \frac 1n \sum_{i=1}^n L_{ij} L_{ik} 
\quad \text{ and } \quad 
\sum_{i=1}^n \psi_i^2 L_{ij} \lesssim \frac{1}{\eta} .
$$
\end{lemma}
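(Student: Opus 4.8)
\textbf{Proof plan for Lemma~\ref{lem:l-lem}.}
The plan is to handle the two statements separately, both by conditioning on $L$ and exploiting that $\phi$ and $\psi$ are (marginally) uniform on the sphere, hence comparable to normalized Gaussian vectors. The second bound is the easier one: after replacing $\psi$ by $z/\|z\|_2$ for a standard Gaussian $z$ and using the concentration $\|z\|_2 \asymp \sqrt n$ (Lemma~\ref{lem:g-norm}), the quantity $\sum_i z_i^2 L_{ij}$ is a sum of independent terms with mean $\sum_i L_{ij} = n \cdot \frac1n\sum_i L_{ij} \lesssim n/\eta$ by Lemma~\ref{lem:l-fro}, and variance $2\sum_i L_{ij}^2 \lesssim n/\eta^3$ also from Lemma~\ref{lem:l-fro}; a standard sub-exponential (Bernstein) tail bound plus a union bound over $j \in [n]$ then gives $\sum_i \psi_i^2 L_{ij} \lesssim \frac1n(n/\eta + \sqrt{(n/\eta^3)\log n} + \|L\|_\infty \log n) \lesssim 1/\eta$, where the last step uses $\|L\|_\infty \le 1/\eta^2$ and $\eta \gtrsim n^{-0.1}$.

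For the first bound, fix $j,k$ and write $\sum_i \phi_i^2 L_{ij}L_{ik} = \sum_i \phi_i^2 \beta_i$ with $\beta_i \defn L_{ij}L_{ik} \ge 0$. Again substituting $\phi \eqdistr z/\|z\|_2$ and using $\|z\|_2^2 \asymp n$, it suffices to show $\sum_i z_i^2 \beta_i \lesssim \sum_i \beta_i$ with the required probability. The mean is $\sum_i \beta_i$ and the variance is $2\sum_i \beta_i^2 \le 2\|L\|_\infty^2 \sum_i \beta_i \lesssim \frac1{\eta^4}\sum_i \beta_i$; Bernstein's inequality then gives a deviation of order $\sqrt{\frac1{\eta^4}(\sum_i \beta_i)\log n} + \frac1{\eta^2}\log n$. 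The point is that $\sum_i \beta_i = \sum_i L_{ij}L_{ik}$ is itself of order at least $n$ (indeed $L_{ij} \ge c$ for all $i,j$ on the good event of Lemma~\ref{lem:l-fro}, so $\sum_i \beta_i \ge c^2 n$), so the fluctuation terms $\sqrt{\eta^{-4} n \log n}$ and $\eta^{-2}\log n$ are both $o(n) \lesssim \sum_i \beta_i$ once $\eta \gtrsim n^{-0.1}$; absorbing constants and taking a union bound over the $n^2$ pairs $(j,k)$ completes the argument.

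\textbf{Main obstacle.} The delicate point is not any single concentration inequality but verifying that the multiplicative comparison ``$\sum z_i^2\beta_i \lesssim \sum\beta_i$'' is actually achievable, i.e.\ that the additive Bernstein error is genuinely dominated by $\sum_i\beta_i$. This forces one to use the \emph{lower} bound $L_{ij} \ge c$ from Lemma~\ref{lem:l-fro} (not merely the $\ell_2$ and $\ell_\infty$ control), which is why the structural properties of $L$ — in particular that its entries are bounded away from $0$ — are essential here; a generic nonnegative matrix with the same $\|L\|_F$ and $\|L\|_\infty$ would not satisfy this lemma. After that, the only bookkeeping is to make sure all failure probabilities are $\le n^{-8}$ after the union bound over $j$ (and $(j,k)$), which dictates using $\log n$ (rather than a constant) multiples in the deviation bounds.
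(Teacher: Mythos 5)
Your proposal is correct and follows essentially the same route as the paper: approximate $\phi,\psi$ by $z/\|z\|_2$, apply the weighted $\chi^2$ concentration bound (Lemma~\ref{lem:g-norm}) conditionally on $L$, and use the lower bound $L_{ij}\gtrsim 1$ from Lemma~\ref{lem:l-fro} to show that the mean term $\sum_i L_{ij}L_{ik}\gtrsim n$ dominates the fluctuations once $\eta\gtrsim n^{-0.1}$, finishing with a union bound over $(j,k)$. The only blemish is cosmetic: the Bernstein max term should be $\|\beta\|_\infty\log n\le\eta^{-4}\log n$ rather than $\eta^{-2}\log n$, which is still $o(n)$ and does not affect the argument.
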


\begin{proof}
Since $z/\|z\|_2$ has the same distribution as $\phi$ or $\psi$. By the concentration of $\|z\|_2$ around $\sqrt{n}$ (Lemma~\ref{lem:g-norm}) and a union bound, it remains to prove that with probability at least $1 - n^{-10}$, 
\begin{align}
\sum_{i=1}^n z_i^2 L_{ij} L_{ik} \lesssim\sum_{i=1}^n L_{ij} L_{ik} 
\quad \text{ and } \quad 
\sum_{i=1}^n z_i^2 L_{ij} \lesssim \frac{n}{\eta}.
\label{eq:two-ineq}
\end{align}
For the first inequality, Lemma~\ref{lem:g-norm} gives that with probability at least $1 - n^{-11}$, 
$$
\sum_{i=1}^n z_i^2 L_{ij} L_{ik} \lesssim \sum_{i=1}^n L_{ij} L_{ik} + \left( \sum_{i=1}^n L_{ij}^2 L_{ik}^2 \log n \right)^{1/2} + \left( \max_{i \in [n]} L_{ij} L_{ik} \right) \log n . 
$$
Note that $1 \lesssim L_{ij} \le 1/\ridge$ by Lemma~\ref{lem:l-fro}, so 
$$
\sum_{i=1}^n L_{ij} L_{ik} \gtrsim n 
\quad \text{ and } \quad
\left( \sum_{i=1}^n L_{ij}^2 L_{ik}^2 \log n \right)^{1/2} + \left( \max_{i \in [n]} L_{ij} L_{ik} \right) \log n \lesssim \frac{\sqrt{ n \log n } }{\eta^4} + \frac{\log n}{\eta^4} .
$$
Therefore, if $\eta \gtrsim n^{-0.1}$, then $\sum_{i=1}^n L_{ij} L_{ik}$ is the dominating term. Hence the first bound in~\eqref{eq:two-ineq} is established. 

The same argument also works to yield 
$$
\sum_{i=1}^n z_i^2 L_{ij} \lesssim  \sum_{i=1}^n L_{ij} .
$$
Combining this with Lemma~\ref{lem:l-fro}, we obtain the second bound in~\eqref{eq:two-ineq}.
%
\end{proof}

\begin{lemma} \label{lem:k-op}
For the matrix $K \in \R^{n \times n}$ defined by~\eqref{eq:def-k}, we have
$
\| K \| \lesssim 1/ \eta 
$
with probability at least $1 - 2 n^{-8}$.
\end{lemma}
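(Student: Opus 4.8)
The plan is to bound $\|K\|^2=\|K^\top K\|$ rather than $\|K\|$ directly. The naive submultiplicative estimate $\|K\|=\|\diag{\phi}\,L\,\diag{\psi}\|\le\|\phi\|_\infty\|L\|\|\psi\|_\infty$ is too lossy: Lemma~\ref{lem:l-fro} only gives $\|L\|\lesssim n/\eta$ through its row sums, and $\|\phi\|_\infty,\|\psi\|_\infty\lesssim\sqrt{(\log n)/n}$, so this route yields $\|K\|\lesssim(\log n)/\eta$, off by a logarithmic factor. Passing to $K^\top K$ will let us replace the two wasteful factors $\|\phi\|_\infty^2$, $\|\psi\|_\infty^2$ by the self-averaged quantities that Lemma~\ref{lem:l-lem} was tailored to control.

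The first step is to expand $(K^\top K)_{jk}=\psi_j\psi_k\sum_{i}\phi_i^2L_{ij}L_{ik}$ and apply the first bound of Lemma~\ref{lem:l-lem}, $\sum_i\phi_i^2L_{ij}L_{ik}\lesssim\frac1n\sum_iL_{ij}L_{ik}=\frac1n(L^2)_{jk}$, valid simultaneously for all $j,k$ on its high-probability event. This gives the entrywise estimate
\[
\bigl|(K^\top K)_{jk}\bigr|\;\lesssim\;\tfrac1n\,|\psi_j|\,|\psi_k|\,(L^2)_{jk}\;=\;\Bigl(\tfrac Cn\,\diag{|\psi|}\,L^2\,\diag{|\psi|}\Bigr)_{jk}.
\]
Since $L$ has nonnegative entries, $L^2$ and hence the matrix on the right are entrywise nonnegative, so I can invoke the elementary fact that entrywise domination by a nonnegative matrix passes to operator norms (for any $M$ one has $\|M\|\le\|\,|M|\,\|$, and $0\le|M|\le N$ implies $\|\,|M|\,\|\le\|N\|$ because the operator norm of a nonnegative matrix is attained on nonnegative vectors). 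This yields $\|K^\top K\|\lesssim\frac1n\,\|\diag{|\psi|}\,L^2\,\diag{|\psi|}\|$.

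The second step reduces everything to showing $\|\diag{|\psi|}\,L^2\,\diag{|\psi|}\|\lesssim n/\eta^2$. Using $\diag{|\psi|}L^2\diag{|\psi|}=(L\diag{|\psi|})^\top(L\diag{|\psi|})$, this norm equals $\sup_{\|a\|=1}\|L(|\psi|\odot a)\|^2$. For a unit vector $a$, I split each coordinate by Cauchy--Schwarz between $\sqrt{L_{ij}}\,|\psi_j|$ and $\sqrt{L_{ij}}\,a_j$:
\[
\Bigl(\sum_{j}L_{ij}|\psi_j|a_j\Bigr)^2\;\le\;\Bigl(\sum_{j}L_{ij}\psi_j^2\Bigr)\Bigl(\sum_{j}L_{ij}a_j^2\Bigr)\;\lesssim\;\tfrac1\eta\sum_{j}L_{ij}a_j^2,
\]
where the last bound uses the second estimate of Lemma~\ref{lem:l-lem} (together with the symmetry $L_{ij}=L_{ji}$) to control $\sum_jL_{ij}\psi_j^2\lesssim1/\eta$ for every $i$. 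Summing over $i$ and using the row-sum bound $\max_j\sum_iL_{ij}\lesssim n/\eta$ from Lemma~\ref{lem:l-fro} gives $\|L(|\psi|\odot a)\|^2\lesssim n/\eta^2$ uniformly in $a$. Combining the two steps, $\|K\|^2=\|K^\top K\|\lesssim\frac1n\cdot\frac n{\eta^2}=\frac1{\eta^2}$ on the intersection of the events of Lemmas~\ref{lem:l-fro} and~\ref{lem:l-lem}, which has probability at least $1-2n^{-8}$, giving $\|K\|\lesssim1/\eta$.

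The main obstacle here is conceptual rather than computational: one must resist bounding $\|K\|$ through the spectral norm of $L$ (which really is of order $n/\eta$, dominated by a near-constant eigenvector) and instead route the argument through the row sums of $L$ and the self-averaging of $\phi_i^2,\psi_i^2$. The Cauchy--Schwarz split in the last paragraph, which turns $\diag{|\psi|}\,L^2\,\diag{|\psi|}$ into the one-index quantities $\sum_jL_{ij}\psi_j^2$ and $\sum_iL_{ij}$, is the crux of the argument and is precisely the reason Lemma~\ref{lem:l-lem} was proved in the form it has.
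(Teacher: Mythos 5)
Your proof is correct and is essentially the paper's argument: both use the first bound of Lemma~\ref{lem:l-lem} to replace $\phi_i^2$ by $1/n$, reduce the problem to the operator norm of $L\,\mathsf{diag}\{|\psi|\}$ (your $\sup_a\|L(|\psi|\odot a)\|$ is the paper's matrix $M$ viewed from the other side), and then combine a Cauchy--Schwarz split with the second bound of Lemma~\ref{lem:l-lem} and the row-sum bound from Lemma~\ref{lem:l-fro}. The only cosmetic difference is that you phrase the first reduction via entrywise domination of $K^\top K$ while the paper bounds $\|Kx\|_2^2$ for a fixed unit vector $x$ directly.
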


\begin{proof}
Consider the event where the estimates of Lemmas \ref{lem:l-fro}
and \ref{lem:l-lem} hold. Fix a unit vector $x \in \R^n$. We have 
\begin{align*}
\| K x \|_2^2 = \sum_{i=1}^n  \left( \sum_{j=1}^n \phi_i \psi_j L_{ij} x_j \right)^2 
= \sum_{j,k=1}^n \left( \sum_{i=1}^n \phi_i^2 L_{ij} L_{ik} \right) \psi_j \psi_k x_j x_k .
\end{align*}
The first bound in Lemma~\ref{lem:l-lem} then yields that 
\begin{align}
\| K x \|_2^2 
&\lesssim \frac 1n \sum_{j,k=1}^n \left( \sum_{i=1}^n L_{ij} L_{ik} \right) | \psi_j \psi_k x_j x_k |  \notag \\
&= \frac 1n \sum_{i=1}^n \left( \sum_{j=1}^n  | \psi_j |  L_{ij}  | x_j |  \right)^2 
= \frac 1n \big\| M  | x |  \big\|_2^2 \le \frac 1n \| M \|^2 , \label{eq:kbd}
\end{align}
where $|x|$ denotes the vector whose $i$-th entry is $|x_i|$, and the matrix $M$ is defined by 
$$
M_{ij} = |\psi_j| L_{ij} .
$$

Moreover, we have that
\begin{align*}
\| M^\top x \|_2^2 = \sum_{i=1}^n \psi_i^2 \left( \sum_{j=1}^n  L_{ij} x_j \right)^2 
\le \sum_{i=1}^n \psi_i^2 \left( \sum_{j=1}^n  L_{ij} \right) \left( \sum_{j=1}^n  L_{ij} x_j^2 \right)
\lesssim \frac{n}{\eta} \sum_{j=1}^n \left( \sum_{i=1}^n \psi_i^2 L_{ij} \right) x_j^2 , 
\end{align*}
where the first inequality follows from the Cauchy-Schwarz inequality, and the second inequality follows from the row sum bound in Lemma~\ref{lem:l-fro}. In addition, by the second inequality in Lemma~\ref{lem:l-lem}, 
$$
\| M^\top x \|_2^2 \lesssim \frac{n}{\ridge} \sum_{j=1}^n x_j^2 = \frac{n}{\ridge} .
$$
It follows that $\|M\|^2 = \|M^\top \|^2 \lesssim n / \ridge$ which, combined with~\eqref{eq:kbd}, yields 
$\| K x \|_2^2 \lesssim 1/\ridge$. 
Therefore, we conclude that
$
\| K \| \lesssim 1/ \eta .
$
\end{proof}

\paragraph{Bounding (I).}

We now bound $\sum_{i, j = 1}^n L_{ij} \phi_i \psi_j \xi_i \xi_j$.
Recall that the vectors $\phi, \psi$ and $\xi$ satisfy the relations \eqref{eq:def-3} and are otherwise uniform random. 
Let $z$ be a standard Gaussian vector in $\R^n$ independent of
$(\phi,\psi)$  and define
\begin{equation}
\tilde{\xi} \defn \sqrt{n-2} \, \frac{ z - (\phi^\top z) \phi - (\psi^\top z)
\psi }{ \big\| z - (\phi^\top z) \phi - (\psi^\top z) \psi \big\|_2}  + \phi +
\psi.
\label{eq:txi}
\end{equation}
Then the tuple $(\phi, \psi, \tilde{\xi})$ is equal to $(\phi, \psi, \xi)$ in
law.
Thus it suffices to study
\begin{align*}
\sum_{i, j = 1}^n L_{ij} \phi_i \psi_j \tilde \xi_i \tilde \xi_j .
\end{align*}

Note that we can write $\tilde{\xi} = \alpha z + \beta_1 \phi + \beta_2 \psi$ for random variables $\alpha, \beta_1, \beta_2 \in \R$, 
where 
$\beta_1 = 1-\alpha (\phi^\top z)$ and $\beta_2=1-\alpha (\psi^\top z)$. By concentration inequalities for $\|z\|_2$ and $\phi^\top z$  (Lemmas~\ref{lem:g-concentrate} and~\ref{lem:g-norm}), we have $0.9 \le \alpha \le 1.1$ and $|\beta_1| \lor |\beta_2| \le 5 \sqrt{\log n}$ with probability at least $1 - 4 n^{-8}$. Therefore, we obtain
\begin{align}
& \left| \sum_{i, j = 1}^n L_{ij} \phi_i \psi_j \tilde \xi_i \tilde \xi_j \right| 
\lesssim \left| \sum_{i, j = 1}^n L_{ij} \phi_i \psi_j z_i z_j  \right| 
+ \sqrt{ \log n } \left| \sum_{i, j = 1}^n L_{ij} \phi_i \phi_j \psi_j z_i \right| 
+  \sqrt{ \log n }  \left| \sum_{i, j = 1}^n L_{ij} \phi_i \psi_j^2 z_i \right| \notag \\
& \qquad  \qquad +  \sqrt{ \log n } \left| \sum_{i, j = 1}^n L_{ij} \phi_i^2 \psi_j z_j \right| 
+  \sqrt{ \log n } \left| \sum_{i, j = 1}^n L_{ij} \phi_i \psi_i \psi_j z_j \right|  
+ ( \log n ) \left| \sum_{i, j = 1}^n L_{ij} \phi_i^2 \phi_j \psi_j \right| \notag \\
& \qquad  \qquad +  ( \log n ) \left| \sum_{i, j = 1}^n L_{ij} \phi_i^2 \psi_j^2 \right| 
+  ( \log n ) \left| \sum_{i, j = 1}^n L_{ij} \phi_i \phi_j \psi_i \psi_j \right|  
+  ( \log n ) \left| \sum_{i, j = 1}^n L_{ij} \phi_i \psi_i \psi_j^2 \right| . \label{eq:nine-terms}
\end{align}
By the symmetry of $\phi$ and $\psi$, it suffices to study the following quantities
\begin{align}
& (\text{i}): \sum_{i, j = 1}^n L_{ij} \phi_i \psi_j z_i z_j   , \quad 
 (\text{ii}): \sum_{i, j = 1}^n L_{ij} \phi_i \phi_j \psi_j z_i  , \quad  
(\text{iii}): \sum_{i, j = 1}^n L_{ij} \phi_i \psi_j^2 z_i  ,  \notag \\
& (\text{iv}): \sum_{i, j = 1}^n L_{ij} \phi_i^2 \phi_j \psi_j  , \quad   
(\text{v}): \sum_{i, j = 1}^n L_{ij} \phi_i^2 \psi_j^2  , \qquad 
(\text{vi}): \sum_{i, j = 1}^n L_{ij} \phi_i \phi_j \psi_i \psi_j   .  \label{eq:six-terms}
\end{align}
We now bound each of these sums.

\emph{Bounding (i).} 
For the matrix $K$ defined by~\eqref{eq:def-k}, Lemma~\ref{lem:hw} yields that 
$$
\left| \sum_{i, j = 1}^n L_{ij} \phi_i \psi_j z_i z_j \right| = | z^\top K z | \lesssim |\Tr (K)| + \| K \|_F \sqrt{\log n} + \| K \| \log n ,
$$
with probability at least $1 - n^{-10}$. The trace vanishes because
$$
\Tr (K) = \sum_{i=1}^n L_{ii} \phi_i \psi_i = \frac{1}{\eta^2} \langle \phi, \psi \rangle =  \frac{1}{\eta^2} \langle U^\top \coord_1, U^\top \coord_2 \rangle 
= 0 .
$$
Moreover, Lemmas~\ref{lem:k-fro} and~\ref{lem:k-op} imply that with probability at least $1- 4 n^{-8}$,   we have 
$ \| K \|_F \lesssim 1/\eta^{3/2} $ and $\| K \| \lesssim 1/\eta$. Therefore, we conclude that 
$$
\left| \sum_{i, j = 1}^n L_{ij} \phi_i \psi_j z_i z_j \right| 
\lesssim \frac{\sqrt{\log n} }{\eta^{3/2}} + \frac{ \log n}{ \eta} .
$$

\emph{Bounding (ii) and (iii).} 
For (ii) in~\eqref{eq:six-terms}, Lemma~\ref{lem:g-concentrate} gives that with probability at least $1 - n^{-10}$, 
$$
\left| \sum_{i, j = 1}^n L_{ij} \phi_i \phi_j \psi_j z_i \right| \lesssim \left[ \sum_{i=1}^n \phi_i^2 \left( \sum_{j = 1}^n L_{ij} \phi_j \psi_j \right)^2 \right]^{1/2} \sqrt{  \log n } .
$$
Applying Lemmas~\ref{lem:l-lem} and~\ref{lem:l-fro}, we obtain that with probability at least $1 - 3 n^{-8}$, 
$$
\left| \sum_{j = 1}^n L_{ij} \phi_j \psi_j \right| \le \frac 12 \sum_{j = 1}^n L_{ij} \phi_j^2 + \frac 12 \sum_{j = 1}^n L_{ij} \psi_j^2 
\lesssim \frac 1n \sum_{j = 1}^n L_{ij} \lesssim \frac{1}{\eta} .
$$
Combining the above two bounds yields 
$$
\left| \sum_{i, j = 1}^n L_{ij} \phi_i \phi_j \psi_j z_i \right| \lesssim \frac{1}{\eta} \left( \sum_{i=1}^n \phi_i^2 \right)^{1/2} \sqrt{  \log n } = \frac{ \sqrt{\log n} }{ \eta } .  
$$
The same argument also gives the same upper bound on (iii) in~\eqref{eq:six-terms}.

\emph{Bounding (iv), (v) and (vi).} 
The proofs for quantities (iv), (v) and (vi) in~\eqref{eq:six-terms} are similar, so we only present a bound on (vi). Since $\phi_i \phi_j \psi_i \psi_j \le \frac 12 ( \phi_i^2 + \psi_i^2 ) | \phi_j  \psi_j |$, it holds that
$$
\left| \sum_{i, j = 1}^n L_{ij} \phi_i \phi_j \psi_i \psi_j \right| 
\le \frac 12 \sum_{j = 1}^n \left( \sum_{i=1}^n L_{ij} \phi_i^2 \right) | \phi_j  \psi_j |
+ \frac 12 \sum_{j = 1}^n \left( \sum_{i=1}^n L_{ij} \psi_i^2 \right) | \phi_j  \psi_j | .
$$
By the second bound in Lemma~\ref{lem:l-lem} and the symmetry of $\phi$ and $\psi$, we then obtain that
with probability at least $1 - 2 n^{-8}$,
$$
\left| \sum_{i, j = 1}^n L_{ij} \phi_i \phi_j \psi_i \psi_j \right|  
\lesssim  \frac{1}{\eta} \sum_{j = 1}^n | \phi_j  \psi_j | \leq \frac{1}{\eta} 
$$
where the last step is by Cauchy-Schwarz.
Similar arguments yield the same bound on (iv) and (v) in~\eqref{eq:six-terms}.

\medskip

Substituting the bounds on (i)--(vi) into~\eqref{eq:nine-terms}, we obtain that with probability at least $1 - n^{-7}$, 
$$
\left| \sum_{i, j = 1}^n L_{ij} \phi_i \psi_j \tilde \xi_i \tilde \xi_j \right| 
\lesssim  \frac{\sqrt{\log n} }{\eta^{3/2}} + \frac{ \log n}{ \eta},
$$
which is the desired bound for quantity (I),

\paragraph{Bounding (II).}

The argument for establishing the same bound on $\sum_{i, j = 1}^n \tL_{ij} \phi_i \phi_j \xi_i \xi_j$ is similar, so we briefly sketch the proof. Analogous to~\eqref{eq:nine-terms}, we may use a (simpler) Gaussian approximation argument to obtain
\begin{align}
\left| \sum_{i, j = 1}^n \tL_{ij} \phi_i \phi_j \xi_i \xi_j \right| 
&\lesssim \left| \sum_{i, j = 1}^n \tL_{ij} \phi_i \phi_j z_i z_j  \right| 
+ \sqrt{ \log n } \left| \sum_{i, j = 1}^n \tL_{ij} \phi_i^2 \phi_j z_j \right| 
+  \sqrt{ \log n }  \left| \sum_{i, j = 1}^n \tL_{ij} \phi_i \phi_j^2 z_i \right|
    \nonumber \\
& \quad  \    +  ( \log n ) \left| \sum_{i, j = 1}^n \tL_{ij} \phi_i^2 \phi_j^2 \right|  , \label{eq:4terms}
\end{align}
where $z$ is a standard Gaussian vector independent of $\phi$ and $\tL$. Note that the matrix $\tL$ is the same as $L$ except that its diagonal entries are set to zero. Hence the last three terms on the right-hand side can be bounded in the same way as before. 

For the first term on the right-hand side of~\eqref{eq:4terms}, we again apply Lemma~\ref{lem:hw} to obtain 
$$
\left| \sum_{i, j = 1}^n \tL_{ij} \phi_i \phi_j z_i z_j  \right| \lesssim | z^\top \tK z | \lesssim |\Tr (\tK)| + \| \tK \|_F \sqrt{\log n} + \| \tK \| \log n ,
$$
where $\tK$ is defined by $\tK_{ij} = \tL_{ij} \phi_i \phi_j$. 
The trace term vanishes because the diagonal of $\tL$ is zero by definition.
The proofs of Lemmas~\ref{lem:k-fro},~\ref{lem:l-lem} and~\ref{lem:k-op} continue to hold with $\tK$ and $\tL$ in place of $K$ and $L$ respectively, and hence the norms $\| \tK \|_F$ and $\| \tK \|$ admit the same bounds as $\| K \|_F$ and $\| K \|$.

Combining the bounds on (I) and (II) completes the proof of Lemma
\ref{lem:sig-conc}, and hence also of Lemma \ref{lem:noiseless}.

\subsection{Bounding the effect of the noise}\label{sec:noise}

We now prove Lemma \ref{lem:noise}, bounding the difference $X-X_*$ between
the noisy and noiseless settings. Again, $\pi^*=\id$ is assumed without loss of generality.

\subsubsection{Vectorization and rotation by $U$}

Without loss of generality, we consider the entries
$X_{11}-(X_*)_{11}$ and $X_{21}-(X_*)_{21}$.
Writing the spectral decomposition $A=U\Lambda U^\top$,
we apply a vectorization followed by a rotation by $U$ to first
put these differences in a more convenient form.

Recall the notation
\[\phi=U^\top \coord_1, \quad \psi = U^\top \coord_2, \quad \text{and} \quad \xi
    = U^\top \bone,\]
where this triple $(\phi,\psi,\xi)$ satisfies (\ref{eq:def-3}). Set
\[\tZ=U^\top ZU, \qquad
\cL_*=\bI_n \otimes \Lambda-\Lambda \otimes \bI_n,
\qquad \cL=\bI_n \otimes \Lambda-(\Lambda+\sigma \tZ) \otimes \bI_n,\]
and introduce
\begin{equation}\label{eq:H}
    H=(\cL-\im\eta\,\bI_{n^2})^{-1}-(\cL_*-\im\eta\,\bI_{n^2})^{-1} \in \C^{n^2
\times n^2}
\end{equation}
where $\im=\sqrt{-1}$. We review relevant Kronecker product identities in
Appendix \ref{appendix:kronecker}.

We formalize the vectorization and rotation operations as the following lemma.
\begin{lemma}\label{lem:vectorization}
    When $\pi^*=\id$,
\begin{align}
X_{11}-(X_*)_{11} &= \frac{1}{\eta}\Im (\phi \otimes \phi)^\top H (\xi \otimes
    \xi),\label{eq:11diff}\\
X_{21}-(X_*)_{21} &= \frac{1}{\eta}\Im (\phi \otimes \psi)^\top H (\xi \otimes
    \xi),\label{eq:12diff}
\end{align}
where $\Im$ denotes the imaginary part.
Furthermore, the triple $(\phi,\psi,\xi)$ is independent of $H$.
\end{lemma}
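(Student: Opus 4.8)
The plan is to write each similarity matrix as the imaginary part of a Kronecker-product resolvent applied to $\bone\otimes\bone$, then conjugate by $U\otimes U$, and finally read off independence from rotational invariance.

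\textbf{Resolvent form and vectorization.} First I would use the elementary identity $\frac{1}{(x-y)^2+\eta^2}=\frac{1}{\eta}\Im\frac{1}{(x-y)-\im\eta}$ to rewrite the Cauchy weights in \eqref{eq:X} and \eqref{eq:Xstar}. Let $\cN\defn\bI_n\otimes A-B\otimes\bI_n$, a real symmetric matrix. Since $Au_i=\lambda_iu_i$ and $Bv_j=\mu_jv_j$, the family $\{v_j\otimes u_i\}_{i,j}$ is an orthonormal eigenbasis of $\cN$ with eigenvalues $\lambda_i-\mu_j$, so $\cN-\im\eta\bI_{n^2}$ is invertible and $(\cN-\im\eta\bI_{n^2})^{-1}=\sum_{i,j}\frac{1}{(\lambda_i-\mu_j)-\im\eta}(v_j\otimes u_i)(v_j\otimes u_i)^\top$. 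With $\vecc(\cdot)$ the column-stacking map one has $\vecc(u_iv_j^\top)=v_j\otimes u_i$, $\vecc(\bJ)=\bone\otimes\bone$, $(v_j\otimes u_i)^\top(\bone\otimes\bone)=(u_i^\top\bone)(v_j^\top\bone)$, and $\langle\coord_b\otimes\coord_a,\vecc(M)\rangle=M_{ab}$. Combining these gives
\[
\vecc(X)=\frac{1}{\eta}\Im\big[(\cN-\im\eta\bI_{n^2})^{-1}(\bone\otimes\bone)\big],
\]
and the same identity for $X_*$ with $\cN_*\defn\bI_n\otimes A-A\otimes\bI_n$ replacing $\cN$; these manipulations are legitimate because $(u_i^\top\bone)(v_j^\top\bone)$, $\coord_a\otimes\coord_b$, and $\bone\otimes\bone$ are all real, so $\Im$ commutes with everything. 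Reading off entries (noting $X_{ab}$ pairs with $\coord_b\otimes\coord_a$), one gets
\[
X_{11}-(X_*)_{11}=\frac{1}{\eta}\Im\,(\coord_1\otimes\coord_1)^\top\big[(\cN-\im\eta\bI_{n^2})^{-1}-(\cN_*-\im\eta\bI_{n^2})^{-1}\big](\bone\otimes\bone),
\]
and likewise with $\coord_1\otimes\coord_2$ in place of $\coord_1\otimes\coord_1$ for $X_{21}-(X_*)_{21}$.

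\textbf{Rotation by $U$.} When $\pi^*=\id$, $B=A+\sigma Z=U(\Lambda+\sigma\tZ)U^\top$ with $\tZ=U^\top ZU$, so the mixed-product property of the Kronecker product gives $\cN=(U\otimes U)\cL(U\otimes U)^\top$ and $\cN_*=(U\otimes U)\cL_*(U\otimes U)^\top$. Since $U\otimes U$ is orthogonal, the bracketed resolvent difference equals $(U\otimes U)H(U\otimes U)^\top$ with $H$ as in \eqref{eq:H}. Pushing $(U\otimes U)^\top$ onto the boundary vectors uses $(U\otimes U)^\top(\coord_1\otimes\coord_1)=\phi\otimes\phi$, $(U\otimes U)^\top(\coord_1\otimes\coord_2)=\phi\otimes\psi$, and $(U\otimes U)^\top(\bone\otimes\bone)=\xi\otimes\xi$, which yields \eqref{eq:11diff}--\eqref{eq:12diff}.

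\textbf{Independence.} The triple $(\phi,\psi,\xi)=(U^\top\coord_1,U^\top\coord_2,U^\top\bone)$ is a function of $U$, while $H$ is a function of $(\Lambda,\tZ)$. By Proposition~\ref{prop:wignerproperties}(a), $U$ is independent of $\Lambda$; and since $Z\sim\GOE$ is rotationally invariant and independent of $(U,\Lambda)$, the conditional law of $\tZ=U^\top ZU$ given $(U,\Lambda)$ is $\GOE$ for every value of $(U,\Lambda)$, hence $\tZ$ is independent of $(U,\Lambda)$. Therefore $H$ is independent of $U$, and so of $(\phi,\psi,\xi)$. The argument is conceptually routine; the only care needed is bookkeeping of the vectorization convention (keeping the order $v_j\otimes u_i$, and hence the slot occupied by $\phi$ versus $\psi$, consistent) and checking the resolvents are well defined, which holds since $\eta>0$ and $\cN,\cL$ are symmetric. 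I do not anticipate a genuine obstacle here.
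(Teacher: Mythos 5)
Your proposal is correct and follows essentially the same route as the paper: the Cauchy weight is rewritten as $\frac{1}{\eta}\Im\frac{1}{(x-y)-\im\eta}$, the vectorized similarity matrices become imaginary parts of resolvents of $\bI_n\otimes A-B\otimes\bI_n$ (resp.\ $\bI_n\otimes A-A\otimes\bI_n$) applied to $\bone_{n^2}$, conjugation by $U\otimes U$ produces $H$ acting between $\phi\otimes\phi$ (or $\phi\otimes\psi$) and $\xi\otimes\xi$, and independence follows from $\tZ=U^\top ZU\sim\GOE$ for every fixed $U$. Your index bookkeeping ($X_{ab}$ pairing with $\coord_b\otimes\coord_a$) matches the paper's convention, so no issues.
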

\begin{proof}
From the definitions, $X_*$ and $X$ can be written in vectorized form as
\begin{align*}
\bx_* &\defn
\vectorize(X_*) = \sum_{ij} \frac{ 1 }{(\lambda_i -
\lambda_j)^2+\ridge } ( u_j \otimes u_i ) ( u_j \otimes u_i )^\top \bone_{n^2}
\in \R^{n^2}\\
\bx &\defn
\vectorize(X) = \sum_{ij} \frac{ 1 }{(\lambda_i -
\mu_j)^2+\ridge } ( v_j \otimes u_i ) ( v_j \otimes u_i )^\top \bone_{n^2}
\in \R^{n^2}.
\end{align*}
Since $u_i,v_j,\lambda_i,\mu_j,\eta$ are all real-valued, we may further write
\begin{align*}
\bx_* &=  \frac{1}{\eta}\Im \sum_{ij} \frac{ 1 }{\lambda_i-\lambda_j-\im \eta}
( u_j \otimes u_i ) ( u_j \otimes u_i )^\top \bone_{n^2}\\
\bx &=  \frac{1}{\eta}\Im \sum_{ij} \frac{ 1 }{\lambda_i-\mu_j-\im \eta}
( v_j \otimes u_i ) ( v_j \otimes u_i )^\top \bone_{n^2}.
\end{align*}
Recall the spectral decomposition~\eqref{eq:spectraldecomp}.
Note that 
$\bI_n \otimes A-A
\otimes \bI_n$ is a real symmetric matrix with 
orthonormal eigenvectors $\{ u_j \otimes u_i \}_{i,j \in [n]}$ and
corresponding eigenvalues $\lambda_i-\lambda_j$. Similarly,
$\bI_n \otimes A-B \otimes \bI_n$ is real symmetric with eigenvectors $\{ v_j \otimes u_i \}_{i,j \in [n]}$ and 
eigenvalues $\lambda_i-\mu_j$. 
Thus, using $U^\top AU=\Lambda$, we have
\begin{align*}
\bx_*&= \frac{1}{\eta} \Im(\bI_n \otimes A-A \otimes \bI_n
-\im \eta\,\bI_{n^2})^{-1}\bone_{n^2}\\
&=\frac{1}{\eta} \Im (U\otimes U) (\bI_n \otimes \Lambda -\Lambda  \otimes \bI_n -\iu \eta\,\bI_{n^2})^{-1} (U^\top \otimes U^\top) (\bone_{n} \otimes \bone_n)\\
&=\frac{1}{\eta} \Im (U\otimes U) (\cL_* -\iu \eta\,\bI_{n^2})^{-1} (\xi \otimes \xi).
\end{align*}
Similarly, using $U^\top BU=U^\top(A+\sigma Z)U=\Lambda+\sigma \tZ$, we have
\begin{align*}
\bx&=\frac{1}{\eta}\Im(\bI_n \otimes A-B \otimes \bI_n
-\im \eta\,\bI_{n^2})^{-1}\bone_{n^2} = \frac{1}{\eta} \Im (U\otimes U) (\cL -\iu \eta\,\bI_{n^2})^{-1} (\xi \otimes \xi).
\end{align*}
Therefore, for both $(k,\ell)=(1,1)$ and $(2,1)$,
\begin{align*}
    X_{k\ell}-(X_*)_{k\ell}&=\coord_k^\top (X-X_*) \coord_\ell =(\coord_\ell \otimes \coord_k)^\top (\bx-\bx_*) 
		= \frac{1}{\eta} \Im  ((U^\top \coord_\ell) \otimes (U^\top \coord_k) ) H (\xi \otimes \xi)
\end{align*}
which gives the desired (\ref{eq:11diff}) and (\ref{eq:12diff}).

For the independence claim, note that
$H$ is a function of $(\Lambda,\tZ)$, while $(\phi,\psi,\xi)$ is a function
of $U$. Crucially, since $Z \sim \GOE$,
Proposition \ref{prop:wignerproperties}(a) implies that $\tZ=U^\top ZU
\sim \GOE$ also for every fixed orthogonal matrix $U$. This distribution
does not depend on $U$, so $\tZ$ is independent
of $U$, and hence $(\phi,\psi,\xi)$ is independent of $H$.
\end{proof}

We divide the remainder of the proof into the following two results.

\begin{lemma}\label{lem:noise-conc}
For some constant $C>0$ and any deterministic matrix 
    $H \in \C^{n^2 \otimes n^2}$, with probability at least $1-n^{-7}$ over
    $(\phi,\psi,\xi)$,
    \[|(\varphi \otimes \varphi)^\top H(\xi \otimes \xi)|
\vee |(\varphi \otimes \psi)^\top H(\xi \otimes \xi)|
    \leq C \left(\|H\|+\|H\|_F\frac{\log n}{n}\right).\]
\end{lemma}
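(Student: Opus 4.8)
The plan is to bound $|v^\top H(\xi\otimes\xi)|$ for the two choices $v=\phi\otimes\phi$ and $v=\phi\otimes\psi$ in a unified way; writing $H=\Re H+\iu\,\Im H$ and treating the two parts separately, I may assume $H$ is real. First I would Gaussianize $(\phi,\psi,\xi)$: marginally $\phi\eqdistr z/\|z\|$ with $z\sim\sN(0,\bI_n)$, so $\phi\otimes\phi=(z\otimes z)/\|z\|^2$; and conditionally on $(\phi,\psi)$ one has $\xi=\phi+\psi+\sqrt{n-2}\,\hat\nu$ with $\hat\nu$ uniform on the unit sphere of $\mathrm{span}(\phi,\psi)^\perp$, which I realize as $\hat\nu=P^\perp g/\|P^\perp g\|$ for an independent $g\sim\sN(0,\bI_n)$ ($P^\perp$ the projection off $\mathrm{span}(\phi,\psi)$). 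Using the concentration of $\|z\|,\|g\|$ near $\sqrt n$ and $|\phi^\top g|\vee|\psi^\top g|\lesssim\sqrt{\log n}$ (Lemmas~\ref{lem:g-concentrate}--\ref{lem:g-norm}), on an event of probability $1-O(n^{-8})$ one may write $\xi=cg+d_1\phi+d_2\psi$ with $c\asymp 1$ and $|d_1|\vee|d_2|\lesssim\sqrt{\log n}$. Expanding $\xi\otimes\xi$ then splits the target into a \emph{main term} $c^2\,v^\top H(g\otimes g)$ and \emph{correction terms} $v^\top H(a\otimes b)$ with $\{a,b\}\subset\{g,\phi,\psi\}$, at least one factor in $\{\phi,\psi\}$, each carrying a scalar prefactor of size $\lesssim\polylog n$.

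For the main term, substituting $\phi=z/\|z\|$ as well, it equals $\tfrac{c^2 n}{\|z\|^2\|g\|^2}\,T$ with $\tfrac{c^2 n}{\|z\|^2\|g\|^2}\asymp\tfrac1n$ and $T:=(z\otimes z)^\top H(g\otimes g)$. Decomposing the two Gaussian quadratic factors into Wiener chaoses gives $T=\tau+A+B+C$, where $\tau=\vecc(\bI_n)^\top H\vecc(\bI_n)$ is a constant with $|\tau|\le n\|H\|$, the terms $A=(z\otimes z-\vecc(\bI_n))^\top H\vecc(\bI_n)$ and $B=\vecc(\bI_n)^\top H(g\otimes g-\vecc(\bI_n))$ are centered degree-$2$ Gaussian chaoses of variance $\lesssim n\|H\|^2$, and $C=(z\otimes z-\vecc(\bI_n))^\top H(g\otimes g-\vecc(\bI_n))$. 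The contribution $\tau/n$ produces the $\|H\|$ term in the bound; $A$ and $B$ are $\lesssim\sqrt n\,\|H\|\,\polylog n$ with high probability (Theorem~\ref{thm:hyper} or Lemma~\ref{lem:hw}), hence $o(\|H\|)$ after the division by $n$. The crux is $C$: conditioning on $g$ it equals $z^\top M z-\Tr M$ with $M=\mathrm{mat}\big(H(g\otimes g-\vecc(\bI_n))\big)$, so Hanson--Wright (Lemma~\ref{lem:hw}) yields $|C|\lesssim\|M\|_F\log n$ once $\|M\|$ is controlled, and the key estimate is
\[
\Expect_g\big\|H(g\otimes g-\vecc(\bI_n))\big\|^2=\Tr(H^\top H)+\Tr(H^\top H\,\mathbf P)\le 2\|H\|_F^2,
\]
where $\mathbf P$ is the $n^2\times n^2$ swap operator; the inequality holds because $\mathbf P$ is orthogonal, so $\Tr(H^\top H\mathbf P)=\langle H,H\mathbf P\rangle_F\le\|H\|_F\|H\mathbf P\|_F=\|H\|_F^2$. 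Hypercontractive concentration of this degree-$4$ polynomial of $g$ then gives $\|M\|_F\lesssim\|H\|_F\,\polylog n$ with probability $1-O(n^{-8})$, whence $|C|/n\lesssim\|H\|_F\log n/n$.

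It remains to dispose of the correction terms. Those with exactly one Gaussian factor (say $a=g$) equal, conditionally on $(\phi,\psi)$, an inner product $r^\top g$ where $r$ is obtained from $H^\top v$ by a single contraction against $\phi$ or $\psi$, so $\|r\|\le\|H^\top v\|$; the required input is the high-probability bound $\|H^\top(\phi\otimes\phi)\|\lesssim\|H\|/\sqrt n+\|H\|_F/n$, which follows from $\Expect_\phi\|H^\top(\phi\otimes\phi)\|^2\lesssim n^{-1}\|H\|^2+n^{-2}\|H\|_F^2$---again using $\Tr(HH^\top\mathbf P)\le\|H\|_F^2$---together with hypercontractive concentration, and likewise for $\|H^\top(\phi\otimes\psi)\|$. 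Then $|r^\top g|\lesssim(\|H\|/\sqrt n+\|H\|_F/n)\sqrt{\log n}$, which stays within the claimed bound after multiplying by the $\polylog n$ prefactor. The terms with no Gaussian factor are degree-$\le4$ polynomials of $(\phi,\psi)$---such as $(\phi\otimes\phi)^\top H(\phi\otimes\phi)$ and $(\phi\otimes\phi)^\top H(\phi\otimes\psi)$---whose means are $\lesssim(\|H\|+\|H\|_F)/n$ by the fourth-moment formulas for uniform spherical vectors and whose fluctuations are $\lesssim\|H\|_F\,\polylog n/n^2$ by Theorem~\ref{thm:hyper}, hence also dominated. Taking a union bound over the $O(1)$ terms and the two choices of $v$ finishes the proof.

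The step I expect to be the main obstacle is the control of $\|M\|_F=\big\|H(g\otimes g-\vecc(\bI_n))\big\|$---and the analogous $\|H^\top(\phi\otimes\phi)\|$---since the naive estimate only gives $\|H\|_F$ bounded by $\|H\|$ times the ambient dimension, which would leave a spurious $\|H\|\log n$ in place of the stated $\|H\|$. The gain rests entirely on the cancellation expressed by the swap-operator identity, combined with hypercontractivity to upgrade the resulting second-moment bounds to high-probability ones. A secondary, purely bookkeeping issue is translating the Kronecker bilinear forms $v^\top H(a\otimes b)$ into ordinary matrix--vector products, for which the identities in Appendix~\ref{appendix:kronecker} suffice.
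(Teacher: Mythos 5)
Your proposal follows essentially the same strategy as the paper's proof: Gaussianize $(\phi,\psi,\xi)$ via the same orthogonalization, isolate the ``trace'' contribution $\vecc(\bI_n)^\top H\vecc(\bI_n)\le n\|H\|$ as the sole source of the $\|H\|$ term, and control the remaining chaos by Hanson--Wright after bounding the Frobenius norm of a contracted matrix. Your swap-operator identity $\E[(g\otimes g)(g\otimes g)^\top]=\vecc(\bI_n)\vecc(\bI_n)^\top+\bI_{n^2}+\mathbf{P}$ is exactly the Wick-formula cancellation carried out in the paper's Lemma~\ref{lemma:Hxx} (where the swap is called $Q$), and you correctly identify it as the crux. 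The organization differs --- you expand $\xi\otimes\xi$ first and decompose the resulting Gaussian bilinear form into chaoses, whereas the paper first forms $\matS$ with $\vecc(\matS)^\top=(\phi\otimes\phi)^\top H$, bounds $\|\matS\|_F$ and $\Tr\matS$, and only then Gaussianizes $\xi$ and applies Hanson--Wright to $\xi^\top\matS\xi$ --- but the underlying estimates coincide.

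Two caveats. First, at the step where you upgrade $\E_g\|H(g\otimes g-\vecc(\bI_n))\|^2\le 2\|H\|_F^2$ (and likewise $\E_\phi\|H^\top(\phi\otimes\phi)\|^2$) to a high-probability statement, you invoke hypercontractivity for a degree-$4$ polynomial without supplying the variance bound that Theorem~\ref{thm:hyper} requires; this is not automatic, since the variance contains contributions of order $n\|H\|^4$ that are not dominated by $\|H\|_F^4$ for arbitrary $H$. (A weaker bound with an additive $n^{3/4}\|H\|\,\polylog n$ still suffices after the division by $n$, but you need to actually produce it.) The paper instead truncates to the ball $\{\|g\|_2^2\le 2n\}$, bounds the Lipschitz constant of the quartic form there, and applies Gaussian concentration via Lemma~\ref{lmm:lipext}, obtaining $\|H(g\otimes g)\|_2\lesssim\|H\|_F+(n^3\log n)^{1/4}\|H\|$ with no extra logarithm on the $\|H\|_F$ part. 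Second, hypercontractivity at confidence $1-n^{-8}$ costs $(\log n)^{d/2}$, so your route as written yields $\|M\|_F\lesssim\|H\|_F(\log n)^2$ and hence an extra power of $\log n$ multiplying the $\|H\|_F\frac{\log n}{n}$ term (the same issue arises in your one-Gaussian correction terms). Downstream this would only degrade the tolerance from $\sigma\lesssim 1/\log n$ to $\sigma\lesssim 1/\polylog n$, but it does not prove the lemma with the stated single factor of $\log n$; switching to the Lipschitz-concentration argument at these two spots repairs both issues. Also note that $\phi=z/\|z\|_2$ is not itself a polynomial in $z$, so your appeal to Theorem~\ref{thm:hyper} for the no-Gaussian correction terms requires clearing the $\|z\|_2^{-4}$ denominator first; this is routine but should be said.
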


\begin{lemma}\label{lem:Hbounds}
    In the setting of Lemma \ref{lem:noise}, for some constant $C>0$ and for
    $H$ defined by (\ref{eq:H}), with probability at least $1-n^{-7}$,
    \[\|H\| \leq \frac{C\sigma}{\eta^2}
    \quad \text{ and } \quad
    \|H\|_F \leq \frac{C\sigma n}{\eta}\left(1+\frac{\sigma}{\eta}\right).\]
\end{lemma}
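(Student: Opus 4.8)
The plan is to phrase everything through resolvents. Write $R=(\cL-\im\eta\,\bI_{n^2})^{-1}$ and $R_*=(\cL_*-\im\eta\,\bI_{n^2})^{-1}$, so $H=R-R_*$, and set $\Delta\defn\cL_*-\cL=\sigma\,\tZ\otimes\bI_n$. The structural fact to exploit is that $\cL$ and $\cL_*$ are \emph{both} real symmetric, being differences of Kronecker products $\bI_n\otimes(\cdot)$ and $(\cdot)\otimes\bI_n$ of the symmetric matrices $\Lambda$, $\Lambda+\sigma\tZ$, $\bI_n$; in particular each has real spectrum, so $\|R\|\le1/\eta$ and $\|R_*\|\le1/\eta$, and $\cL_*$ is in fact diagonal in the standard Kronecker basis indexed by pairs $(j,i)$, with entries $\lambda_i-\lambda_j$. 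Since $\tZ=U^\top ZU$ is orthogonally conjugate to $Z\sim\GOE$, Proposition~\ref{prop:wignerproperties}(c) gives $\|\Delta\|=\sigma\|\tZ\|=\sigma\|Z\|\le C\sigma$ on an event of probability at least $1-e^{-cn}$; we work on this event throughout.

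For the operator-norm bound, the resolvent identity gives $H=R_*\Delta R$, whence $\|H\|\le\|R_*\|\,\|\Delta\|\,\|R\|\le C\sigma/\eta^2$.

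For the Frobenius bound, iterate the resolvent identity once: from $H=R_*\Delta R$ and $R=R_*+H$ one gets the exact identity $H=R_*\Delta R_*+R_*\Delta R_*\Delta R$. The second term is handled crudely, $\|R_*\Delta R_*\Delta R\|_F\le\|R_*\Delta R_*\|_F\,\|\Delta\|\,\|R\|\le(C\sigma/\eta)\,\|R_*\Delta R_*\|_F$, so it suffices to prove $\|R_*\Delta R_*\|_F\lesssim\sigma n/\eta$; then $\|H\|_F\lesssim\frac{\sigma n}{\eta}\bigl(1+\frac{\sigma}{\eta}\bigr)$ as claimed. Here one must use the Kronecker structure of $\Delta$: the blunt bound $\|R_*\|\,\|\Delta\|\,\|R_*\|_F$ only yields $\sigma n/\eta^{3/2}$ (since $\|R_*\|_F^2=\sum_{i,j}L_{ij}\le Cn^2/\eta$ by Lemma~\ref{lem:l-fro}), which is too weak. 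Instead, writing $\Delta_{(j,i),(\ell,k)}=\sigma\tZ_{j\ell}\delta_{ik}$ and using that $R_*$ is diagonal, a direct computation gives
\[
\|R_*\Delta R_*\|_F^2=\sigma^2\sum_{j,\ell=1}^n\tZ_{j\ell}^2\,a_{j\ell},\qquad a_{j\ell}\defn\sum_{i=1}^nL_{ij}L_{i\ell},\quad L_{ij}=\frac{1}{(\lambda_i-\lambda_j)^2+\eta^2}.
\]
By Proposition~\ref{prop:wignerproperties}(a), $\tZ\sim\GOE$ is independent of $\Lambda$ with $\E\tZ_{j\ell}^2\le2/n$, and Lemma~\ref{lem:l-fro} gives $a_{j\ell}\le\eta^{-2}\sum_iL_{ij}\le Cn/\eta^3$ and $\sum_{j,\ell}a_{j\ell}=\sum_i\bigl(\sum_jL_{ij}\bigr)^2\le Cn^3/\eta^2$; hence the conditional mean of the above is at most $C\sigma^2n^2/\eta^2$. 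To upgrade this to a high-probability bound, observe that $\sum_{j,\ell}\tZ_{j\ell}^2a_{j\ell}$ is a quadratic form in the independent Gaussians $\{\tZ_{j\ell}:j\le\ell\}$ whose diagonal coefficient matrix $D$ satisfies $\|D\|\le\tfrac2n\max a_{j\ell}\le C/\eta^3$ and $\|D\|_F^2\le\tfrac4{n^2}(\max a_{j\ell})\sum_{j,\ell} a_{j\ell}\le Cn^2/\eta^5$; the Hanson--Wright inequality (cf.\ Lemma~\ref{lem:hw} and Theorem~\ref{thm:hyper}) then gives $\|R_*\Delta R_*\|_F\le C\sigma n/\eta$ off an event of probability at most $2\exp(-cn^2\eta)\le n^{-10}$, using $\eta\ge n^{-0.1}$. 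A union bound over this event, the event of Lemma~\ref{lem:l-fro}, and $\{\|Z\|\le C\}$ yields the lemma with probability at least $1-n^{-7}$.

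The main obstacle is the Frobenius estimate: extracting the sharp $\sigma n/\eta$ scaling of $\|R_*\Delta R_*\|_F$ rather than the lossy $\sigma n/\eta^{3/2}$. This hinges on the ``decoupling'' provided by $\Delta=\sigma\tZ\otimes\bI_n$ --- the Kronecker factor $\bI_n$ collapses one of the two resolvent sums via $\delta_{ik}$ --- combined with the row-sum control of $L$ from Lemma~\ref{lem:l-fro}, and finally a Hanson--Wright-type concentration (not merely a first-moment bound) to obtain a polynomially small failure probability while dealing with the $\tZ$-dependence of $H$ through the resolvent identity.
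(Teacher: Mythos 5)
Your proposal is correct and follows essentially the same route as the paper: the resolvent identity for the operator-norm bound, one further iteration of the identity to reduce the Frobenius bound to $\|R_*\Delta R_*\|_F$, and then the same exploitation of the diagonality of $R_*$ and the Kronecker factor $\bI_n$ in $\Delta$ to reduce to the weighted sum $\sigma^2\sum_{j,\ell}\tZ_{j\ell}^2(L^2)_{j\ell}$, controlled via the row-sum estimates of Lemma~\ref{lem:l-fro} and Gaussian concentration. The only (immaterial) difference is in the last concentration step: the paper desymmetrizes $\tZ$ and applies the diagonal chi-squared bound of Lemma~\ref{lem:g-norm} at deviation scale $\log n$, whereas you apply Hanson--Wright at deviation scale of the mean itself; both give the claimed bound with ample probability.
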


Lemma \ref{lem:noise} follows immediately from these results.
Indeed, applying Lemma \ref{lem:noise-conc} conditional on $H$, followed by
the estimates for $H$ in Lemma \ref{lem:Hbounds}, we get for both
$(k,\ell)=(1,1)$ and $(2,1)$ that
\[|X_{k\ell}-(X_*)_{k\ell}|
<\frac{C}{\eta}\left(\frac{\sigma}{\eta^2}
+\frac{\sigma \log n}{\eta}\left(1+\frac{\sigma}{\eta}\right)\right)\]
with probability at least $1-2n^{-7}$. By symmetry,
the same result also holds for all pairs $(k,\ell)$, and Lemma \ref{lem:noise}
follows from a union bound over $(k,\ell)$.

\subsubsection{Proof of Lemma~\ref{lem:noise-conc}}

Introduce $\matS,\mattS \in \C^{n \times n}$ such that
\[\vectorize(\matS)^\top=(\phi \otimes \phi)^\top H \quad \text{ and } \quad
\vectorize(\mattS)^\top=(\phi \otimes \psi)^\top H.\]
Then the quantities to be bounded are 
\begin{equation}\label{eq:quadform}
    (\phi \otimes \phi)^\top H (\xi \otimes \xi)=\xi^\top \matS \xi
\quad \text{ and } \quad (\phi \otimes \psi)^\top H(\xi \otimes \xi)
=\xi^\top \mattS \xi.
\end{equation}
We bound these in three steps: First, we bound
$\|\matS\|_F$ and $\|\mattS\|_F$. Second, we bound $|\Tr \matS|$ and $|\Tr \mattS|$.
Finally, we make a Gaussian approximation
for $\xi$ and apply the Hanson-Wright inequality to bound the quantities in \eqref{eq:quadform}.

\paragraph{Estimates for $\|\matS\|_F$ and $\|\mattS\|_F$.}

We show that with probability at least $1-5n^{-8}$,
\begin{equation}\label{eq:Kfro}
\|\matS\|_F \vee \|\mattS\|_F
\lesssim \frac{1}{n}\|H\|_F+\left(\frac{\log n}{n}\right)^{1/4}\|H\|.
\end{equation}
Note that $H^\top = H$, and 
\[\|\matS\|_F=\|\Htop (\phi \otimes \phi)\|_2 \quad
\text{ and } \quad \|\mattS\|_F=\|\Htop(\phi \otimes \psi)\|_2.\]
We give the argument for Gaussian vectors,
and then apply a Gaussian approximation for $(\phi,\psi)$.

\begin{lemma}\label{lemma:Hxx}
Let $x,y \in \R^n$ be independent with $\sN(0,1)$ entries. Then for a constant
$C>0$ and any deterministic $H \in \C^{n^2 \times n^2}$,
with probability at least $1-2n^{-10}$,
    \[\|H (x \otimes x)\|_2 \vee \|H(x \otimes y)\|_2
    \leq C \big[ \|H\|_F+(n^3\log n)^{1/4} \|H\| \big].\]
\end{lemma}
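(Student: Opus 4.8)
The plan is to bound $\|H(x\otimes x)\|_2$ and $\|H(x\otimes y)\|_2$ by writing each squared norm as a quadratic form in the Gaussian vectors and applying Hanson--Wright together with a Gaussian decoupling/hypercontractivity estimate. Concretely, $\|H(x\otimes y)\|_2^2 = (x\otimes y)^\top H^*H (x\otimes y)$; since $x,y$ are independent standard Gaussians, conditioning on $y$ makes this a quadratic form in $x$ with a coefficient matrix $M(y)$ whose entries are linear in $y\otimes y$, and one can iterate Hanson--Wright (first in $x$, then in $y$) to get concentration around its mean $\E\|H(x\otimes y)\|_2^2 = \Tr(H^*H) = \|H\|_F^2$, with fluctuations controlled by $\|H\|$. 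The $x\otimes x$ case is the ``diagonal'' version of the same computation: here $(x\otimes x)^\top H^*H (x\otimes x)$ is a degree-$4$ polynomial in a single Gaussian vector $x$, whose expectation is $\|H\|_F^2$ plus lower-order terms of size $O(\|H\|_F^2/n)$ or $O(\|H\|^2)$ coming from the Isserlis/Wick pairings that collapse repeated indices, and whose concentration follows from Gaussian hypercontractivity for polynomial chaos of fixed degree (the statement labelled \texttt{Theorem~\ref{thm:hyper}} in the paper, used with $d=4$).

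In more detail, I would proceed as follows. First, reduce to the real case: replacing $H$ by its real and imaginary parts at most doubles the constant, so assume $H\in\R^{n^2\times n^2}$. Second, expand $\|H(x\otimes x)\|_2^2$ using $\E[x_ax_bx_cx_d]$ pairings; the leading term is $\|H\|_F^2$, and each of the $O(1)$ remaining contractions is bounded by either $\|H\|_F^2$ (times a $1/n$-type factor, which is harmless) or by $\Tr((H^\top H)^2)^{1/2}$-type quantities that are $\le \|H\|\,\|H\|_F \le \|H\|_F^2 + \|H\|^2 n^{3/2}$ after using $\|H\|_F\le n\|H\|$ and AM--GM. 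Third, bound the variance: $\Var$ of this degree-$4$ chaos is, by the standard Gaussian calculus, $\lesssim \|H\|^2\|H\|_F^2 + \ldots \lesssim \|H\|^2 (n^3\log n)^{1/2}\cdot(\text{polylog})$ after again trading $\|H\|_F$ against $n\|H\|$. Fourth, apply \texttt{Theorem~\ref{thm:hyper}} with $d=4$ to conclude that $\big|\|H(x\otimes x)\|_2^2 - \|H\|_F^2\big| \lesssim \|H\|^2\sqrt{n^3\log n}$ with probability $\ge 1-n^{-10}$, and take a square root using $\sqrt{a+b}\le\sqrt a+\sqrt b$ to get the claimed $\|H\|_F + (n^3\log n)^{1/4}\|H\|$. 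Fifth, handle $\|H(x\otimes y)\|_2$ by the analogous but genuinely simpler two-stage argument (Hanson--Wright in $x$ given $y$, then in $y$), which avoids the diagonal contractions entirely and yields the same bound; finally union bound the two events.

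The main obstacle I anticipate is \emph{bookkeeping the lower-order contractions correctly in the $x\otimes x$ case}: because $x$ is repeated in both tensor slots, the Wick expansion of $(x\otimes x)^\top H^\top H (x\otimes x)$ produces several families of index-identifications (e.g.\ terms like $\sum_{i,j}(H^\top H)_{(ii),(jj)}$, $\sum_{i,j}(H^\top H)_{(ij),(ij)}$, $\sum_{i,j}(H^\top H)_{(ij),(ji)}$), and one must check that each is dominated by $\|H\|_F^2 + \|H\|^2 n^{3/2}$ up to constants rather than being genuinely larger. This requires only Cauchy--Schwarz and the two operator/Frobenius norm inequalities, but it is the place where the bound could conceivably be lossy, so I would verify it carefully. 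The variance computation for \texttt{Theorem~\ref{thm:hyper}} has the same flavour and the same (manageable) difficulty. Everything else — the Gaussian approximation step that transfers the bound from $(x,y)$ to $(\phi,\psi)$, which is done downstream in the proof of \eqref{eq:Kfro} rather than here — is routine given the concentration of $\|x\|_2$ around $\sqrt n$ recorded in Lemma~\ref{lem:g-norm}.
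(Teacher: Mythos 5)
Your proposal is correct in outline and shares with the paper the key first step: writing $\|H(x\otimes x)\|_2^2=(x\otimes x)^\top M(x\otimes x)$ with $M=H^*H$ and computing the mean by the Wick formula, which yields exactly the three contractions you list ($\sum_{ij}M_{ii,jj}+M_{ij,ij}+M_{ij,ji}\le n\|H\|^2+2\|H\|_F^2$; note the first of these can be as large as $n\|H\|^2$, not $O(\|H\|^2)$, but it is still absorbed by $(n^3\log n)^{1/2}\|H\|^2$). Where you genuinely diverge is the concentration mechanism. The paper does not use hypercontractivity here at all: it bounds the gradient of $F(x)=(x\otimes x)^\top M(x\otimes x)$ on the ball $\{\|x\|_2^2\le 2n\}$, obtaining a restricted Lipschitz constant of order $n^{3/2}\|H\|^2$, and then invokes the Lipschitz-extension concentration bound (Lemma~\ref{lmm:lipext}) with $t\asymp\sqrt{\log n}$ to get a deviation of $\sqrt{n^3\log n}\,\|H\|^2$ directly, with no variance computation and no AM--GM trading. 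The same gradient bound handles $H(x\otimes y)$ on the product ball, so the mixed case costs nothing extra. Your route buys a more ``algebraic'' argument that avoids the Lipschitz-extension lemma, at the price of the chaos bookkeeping you correctly flag as the main risk.

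Two caveats on your version. First, the order of operations in the hypercontractivity step matters: with $d=4$ and target probability $n^{-10}$, the deviation is $\sqrt{\Var}\,(\log n)^2$, and $\Var\lesssim n\|M\|_F^2\le n\|H\|^2\|H\|_F^2$ (the factor $n$ coming from the degree-$2$ partial-trace contractions). If you first apply $\|H\|_F\le n\|H\|$ you get $n^{3/2}(\log n)^2\|H\|^2$, which exceeds $\sqrt{n^3\log n}\,\|H\|^2$ by $(\log n)^{3/2}$; you must instead apply AM--GM to $\sqrt{n}\,\|H\|\|H\|_F(\log n)^2$ to split it as $\|H\|_F^2+n(\log n)^4\|H\|^2\lesssim\|H\|_F^2+\sqrt{n^3\log n}\,\|H\|^2$. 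This is recoverable but is exactly the lossy spot you anticipated. Second, the two-stage Hanson--Wright for $H(x\otimes y)$ is not as clean as advertised: after conditioning on $y$ the coefficient matrix $N(y)$ has a random Frobenius norm, and the crude bound $\|N(y)\|_F\le\|y\|_2^2\|M\|_F\lesssim n\|H\|\|H\|_F$ is too large by the same kind of logarithmic margin, so $\|N(y)\|_F$ needs its own concentration argument. It is simpler to treat $(x,y)$ as a single $2n$-dimensional Gaussian and apply the degree-$4$ hypercontractivity bound exactly as in the $x\otimes x$ case (the mean is now exactly $\|H\|_F^2$ since the diagonal contractions vanish).
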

\begin{proof}
Let $M=H^*H$. Then 
$\|H (x \otimes x)\|^2=(x \otimes x)^\top M (x \otimes x)$.
We bound the mean and then apply Gaussian concentration of measure. Recall
the Wick formula  for Gaussian expectations: For any numbers $a_{ijk\ell} \in \C$,
\[\sum_{i,j,k,\ell} \E[x_ix_jx_kx_\ell]a_{ijk\ell}
=\sum_{i,j,k,\ell} \Big(\1\{i=j,k=l\}+\1\{i=k,j=l\}+\1\{i=l,j=k\}\Big)
a_{ijk\ell}.\]
Denoting the entry $M_{ij,k\ell}=(\coord_i \otimes \coord_j)^\top M(\coord_k
\otimes \coord_\ell)$ and applying this to $a_{ijk\ell}=M_{ij,k\ell}$, we get
\[\E[(x \otimes x)^\top M(x \otimes x)]
=\sum_{ij} (M_{ii,jj}+M_{ij,ij}+M_{ij,ji}).\]
Introduce the involution $Q \in \R^{n^2 \times n^2}$ defined by $Q(\coord_i 
\otimes \coord_j)=\coord_j \otimes \coord_i$, and note also that
$\sum_i (\coord_i \otimes \coord_i)=\vectorize(\bI_n)$. Then the above yields
\begin{align}
\Big|\E[(x \otimes x)^\top M(x \otimes x)]\Big|
&=\Big|\vectorize(\bI_n)^\top M \vectorize(\bI_n)+\Tr M+\Tr MQ
    \Big|\nonumber\\
&\leq \|M\|\cdot \|\vectorize(\bI_n)\|_2^2+\|H\|_F^2+\|H\|_F\|HQ\|_F
=n\|H\|^2+2\|H\|_F^2.\label{eq:meanxxxx}
\end{align}

To establish the concentration of $F(x) \triangleq (x \otimes x)^\top M(x \otimes x)$ around its
mean, we aim to apply \prettyref{lmm:lipext} by bounding the Lipschitz constant of $F$ on the ball
\[B=\{x \in \R^n:\|x\|^2 \leq 2n\}.\]
Note that for each $i\in[n]$,
\begin{align}
&\frac{\partial}{\partial x_i} [(x \otimes x)^\top M(x \otimes x)]\nonumber\\
&=(\coord_i \otimes x)^\top M(x \otimes x)
+(x \otimes \coord_i)^\top M(x \otimes x)
+(x \otimes x)^\top M(\coord_i \otimes x)
+(x \otimes x)^\top M(x \otimes \coord_i).\label{eq:partialxi}
\end{align}
For $x \in B$, using $\sum_i \coord_i\coord_i^\top=\bI_n$ and $M=M^*$, we have
\begin{align*}
    \sum_{i=1}^n |(\coord_i \otimes x)^\top M(x \otimes x)|^2
    &=\sum_{i=1}^n (x \otimes x)^\top M(\coord_i \otimes x)(\coord_i \otimes x)^\top M(x \otimes x)\\
&=(x \otimes x)^\top M(\bI_n \otimes xx^\top)M(x \otimes x)\\
&\leq \|x \otimes x\|_2^2 \cdot \|M\|^2 \cdot \|I \otimes xx^\top\| = \|x\|_2^6 \|M\|^2\\
&\leq (2n)^3\|M\|^2=8n^3\|H\|^4.
\end{align*}
The same bound holds for the other three terms in (\ref{eq:partialxi}). Thus
for all $x \in B$ and some constant $C_0>0$, $\|\nabla F(x) \leq C_0n^3\|H\|^4$.
Thus $x \mapsto (x \otimes x)^\top M(x \otimes
x)$ is $L_0$-Lipschitz on $B$, where $L_0 \triangleq \sqrt{C_0n^3}\|H\|^2$.
Finally, note that $F(0)=0$ and $\prob{x \notin B} \leq e^{-cn}$ by the $\chi^2$ tail bound of
Lemma \ref{lem:g-norm}. 
Applying \prettyref{lmm:lipext} with $t\asymp \sqrt{\log n}$,
 we conclude that 
\begin{align*}
|(x \otimes x)^\top M(x \otimes x)|&\leq 
|\E[(x \otimes x)^\top M (x \otimes x)]|+
Cn^2e^{-cn/2}\|H\|^2+C'\sqrt{n^3\log n}\|H\|^2 \\
&\overset{\prettyref{eq:meanxxxx}}{\leq} C''(\|H\|_F^2+\sqrt{n^3 \log n} \|H\|^2)
\end{align*}
holds with probability at least $1-n^{-10}$, where $C,C',C''$ are absolute constants. 
Taking the square root gives the desired result for $\|H(x \otimes x)\|_2$.

The bound for $H(x \otimes y)$ is similar: We have
\[\E[\|H(x \otimes y)\|_2^2]
=\E[(x \otimes y)^\top M(x \otimes y)]=\sum_{ij} M_{ij,ij}
=\Tr M=\|H\|_F^2.\]
On $B^2=\{(x,y) \in \R^{2n}:\|x\|_2^2 \leq 2n,\|y\|_2^2 \leq 2n\}$, we obtain
$\|\nabla_{x,y} [(x \otimes y)^\top M(x \otimes y)]\|^2
\leq L_0^2 \equiv Cn^3\|H\|^2$ as above.
Applying \prettyref{lmm:lipext} as above yields the desired bound on $\|H(x \otimes y)\|_2$.
\end{proof}

We now apply this and a Gaussian approximation to show (\ref{eq:Kfro}).
For $\|\matS\|_F$, let $x$ be a standard Gaussian vector in $\R^n$, so that
$x/\|x\|_2$ is equal to $\phi$ in law. Lemma~\ref{lem:g-norm} shows with
probability $1-n^{-10}$ that $\|x\|_2^2 \geq n/2$, so
\[\|\matS\|_F=\|\Htop(\phi \otimes \phi)\|_2 \leq \frac{2}{n}\|\Htop
(x \otimes x)\|_2,\]
and the result follows from Lemma~\ref{lemma:Hxx}.
For $\|\mattS\|_F$,
let $x,y$ be independent standard Gaussian vectors in $\R^n$. Since $\phi^\top
\psi = 0$ and $(\phi,\psi)$ is rotationally invariant in law, this pair is equal
in law to
$$
\Big( \frac{x}{\|x\|_2}, \frac{ y - (y^\top x / \|x\|_2^2) x }{ \|y - (y^\top x
/ \|x\|_2^2) x \|_2 } \Big).
$$
Standard concentration inequalities of Lemmas~\ref{lem:g-concentrate}
and~\ref{lem:g-norm} then yield 
\begin{equation}\label{eq:tKfro}
    \|\mattS\|_F=\|\Htop(\phi \otimes \psi)\|_2
\le \frac{2}{n}\|\Htop(x \otimes y)\|_2
+\frac{5 \sqrt{\log n} }{n^{3/2}}\|\Htop(x \otimes x)\|_2
\end{equation}
with probability at least $1 - 4 n^{-8}$, so the result also follows from Lemma~\ref{lemma:Hxx}.

\paragraph{Estimates for $\Tr \matS$ and $\Tr \mattS$.}

Next, we show that with probability at least $1-5n^{-8}$,
\begin{equation}\label{eq:trK}
    |\Tr \matS| \vee |\Tr \mattS| \le 2\|H\|.
\end{equation}
Note that
\[\Tr \matS = \Tr \matS\bI_n
= ( \phi  \otimes \phi )^\top H \vectorize(\bI_n),\]
and similarly
\[\Tr \mattS= ( \phi  \otimes \psi )^\top H \vectorize(\bI_n).\]

We apply a Gaussian approximation.
To bound $\Tr \matS$, let $x$ be a standard Gaussian vector, so $x/\|x\|_2$ is
equal to $\phi$ in law. Define $G \in \C^{n \times n}$ by
$\vectorize(G)=H\vectorize(\bI_n)$. Then
it follows from Lemmas~\ref{lem:g-norm} and~\ref{lem:hw} that
\begin{align}
|\Tr \matS|=|\phi^\top G \phi|=\frac { | x^\top G x | }{\|x\|_2^2}
\le \frac {1}{0.9n} \big( |\Tr G| + C\|G\|_F \log n \big)  \label{eq:x-x}
\end{align}
with probability at least $1 - n^{-10}$. We apply
$$
|\Tr G| = |\Tr G \bI_n|
= | \vectorize(\bI_n)^\top H \vectorize(\bI_n) | \le \|H\| \|\bI_n\|_F^2 = n \|H\| ,
$$
and
\begin{align*}
\|G\|_F = \|H \vectorize(\bI_n)\|_2 \le \|H\| \| \bI_n \|_F = \sqrt{n} \, \|H\|  .
\end{align*} 
Combining these yields $|\Tr \matS| \leq 2\|H\|$ for large $n$.
For $\Tr \mattS$, introducing independent standard Gaussian vectors $x,y$,
the same arguments as leading to (\ref{eq:tKfro}) give
\begin{align*}
|(\phi \otimes \psi)^\top H\vectorize(\bI_n)|
\leq & ~ \frac{2}{n} \big| (x \otimes y)^\top H \vectorize(\bI_n)  \big| + \frac{5
\sqrt{\log n} }{n^{3/2}} \big| (x \otimes x)^\top H \vectorize(\bI_n)  \big| \\
= & ~ \frac{2}{n} | x^\top G y|  + \frac{5
\sqrt{\log n} }{n^{3/2}} | x^\top G x| 
\end{align*}
with probability at least $1 - 4 n^{-8}$. Then $|\Tr \mattS| \leq 2\|H\|$ follows
from the same arguments as above by invoking \prettyref{lem:hw}.

\paragraph{Quadratic form bounds.}

We now use (\ref{eq:Kfro}) and (\ref{eq:trK}) to bound the quadratic forms
(\ref{eq:quadform}) in $\xi$.
Note that $\xi$ is dependent on $(\varphi,\psi)$ and hence on $\matS$ and $\mattS$, thus tools such as the Hanson-Wright inequality is not directly applicable;
nevertheless, thanks to the uniformity of $U$ on the orthogonal group, $(\xi,\varphi,\psi)=U^T (\ones,\coord_1,\coord_2)$ are only weakly dependent and well approximated by Gaussians. Below we make this intuition precise.

Consider $\xi^\top \mattS \xi$.
Let $z$ be a standard Gaussian vector in $\R^n$ independent of
$(\phi,\psi)$ (and hence of $\mattS$), and recall the Gaussian representation \prettyref{eq:txi} so that
$(\phi, \psi, \tilde{\xi}) \eqdistr (\phi, \psi, \xi)$. 
Write \prettyref{eq:txi} as
 $\tilde{\xi} = \alpha z + \tilde{\phi}$,
where 
$\tilde{\phi} = (1-\alpha (\phi^\top z) )\phi + (1-\alpha (\psi^\top z) )\psi$  is a linear combination of $\phi$ and $\psi$.
By concentration inequalities for $\|z\|_2$ and $\phi^\top z$ in
Lemmas~\ref{lem:g-concentrate} and~\ref{lem:g-norm}, we have
$0.9 \le \alpha \le 1.1$
 and $\| \tilde{\phi} \|_2 \le 8 \sqrt{\log n}$ with
probability $1 - 4 n^{-8}$. On this event,
    \[|\xi^\top \mattS \xi|
    \le 1.21 |z^\top \mattS z|+|\tilde{\phi}^\top \mattS \tilde{\phi}|
    +1.1|z^\top \mattS \tilde{\phi}|
    +1.1|\tilde{\phi}^\top \mattS z|.\]
We bound these four terms separately conditional on $(\phi,\psi)$.

For the first term, applying
the Hanson-Wright inequality of Lemma~\ref{lem:hw}, we have
\[|z^\top \mattS z| \le | \Tr \mattS | + C\|\mattS\|_F \log n\]
with probability at least $1 - n^{-10}$.
For the second term, applying $\|\tilde{\phi}\|_2 \leq 8\sqrt{\log n}$,
\[|\tilde{\phi}^\top \mattS\tilde{\phi}|
    \leq \|\mattS\| \| \tilde{\phi}\|_2^2 \le 64 \|\mattS\|_F \log n.\]
For the third term, 
\[|\tilde{\phi}^\top \mattS z| \leq \|\tilde{\phi}\|_2 \|\mattS z\|_2.\]
Applying again Lemma \ref{lem:hw}, with probability at least $1-n^{-10}$,
\[\|\mattS z\|_2^2 \leq \Tr \mattS^*\mattS+C\|\mattS^*\mattS\|_F \log n
\leq (C+1)\|\mattS\|_F^2 \log n,\]
so
\[|\tilde{\phi}^\top \mattS z| \lesssim \|\mattS\|_F \log n.\]
The fourth term is bounded similarly to the third, and combining these gives
$|\xi^\top \mattS \xi| \lesssim |\Tr \mattS|+\|\mattS\|_F\log n$
with probability at least  $1-6n^{-8}$.
Applying (\ref{eq:Kfro}) and (\ref{eq:trK}), we get
\[|(\phi \otimes \psi)^\top H(\xi \otimes \xi)|
=|\xi^\top \mattS \xi| \lesssim \|H\|+\frac{\log n}{n}\|H\|_F\]
as desired. 

The Gaussian approximation argument for $(\phi \otimes \phi)^\top H
(\xi \otimes \xi)=\xi^\top \matS \xi$ is simpler  and omitted for brevity. This concludes the proof of
Lemma \ref{lem:noise-conc}.

\subsubsection{Proof of Lemma \ref{lem:Hbounds}}

Recalling the definition of $H$ in (\ref{eq:H}) and
applying 
\begin{equation}
A^{-1}-B^{-1}=A^{-1}(B-A)B^{-1},
\label{eq:ABinv}
\end{equation}
 we get
\begin{equation}\label{eq:minusH}
    -H=(\cL_*-\im\eta\,\bI_{n^2})^{-1}(\sigma \tZ \otimes \bI_n)(\cL-\im\eta\,\bI_{n^2})^{-1}.
\end{equation}
As $\cL_*-\im\eta\,\bI_{n^2}$ is diagonal with each entry at least $\eta$ in magnitude, we have the deterministic bound
$\|(\cL_*-\im\eta\,\bI_{n^2})^{-1}\| \leq 1/\eta$, and similarly
$\|(\cL-\im\eta\,\bI_{n^2})^{-1}\| \leq 1/\eta$. Applying Proposition
\ref{prop:wignerproperties}(c), $\|\tZ\| \leq C$ with probability at least $1-n^{-10}$
for a constant $C>0$. On this event, $\|H\| \leq C\sigma/\eta^2$.

To bound $\|H\|_F$, let us apply \prettyref{eq:ABinv} again to $(\cL-\im\eta\,\bI_{n^2})^{-1}$ in (\ref{eq:minusH}),
to write
    \[-H=(\cL_*-\im\eta \bI)^{-1}(\sigma \tZ \otimes \bI)(\cL_*-\im\eta
    \bI)^{-1}\Big(\bI-(\sigma \tZ \otimes \bI)(\cL-\im\eta \bI)^{-1}\Big).\]
    Applying 
		\begin{equation}
		\|AB\|_F \leq \|A\|_F\|B\|,
		\label{eq:ABF}
		\end{equation}
		 we get with probability at least $1-n^{-10}$
    that
\begin{equation}\label{eq:Hfro}
    \|H\|_F \leq \|(\cL_*-\im\eta \bI)^{-1}(\sigma \tZ \otimes \bI)
(\cL_*-\im\eta \bI)^{-1}\|_F (1+C\sigma/\eta).
\end{equation}

Note that here, $\cL_*$ is diagonal, and $\tZ$ is independent of $\cL_*$. 
Let $w \in \C^{n^2}$ consist of the diagonal entries of $(\cL_* - \im \eta \bI)^{-1}$, 
indexed by the pair $(i,k) \in [n]^2$, i.e., $w_{ik} = \frac{1}{\lambda_i-\lambda_k-\iu \eta}$.
Let us also
desymmetrize $\tZ$ and write
\[\tZ=\frac{1}{\sqrt{2n}}(W+W^\top),\]
where $W \in \R^{n \times n}$ has $n^2$ independent $\sN(0,1)$ entries. Then
\begin{align*}
\|(\cL_*-\im\eta \bI)^{-1}(\sigma \tZ \otimes \bI)(\cL_*-\im\eta \bI)^{-1}\|_F^2
&=\frac{\sigma^2}{2n}\left\|(\cL_*-\im\eta \bI)^{-1}
    \big((W+W^\top) \otimes \bI\big)(\cL_*-\im\eta \bI)^{-1}\right\|_F^2\\
&=\frac{\sigma^2}{2n}
\sum_{i,j,k=1}^n (W_{ij}+W_{ji})^2|w_{ik}|^2|w_{jk}|^2.
\end{align*}
Recall the symmetric matrix $L \in \R^{n \times n}$ defined by (\ref{eq:l-def}).
We have
\[\sum_{k=1}^n |w_{ik}|^2|w_{jk}|^2
=\sum_{k=1}^n \frac{1}{(\lambda_i-\lambda_k)^2+\eta^2}
\cdot \frac{1}{(\lambda_j-\lambda_k)^2+\eta^2}
=\sum_{k=1}^n L_{ik}L_{jk}=(L^2)_{ij}.\]
Introducing $v=\vectorize(L^2)\in\reals_+^{n^2}$ indexed by $(i,j)$, and applying Lemma \ref{lem:g-norm} conditional on $v$, 
we get
\begin{align}
    \|(\cL_*-\im\eta \bI)^{-1}(\sigma \tZ \otimes \bI)(\cL_*-\im\eta
    \bI)^{-1}\|_F^2 \leq 
\frac{2\sigma^2}{n} \sum_{i,j=1}^n W_{ij}^2v_{ij}
    \leq \frac{2\sigma^2}{n}\left(\|v\|_1+C\|v\|_2 \log
    n\right)\label{eq:LZLfro}
\end{align}
with probability at least $1-n^{-10}$.

Finally, we apply Lemma \ref{lem:l-fro} to bound $\|v\|_1$ and $\|v\|_2$.
Note that $\|v\|_1=\bone L^2 \bone=\|L\bone\|_2^2$.
Applying $\max_i (L\bone)_i \leq Cn/\eta$ from Lemma \ref{lem:l-fro}, we get
with probability at least $1-n^{-8}$ that
\[\|v\|_1 \lesssim n^3/\eta^2.\]
By \prettyref{eq:ABF}, we also have $\|v\|_2^2=\|L^2\|_F^2\leq \|L\|^2 \cdot \|L\|_F^2$.
Applying $\|L\| \leq \max_i (L\bone)_i \leq Cn/\eta$ and
$\|L\|_F^2 \leq Cn^2/\eta^3$ from Lemma \ref{lem:l-fro},
we get
\[\|v\|_2^2 \lesssim n^4/\eta^5.\]
Applying this to (\ref{eq:LZLfro}) and then back to (\ref{eq:Hfro}) yields 
\[\|H\|_F^2 \lesssim \frac{\sigma^2}{n}\left(\frac{n^3}{\eta^2}+
\frac{n^2\log n}{\eta^{5/2}}\right)\left(1+\frac{\sigma}{\eta}\right)^2
\lesssim \frac{\sigma^2n^2}{\eta^2}\left(1+\frac{\sigma}{\eta}\right)^2,\]
where the second inequality holds for $\eta>n^{-0.1}$. This is the
desired bound on $\|H\|_F$.

This concludes the proof of Lemma \ref{lem:Hbounds}, and hence of
Lemma \ref{lem:noise}.

\subsection{Proof for the bipartite model}\label{sec:bipartiteproof}

We now prove Theorem \ref{thm:bipartite} for exact recovery in the bipartite 
model. We first show that \prettyref{alg:bigrampa} successfully recovers $\pi_1^*$.
This extends the preceding argument in the symmetric case. We then
show that the linear assignment subroutine recovers $\pi_2^*$ if $\hat{\pi}_1=\pi_1^*$.

\subsubsection{Recovery of $\pi_1^*$ by spectral method}

The argument is a minor extension of that in the Gaussian Wigner model. Let us
write 
$A=\sqrt{FF^\top}$, 
and introduce its spectral decomposition
$A=U\Lambda U^\top$. Note that $\Lambda$ and $U$ then consist of the singular values and
left singular vectors of $F$.

To analyze the noiseless solution $X_*=\hat X(A,A)$, note that all three claims of Proposition
\ref{prop:wignerproperties} hold for $A$, where the constants $C,C',c$ may
depend on $\kappa=\lim n/m$. Indeed, here, $\rho$ is the law of
$\sqrt{\lambda}$ when $\lambda$ is distributed according to the Marcenko-Pastur
distribution with density $g(x) = \frac{\sqrt{(\lambda_+-x)(x-\lambda_-)}}{2\pi \kappa x} \indc{\lambda_- \leq x \leq \lambda_+}$ and $\lambda_\pm \triangleq (1\pm \sqrt{\kappa})^2$.
Then the density of $\rho$ is
$2 x g(x^2)$ (In the case of $\kappa=1$, $\rho$ is
the quarter-circle law.)
Therefore, for any $\kappa \in (0,1]$, the density of $\rho$ is supported on 
on $[1-\sqrt{\kappa},1+\sqrt{\kappa}]$ and bounded by some $\kappa$-dependent constant $C$.  The rate of
convergence in (b) follows from \cite[Theorem 1.1]{GotTik11}, and the claims in
(a) and (c) are well-known.
Thus the proof of Lemma \ref{lem:noiseless} applies, and we obtain with
probability $1-5n^{-5}$ that
\begin{equation}\label{eq:bipartitenoiseless}
    \min_{i \in [n]} (X_*)_{ii}>\eta^2/2, \qquad
\max_{i,j \in [n]:i \neq j} (X_*)_{ij}<C\left(\frac{\sqrt{\log
n}}{\eta^{3/2}}+\frac{\log n}{\eta}\right).
\end{equation}

Next, we analyze the noisy solution $X \triangleq \hat X(A,B)$. Set
$B=\sqrt{GG^\top}$, 
and
define $H$ by (\ref{eq:H}) but replacing $\Lambda+\sigma \tZ$ in $\cL$
with the general definition
\[\cL=\bI_n \otimes \Lambda-U^\top BU \otimes \bI_n.\]
Then the representations (\ref{eq:11diff}) and (\ref{eq:12diff}) of
Lemma \ref{lem:vectorization} continue to hold. Furthermore, write the
singular value decomposition $F=U\Lambda V^\top$,
set $\tilde{W}=U^\top W$, and note that $U$ is uniformly random and
independent of $(\Lambda,V,\tilde{W})$. Then
\[U^\top BU=U^\top\sqrt{(F+\sigma W)(F+\sigma W)^\top}U
=\sqrt{(\Lambda V^\top+\sigma \tilde{W})(\Lambda V^\top+\sigma
\tilde{W})^\top}\]
which
is independent of $U$, so that $(\phi,\psi,\xi)$ is still independent of $H$.
Then applying Lemma \ref{lem:noise-conc} conditional on $H$, we get with
probability at least $1-n^{-7}$ that
\[|X_{k\ell}-(X_*)_{k\ell}| \lesssim
\frac{1}{\eta}\left(\|H\|+\|H\|_F\frac{\log n}{n}\right)\]
for both $(k,\ell)=(1,1)$ and $(2,1)$.

To conclude the proof, we need a counterpart of \prettyref{lem:Hbounds} bounding the norms of $H$.
Let us simply use the fact that $H$ has dimension $n^2 \times n^2$ to
bound $\|H\|_F \leq n\|H\|$, and apply
\[\|H\| \leq \|(\cL-\im \eta \bI)^{-1}\| \cdot \|\cL-\cL_*\| \cdot
\|(\cL_*-\im \eta \bI)^{-1}\| \leq \|\cL-\cL_*\|/\eta^2.\]
Then
\[\|\cL-\cL_*\|=
\|U^\top (A-B)U \otimes \bI_n\|
=\|A-B\| \leq
\frac{2}{\pi}\|F-G\|\left(2+\log \frac{\|F\|+\|G\|}{\|F-G\|}\right)\]
where 
the last inequality follows from \cite[Proposition 1]{kato1973continuity}.
For a constant $C>0$, this is at most $C\sigma\log(1/\sigma)$
with probability at least $1-n^{-10}$ by the analogue of
Proposition \ref{prop:wignerproperties}(c) applied to the noise
$W=(G-F)/\sigma$ in this model.
Combining these bounds yields for $(k,\ell)=(1,1)$ and $(2,1)$ that with
probability $1-2n^{-7}$,
\[|X_{k\ell}-(X_*)_{k\ell}| \leq \frac{C\sigma \log(1/\sigma)\log n}{\eta^3}.\]
This holds for all pairs $(k,\ell)$ with probability at least $1-2n^{-5}$
by a union bound.
Thus for $\eta<c/\log n$, $\sigma \log (1/\sigma)<c'\eta/\log n$, and
sufficiently small constants $c,c'>0$, we get
from (\ref{eq:bipartitenoiseless}) that
\[\min_{i \in [n]} X_{ii}>\max_{i,j \in [n]: i \neq j} X_{ij}.\]
So \prettyref{alg:bigrampa} recovers $\hat{\pi}_1=\pi_1^*$ with probability at least $1-7n^{-5}$
when $\pi_1^*=\id$.
By equivariance of the algorithm, this also holds for any $\pi_1^*$.

\subsubsection{Recovery of $\pi_2^*$ by linear assignment}
We now show that on the event where $\hat{\pi}_1=\pi_1^*$, as long as $n \gtrsim
\frac{\log m}{\log (1 + \frac{1}{4\sigma^2})}$, the linear assignment step of \prettyref{alg:bigrampa} recovers $\hat{\pi}_2=\pi_2^*$ with high
probability. Without loss of generality, let us take $\pi_1^*=\id$, and denote
more simply $\pi^*=\pi_2^*$. We then formalize this claim as follows.

\begin{theorem} \label{thm:linear}
    Consider the single permutation model $G^{\id, \pi^*}=F+\sigma W$
    where $G^{\id, \pi^*}_{ij}=G_{i,\pi^*(j)}$, and $F$ and $W$ are as in Theorem
    \ref{thm:bipartite}. Let
    \[\hat{\pi}=\argmax_{\pi \in \cS_m} \sum_{j=1}^m (F^\top G)_{j,\pi(j)}.\]
		If $n \ge \frac{24\log m}{\log (1 + \frac{1}{4\sigma^2})} $, then $\hat{\pi}=\pi^*$ with probability at least
    $1 - 2 m^{-4}$.
\end{theorem}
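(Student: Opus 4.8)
By the equivariance of the linear assignment objective (as used for Theorem~\ref{thm:wigner}), we may assume $\pi^*=\id$, so that $G=F+\sigma W$; write $f_j,w_j$ for the columns of $F,W$ (all iid $\sN(0,\frac1m\bI_n)$) and $g_j=f_j+\sigma w_j$. Set $O(\pi)\defn\sum_{j=1}^m\iprod{f_j}{g_{\pi(j)}}$, so $\hat\pi=\argmax_{\pi}O(\pi)$, and we must show $O(\id)>O(\pi)$ for all $\pi\neq\id$ with probability at least $1-2m^{-4}$. Since columns outside the support of $\pi$ are left fixed, if $\pi$ has nontrivial disjoint cycles $c_1,\dots,c_r$ then $O(\id)-O(\pi)=\sum_{l=1}^r\big(O(\id)-O(c_l)\big)$, where $O(c_l)$ denotes the objective of $c_l$ extended by the identity. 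Hence it suffices to prove $O(\id)>O(c)$, simultaneously over all cycles $c$ of length $k\in\{2,\dots,m\}$.

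\textbf{Per-cycle conditional Gaussian bound.}
Fix a $k$-cycle $c$ with support $S$; relabel so that $S=\{1,\dots,k\}$ and $c=(1\,2\,\cdots\,k)$. Using $g_j-g_{c(j)}=(f_j-f_{c(j)})+\sigma(w_j-w_{c(j)})$,
\[
T\defn O(\id)-O(c)=(A-B)+\sigma\sum_{j\in S}\iprod{f_j}{w_j-w_{c(j)}},\qquad
A-B\defn\sum_{j\in S}\big(\|f_j\|^2-\iprod{f_j}{f_{c(j)}}\big)=\tfrac12\sum_{j\in S}\|f_j-f_{c(j)}\|^2\ge0.
\]
Conditioning on $\Phi\defn(f_j)_{j\in S}$, the noise part equals $\sigma\sum_{l\in S}\iprod{f_l-f_{c^{-1}(l)}}{w_l}$, a centered Gaussian of variance $\tfrac{\sigma^2}{m}\sum_{l\in S}\|f_l-f_{c^{-1}(l)}\|^2=\tfrac{2\sigma^2}{m}(A-B)$. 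Thus $T\mid\Phi\sim\sN\!\big(A-B,\tfrac{2\sigma^2(A-B)}{m}\big)$, and the Gaussian lower-tail bound yields $\prob{T\le0\mid\Phi}\le\exp\!\big(-\tfrac{m(A-B)}{4\sigma^2}\big)$. Dropping all but the ``odd'' terms, $A-B\ge\tfrac12\sum_{i=1}^{\lfloor k/2\rfloor}\|f_{2i-1}-f_{2i}\|^2$, which is $\tfrac12$ times a sum of $\lfloor k/2\rfloor$ \emph{independent} terms each distributed as $\tfrac2m\chi^2_n$. Taking the expectation over $\Phi$ and using $\E\exp(-\tfrac{m}{8\sigma^2}\|f_1-f_2\|^2)=\E\exp(-\tfrac1{4\sigma^2}\chi^2_n)=(1+\tfrac1{2\sigma^2})^{-n/2}$,
\[
\prob{T\le0}\le\Big(1+\tfrac1{2\sigma^2}\Big)^{-n\lfloor k/2\rfloor/2}\le\Big(1+\tfrac1{4\sigma^2}\Big)^{-nk/6},
\]
where the last step uses $\lfloor k/2\rfloor\ge k/3$ for $k\ge2$ and $1+\tfrac1{2\sigma^2}\ge1+\tfrac1{4\sigma^2}$.

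\textbf{Union bound.}
There are at most $m^k$ cycles of length $k$, so
\[
\prob{\hat\pi\ne\id}\le\sum_{k=2}^m m^k\Big(1+\tfrac1{4\sigma^2}\Big)^{-nk/6}=\sum_{k=2}^m\Big(m\big(1+\tfrac1{4\sigma^2}\big)^{-n/6}\Big)^k.
\]
The hypothesis $n\ge\frac{24\log m}{\log(1+1/(4\sigma^2))}$ forces $\tfrac n6\log(1+\tfrac1{4\sigma^2})\ge4\log m$, i.e.\ $(1+\tfrac1{4\sigma^2})^{-n/6}\le m^{-4}$, so each summand is at most $m^{-3k}$ and the geometric series is bounded by $2m^{-6}\le2m^{-4}$. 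Together with the cycle decomposition this gives $\hat\pi=\pi^*$ with probability at least $1-2m^{-4}$.

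\textbf{Main obstacle.}
The delicate point is the per-cycle estimate. Conditioning on $\Phi$ is essential: it makes the cross-noise variance exactly proportional to the signal $A-B$, producing the clean Chernoff exponent $m(A-B)/4\sigma^2$ with no dependence on the individual norms $\|f_j\|$. One then needs the observation that $A-B\ge0$ deterministically and, crucially, that a $k$-cycle supplies $\lfloor k/2\rfloor$ \emph{independent} squared-difference terms $\|f_{2i-1}-f_{2i}\|^2$, so that the expectation over $\Phi$ becomes a product of $\asymp k$ identical $\chi^2$ moment generating functions; this geometric-in-$k$ decay is precisely what lets the bound absorb the $m^k$ cycles of length $k$ in the union, and it is what converts the crude ``$1/\sigma^2$'' SNR into the correct large-deviation rate $\log(1+1/(4\sigma^2))$.
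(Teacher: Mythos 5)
Your proof is correct. The analytic core is the same as the paper's: condition on $F$, observe that the signal term equals $\tfrac12\|F(I-\Pi)\|_F^2$ while the conditional noise variance is proportional to it, apply the Gaussian tail bound to get a Chernoff exponent of the form $\|F(I-\Pi)\|_F^2/(8\sigma^2)$ (after rescaling), and then bound the resulting moment generating function of a $\chi^2$-type quantity before union bounding. Where you genuinely diverge is in the combinatorial step. The paper union-bounds over \emph{all} permutations with $k$ non-fixed points and evaluates $\E\exp(-c\|z^\top(I-\Pi)\|^2)$ via the singular values of $I-\Pi$, using $\sum_i\sigma_i^2=2k$ and $\max_i\sigma_i\le 2$ to conclude that at least $k/4$ singular values exceed $1$; the iid structure across the $n$ rows then yields the $n$-th power. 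You instead first reduce to cycles via the exact decomposition $O(\id)-O(\pi)=\sum_l\bigl(O(\id)-O(c_l)\bigr)$ (valid since the supports are disjoint and this sum being termwise positive implies positivity for every $\pi$), and then extract $\lfloor k/2\rfloor$ \emph{independent} squared column differences from a $k$-cycle, so the MGF factors directly into $\chi^2_n$ moment generating functions with no linear algebra. Your route is arguably more elementary (no singular-value argument) at the cost of the cycle reduction; it also yields a slightly better exponent ($nk/6$ versus the paper's $nk/8$), so the stated hypothesis $n\ge 24\log m/\log(1+\tfrac{1}{4\sigma^2})$ covers your bound with room to spare. All the individual steps check out: the identity $\sum_{j\in S}(\|f_j\|^2-\iprod{f_j}{f_{c(j)}})=\tfrac12\sum_{j\in S}\|f_j-f_{c(j)}\|^2$, the conditional law $T\mid\Phi\sim\sN(A-B,\,2\sigma^2(A-B)/m)$, the count of at most $m^k$ cycles of length $k$, and the final geometric series.
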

\begin{proof}
Without loss of generality, assume that $\pi^*=\id$.
Let us also rescale and consider $G = F + \sigma W$, 
where $F$ and $W$ are $\dimone \times \dimtwo$ random matrices with \iid~$\sN(0,
1)$ entries. Our goal is to show that 
\begin{align*}
\hat \Pi = \argmax_{\Pi \in \fS_m} \, \langle F \Pi, G \rangle 
\end{align*}
coincides with the identity with probability at least $1-m^{-4}$.

For any $\Pi \neq \identity$, we have
$\Iprod{F\Pi}{G} - \Iprod{F}{G} = \sigma \Iprod{F(\Pi-\identity)}{W} - \Iprod{F(\identity-\Pi)}{F}$, where 
$\Iprod{F(\identity-\Pi)}{F} = \frac{1}{2} \|F(\identity-\Pi)\|_F^2$. Then
\begin{align*}
\prob{\Iprod{F\Pi}{G} > \Iprod{F}{G}}
& = \prob{  \left \langle W, \frac{F (\Pi-\identity)}{ \|F(\identity-\Pi)\|_F } \right\rangle 
\ge \frac{\|F(\identity-\Pi)\|_F }{2\sigma} }  \\
&= \expect{Q\left( \frac{\|F(\identity-\Pi)\|_F }{2\sigma} \right)}\\
& \stepa{\le} \expect{ \exp \left( - \frac{\|F(\identity-\Pi)\|_F^2 }{8\sigma^2}\right)} \\
& \stepb{=} \sth{\expect{ \exp \left( - \frac{\|z^\top (\identity-\Pi)\|_F^2 }{8\sigma^2}\right)}}^n,
\end{align*}
where (a) follows from the Gaussian tail bound $Q(x) \triangleq \int_x^\infty \frac{1}{\sqrt{2\pi}} e^{-t^2/2} dt \le e^{-x^2/2}$ for $x>0$; 
(b) is because the $n$ rows of $F$ are \iid~copies of $z\sim N(0,\identity_m)$.

Denote the number of non-fixed points of $\Pi$ by $k \geq 2$, which is also the rank of $\identity-\Pi$. Denote its singular values by $\sigma_1,\ldots,\sigma_k$. Then we have $\sum_{i=1}^k \sigma_i^2 = \|\identity-\Pi\|_F^2 = 2 k$ and $\max_{i\in[k]} \sigma_i \leq \|\identity-\Pi\| \leq 2$.
By rotational invariance, we have $\|z^\top (\identity-\Pi)\|_F^2 \eqdistr \sum_{i=1}^k \sigma_i^2 w_i^2$, where $w_1,\ldots,w_k\iiddistr N(0,1)$. Then
\begin{align}
\expect{ \exp \left( - \frac{\|z^\top (\identity-\Pi)\|_F^2 }{8\sigma^2}\right)} 
= & ~ \prod_{i=1}^k \expect{ \exp \left( - \frac{\sigma_i^2}{8\sigma^2}  w_i^2  \right)} \label{eq:MGF} \\
= & ~ \exp\sth{ - \frac{1}{2} \sum_{i=1}^k \log\pth{1+\frac{\sigma_i^2}{4\sigma^2}}}  \leq \exp\sth{ - \frac{k}{8} \log\pth{1+\frac{1}{4\sigma^2}}},\nonumber 
\end{align}
where the last step is due to $\sum_{i=1}^k \indc{\sigma_i^2\geq 1} \geq k/4$.\footnote{
The sharp condition $n \log \left( 1 + \frac{1}{\sigma^2} \right)- 4 \log m \to + \infty$ can be obtained by computing the singular values in \prettyref{eq:MGF} exactly; cf.~\cite{dai2019database}.}
Combining the last two displays and applying the union bound over $\Pi \neq \identity$, we have
\begin{align*}
\prob{\hat \Pi \neq \identity}
\leq & ~ \sum_{\Pi \neq \identity} \prob{\Iprod{F\Pi}{G} > \Iprod{F}{G}} \\
\leq & ~ \sum_{k=2}^{m} \binom{m}{k} k!  \left( 1+ \frac{1}{4\sigma^2} \right)^{-nk/8} \le \sum_{k=2}^{m} m^k \left( 1+ \frac{1}{4\sigma^2} \right)^{-nk/8} \leq 2m^{-4},
\end{align*}
provided that $m \geq 2$ and $m \left( 1+ \frac{1}{4\sigma^2} \right)^{-n/8} \leq m^{-2}$.
\end{proof}

\subsection{Gradient descent dynamics}\label{sec:gd}

Finally, we prove Corollary \ref{cor:sol}, which connects $\widehat{X}$ in
(\ref{eq:spectralnew}) to the gradient descent dynamics (\ref{eq:gd}) and the
optimization problems (\ref{eq:x-est}) and (\ref{eq:constrained}).

To show that $\widehat{X}$ solves \eqref{eq:x-est}, note that
the objective function in \eqref{eq:x-est} is quadratic, with
first order optimality condition
$$
A^2X+XB^2 - 2AXB +  \ridge X=\bJ.
$$
Setting $\bx=\vectorize(X)$ and writing this in vectorized form 
$$
\big[ (\bI_n \otimes A - B \otimes \bI_n)^2 + \ridge \bI_{n^2}
\big]\bx=\bone_{n^2},
$$
we see that the vectorized solution to \eqref{eq:x-est} is
\[\hat{\bx}=\big[ (\bI_n \otimes A - B \otimes \bI_n)^2 + \ridge \bI_{n^2}
\big]^{-1}\bone_{n^2} \in \R^{n^2}.\]
Applying the spectral decomposition (\ref{eq:spectraldecomp}),
we get
\begin{equation}
\hat{\bx}=\sum_{ij} \frac{ 1 }{(\lambda_i -
\mu_j)^2+\ridge } ( v_j \otimes u_i ) ( v_j \otimes u_i )^\top \bone_{n^2} = 
\sum_{ij} \frac{ u_i ^\top \allones_n v_j }{(\lambda_i -
\mu_j)^2+\ridge }  \vecc(u_i v_j^\top),
\label{eq:spectralnew-vec}
\end{equation}
which is exactly the vectorization of $\widehat{X}$ in (\ref{eq:X}).

Recall that $\widetilde{X}$ denotes the minimizer of (\ref{eq:constrained}).
Introducing a Lagrange multiplier $2\alpha \in \R$ for the constraint,
the first-order stationarity condition is 
$A^2X+XB^2 - 2AXB +  \ridge X=\alpha \bJ$, and hence
$\widetilde{X}=\alpha \widehat{X}$.
To find $\alpha$, note that $\bone^\top \widetilde{X} \bone
=\alpha \bone^\top \widehat{X} \bone=n$.
Furthermore, 
from \prettyref{eq:spectralnew} we have
\[
\bone^\top \widehat{X} \bone = \sum_{ij} \frac{ \Iprod{u_i}{\ones}^2 \Iprod{v_j}{\ones}^2  }{(\lambda_i -
\mu_j)^2+\ridge } > 0.
\]
Hence $\alpha>0$. These claims together establish part (a).

For (b), let us consider the gradient descent dynamics also in its 
vectorized form. Namely, define $\bx^{(t)} \defn \vectorize( X^{(t)} )$.
Then~\eqref{eq:gd} can be written as
$$
\bx^{(t+1)} = \big[ (1 - \stepsize  \ridge) \bI_{n^2} - \stepsize (\bI_n \otimes
A-B \otimes \bI_n)^2\big]
\bx^{(t)} + \stepsize \bone_{n^2}.
$$
For the initialization $\bx^{(0)}=0$, this gives
\begin{align} 
    \bx^{(t)}&=\stepsize \sum_{s = 0}^{t-1} \big[ (1 - \stepsize  \ridge) \bI_{n^2}
    - \stepsize (\bI_n \otimes A-B \otimes \bI_n)^2 \big]^s \bone_{n^2} \notag \\
&= \stepsize \sum_{i, j = 1}^n \sum_{s=0}^{t-1} \big[ 1 - \stepsize \ridge -
    \stepsize ( \lambda_i - \mu_j )^2 \big]^s (v_j \otimes u_i) ( v_j
    \otimes u_i)^\top \bone_{n^2} \notag \\
&= \sum_{i, j = 1}^n \frac{ 1 - [ 1 - \stepsize \ridge - \stepsize ( \lambda_i -
    \mu_j )^2 ]^t }{ \ridge + ( \lambda_i - \mu_j )^2 } (v_j \otimes u_i) (
    v_j \otimes u_i )^\top \bone_{n^2} . \label{eq:vec-sol}
\end{align}
Undoing the vectorization yields part (b).

For (c), note that $\eta^2+(\lambda_i-\mu_j)^2<C$ with probability at least $1-n^{-10}$
by Proposition \ref{prop:wignerproperties}(c), so that the convergence in
part (b) holds provided that the step size $\gamma \leq c$ for some sufficiently small constant $c$. On this event,
for all pairs $(k,\ell)$ we may apply the simple bound
\begin{align*}
|X^{(t)}_{k\ell}-\widehat{X}_{k\ell}|
    &\leq \sum_{ij} \frac{(1-\stepsize (\lambda_i-\mu_j)^2-\stepsize
    \ridge)^t}{(\lambda_i-\mu_j)^2+\ridge}|(u_i^\top \coord_k)(v_j^\top
    \coord_\ell)(u_i^\top \bJ v_j)|\\
    &\leq \frac{(1-\stepsize \ridge)^t}{\ridge} \cdot n^2\max_{ij}
    |(u_i^\top \coord_k)(v_j^\top
    \coord_\ell)(u_i^\top \bJ v_j)|\\
    &\leq \frac{(1-\stepsize \ridge)^t}{\ridge} \cdot n^3.
\end{align*}
In particular, for $t \geq (C\log n)/(\stepsize \ridge)$ and a sufficiently
large constant $C>0$, this is at most $1/n$. Then the conclusion of
Theorem \ref{thm:wigner} with $X^{(t)}$ in place of $\widehat{X}$ still follows from Lemmas \ref{lem:noiseless} and
\ref{lem:noise}.

\section{Numerical experiments}
\label{sec:numeric}

This section is devoted to comparing our spectral method to various methods for
graph matching, using both synthetic examples and real datasets. 
Let us first make a few remarks regarding implementation of graph matching algorithms.  

Similar to the last step of \prettyref{alg:grampa}, many methods that
we compare consist of a final step that rounds a similarity matrix to produce a permutation estimator. 
Throughout the experiments, for the sake of comparison we always use the linear assignment \prettyref{eq:linearassignment} for rounding, which typically yields noticeably better outcomes than the simple greedy rounding \prettyref{eq:greedyround}. 

For \GRAMPA, Theorem~\ref{thm:wigner} suggests that the regularization parameter $\eta$ needs to be chosen so that $\sigma \lor n^{-0.1} \lesssim \eta \lesssim 1/ \log n$. In practice, one may compute
estimates $\hat{\pi}^\eta$ for different values of $\eta$ and select the one with the minimum objective value $\|A-B^{\hat{\pi}^\eta}\|_F^2$.
We find in simulations that the performance of \GRAMPA is in fact not very
sensitive to the choice of $\eta$, unless $\eta$ is extremely close to zero or larger than one. 
For simplicity and consistency, we apply \GRAMPA to centered and normalized adjacency
matrices and fix $\eta = 0.2$ for all synthetic experiments. 
 

\subsection{Universality of \GRAMPA}

Although the main theoretical result of this work, Theorem~\ref{thm:wigner}, is only proved for the Gaussian Wigner model, 
the proposed spectral method in \prettyref{alg:grampa} (denoted by \GRAMPA) can
in fact be used to match any pair of weighted graphs. 
Particularly, in view of the universality results in the companion paper \cite{FMWX19b}, the performance of \GRAMPA for the Gaussian Wigner model is comparable to that for the suitably calibrated \ER model. 
This is verified numerically in Figure~\ref{fig:model} which we now explain.

\begin{figure}[!ht]
\centering
\includegraphics[clip, trim=4.0cm 8.5cm 4.0cm 9.0cm, width=0.5\textwidth]{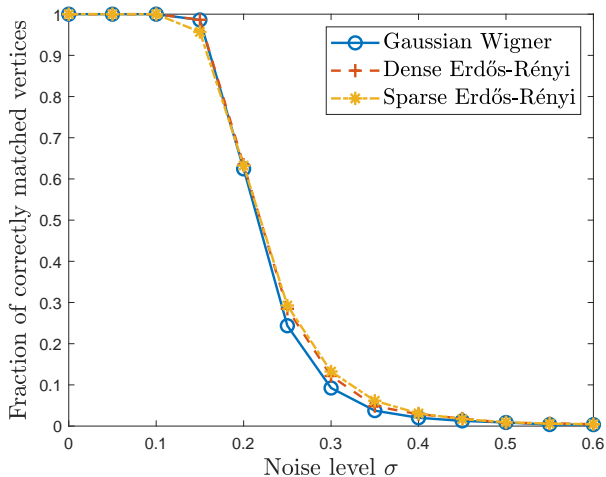}
\caption{Universality of the performance of \GRAMPA on three random graph models with 1000 vertices.}
\label{fig:model}
\end{figure}

Given the latent permutation $\pi^*$, an edge density $p \in (0, 1)$, and a noise parameter $\sigma \in [0, 1]$, we generate two correlated \ER graphs on $n$ vertices with 
adjacency matrices $\bA$ and $\bB$, such that $(\bA_{ij},
\bB_{\pi^*(i),\pi^*(j)})$ are \iid pairs of correlated Bernoulli random
variables with marginal distribution $\Ber(p)$. Conditional on $\bA$, 
\begin{align}
\bB_{\pi^*(i),\pi^*(j)} \sim 
\begin{cases}
\Ber \big( 1 - \sigma^2 (1-p) \big) & \text{ if } \bA_{ij} = 1 , \\
\Ber \big( \sigma^2 p \big) & \text{ if } \bA_{ij} = 0 .
\end{cases}
\label{eq:er-model}
\end{align}
Here the operational meaning of the parameter $\sigma$ is that the fraction of
edges which differ between the two graphs is approximately $2\sigma^2(1-p)
\approx 2 \sigma^2$ for sparse graphs. In particular, the extreme cases of
$\sigma=0$ and $\sigma=1$ correspond to $A$ and $B$ which are perfectly correlated and independent, respectively.

Furthermore, the model parameters in \prettyref{eq:er-model} are calibrated to
be directly comparable with the Gaussian model, so that it is convenient to verify experimentally the universality of our spectral algorithm.
Indeed, denote the centered and normalized adjacency matrices by
\begin{align}
A \defn (p(1-p) n)^{-1/2} (\bA - \E[\bA])
\quad \text{ and } \quad
B \defn (p(1-p) n)^{-1/2} (\bB - \E[\bB]) .
\label{eq:normalize}
\end{align} 
Then it is easy to check that $A_{ij}$ and $B_{ij}$ both have mean zero and variance $1/n$, and moreover, 
\begin{align}
\E[ (A_{ij} - B_{\pi^*(i),\pi^*(j)})^2 ] = 2 \sigma^2 / n . 
\label{eq:noise-var}
\end{align} 
Note that equation~\eqref{eq:noise-var} also holds for the off-diagonal entries of the Gaussian Wigner model 
$
B = A + \sqrt{2} \sigma Z, 
$
where $A$ and $Z$ are independent $\GOE$ matrices. 
In Figure~\ref{fig:model}, we implement \GRAMPA to match pairs of Gaussian Wigner, dense \ER ($p = 0.5$) and sparse \ER ($p = 0.01$) graphs with 1000 vertices,
 and plot the fraction of correctly matched pairs of vertices against the noise level $\sigma$, averaged over 10 independent repetitions. 
The performance of \GRAMPA on the three models  is indeed similar, agreeing with the universality results proved in the companion paper \cite{FMWX19b}. 

For this reason, in the sequel we primarily consider the \ER model for synthetic experiments. 
In addition, while \GRAMPA is applicable for matching weighted graphs, many algorithms in the literature were proposed specifically for unweighted 
graphs, so using the \ER model allows us to compare more methods in a consistent setup. 


\subsection{Comparison of spectral methods}
\label{sec:numeric-sp}

We now compare the performance of \GRAMPA to several existing spectral methods in the literature. 
Besides the rank-1 method of rounding the outer product of top eigenvectors \eqref{eq:rankone} (denoted by \TopEigenVec), we consider the \IsoRank algorithm of \cite{singh2008global}, the \EigenAlign and \LowRankAlign\footnote{We implement the rank-$2$ version of \LowRankAlign here because a higher rank does not appear to improve its performance in the experiments.} algorithms of \cite{feizi2019spectral}, and Umeyama's method \cite{umeyama1988eigendecomposition} which rounds the similarity matrix \eqref{eq:ume}. 
In Figure~\ref{fig:spec}, we apply these algorithms to match \ER graphs with $100$ vertices\footnote{This experiment is not run on larger graphs because \IsoRank and \EigenAlign involve taking Kronecker products of graphs and are thus not as scalable as the other methods.} and edge density $0.5$. 
For each spectral method, we plot the fraction of correctly matched pairs of vertices of the two graphs versus the noise level $\sigma$, averaged over $10$ independent repetitions. 
While all estimators recover the exact matching in the noiseless case, it is clear that \GRAMPA is more robust to noise than all previous spectral methods by a wide margin. 

\begin{figure}[!ht]
\begin{subfigure}{0.48\textwidth}
\includegraphics[clip, trim=4.0cm 8.5cm 4.0cm 9.0cm, width=\textwidth]{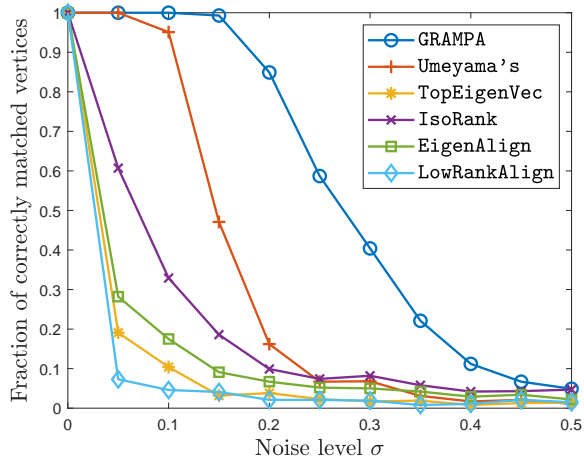}
\caption{Fraction of correctly matched vertices, on \ER graphs with $n=100$
vertices, averaged over $10$ repetitions}
\label{fig:spec}
\end{subfigure}
\hspace{0.5cm}
\begin{subfigure}{0.48\textwidth}
\includegraphics[clip, trim=4.0cm 8.5cm 4.0cm 9.0cm, width=\textwidth]{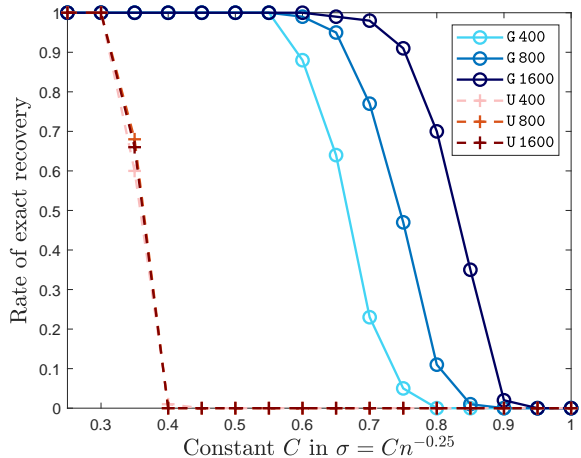}
\caption{Rate of exact recovery for \GRAMPA and Umeyama's method, on \ER graphs
with a varying number of vertices, out of $100$ repetitions}
\label{fig:rate}
\end{subfigure}
\caption{Comparison of six spectral methods for matching \ER graphs with
expected edge density $0.5$ and 100 vertices.}
\label{fig:comp1}
\end{figure}



A more precise comparison of \GRAMPA with its closest competitor, Umeyama's method, is given in \prettyref{fig:rate}, which shows that Umeyama's method is only robust to noise up to
$\sigma=\frac{1}{\poly(n)}$, whereas we prove in \cite{FMWX19b} that \GRAMPA
yields exact recovery up to $\sigma=\frac{1}{\polylog(n)}$.
Specifically, we test these two methods on \ER
graphs with edge density $0.5$ and sizes $n=400$, $800$ and $1600$. 
The noise parameter $\sigma$ is set to $C n^{-0.25}$ with varying values of $C$,
where the exponent $-0.25$ is empirically found to be the critical exponent above which Umeyama's method fails. 
Out of $100$ independent trials, we record the fraction of times when the
algorithm exactly recovers the matching between the two graphs, and plot this
quantity against $C$.
From the lines \texttt{U\,400}, \texttt{U\,800}, and \texttt{U\,1600}
corresponding to Umeyama's method on the three respective graph sizes, we see
that the performance of Umeyama's method does not vary with $n$,
supporting that $\sigma \asymp n^{-0.25}$ is the critical threshold for exact recovery by this
method. Conversely, as $n$ increases, the failure of \GRAMPA occurs at a larger
value of $C$, as seen in the curves \texttt{G\,400}, \texttt{G\,800}, and
\texttt{G\,1600}. This aligns with the theoretical result in \cite{FMWX19b} 
that \GRAMPA succeeds for $\sigma=\frac{1}{\polylog(n)}$.

\subsection{Comparison with quadratic programming and Degree Profile}

Next, we consider more competitive graph matching algorithms outside the spectral class. 
Since our method admits an interpretation through the regularized QP \eqref{eq:x-est} or \eqref{eq:constrained}, it is of interest to compare
its performance to (the algorithm that rounds the solution to) the full QP \eqref{eq:ds} with full doubly stochastic constraints, denoted by \QPDS. 
Another recently proposed method for graph matching is Degree Profile
\cite{DMWX18}, for which theoretical guarantees comparable to our results have
been established for the Gaussian Wigner and \ER models. 

Figure~\ref{fig:qp-dp} plots the fraction of correctly matched vertex pairs by
the three algorithms, on \ER graphs with $500$ vertices and edge density $0.5$, averaged over $10$ independent repetitions. 
\GRAMPA outperforms \DegreeProfile, while \QPDS is clearly the most robust, albeit at a much higher computational cost.
Since off-the-shelf QP solvers are extremely slow on instances with $n$ larger than several hundred, we resort to an alternating direction method of multipliers (ADMM) procedure used in \cite{DMWX18}. Still, solving \eqref{eq:ds} is more than $350$ times slower than computing the similarity matrix \eqref{eq:spectralnew} for the instances in Figure~\ref{fig:qp-dp}. 
Moreover, \DegreeProfile is about $15$ times slower.
We argue that \GRAMPA achieves a desirable balance between speed and robustness when implemented on large networks. 

\begin{figure}[t]
\begin{subfigure}{0.48\textwidth}
\includegraphics[clip, trim=4.0cm 8.5cm 4.0cm 9.0cm, width=\textwidth]{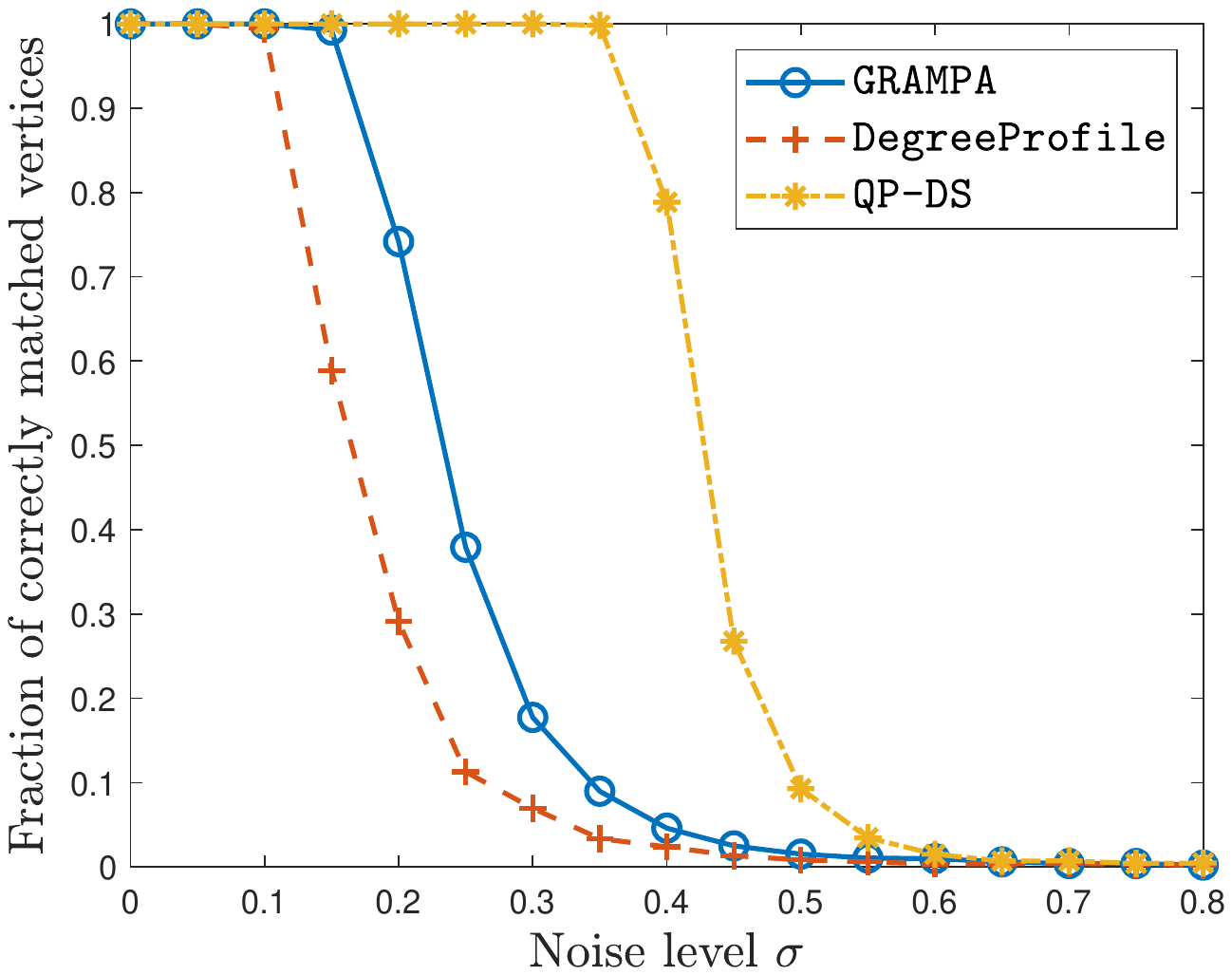}
\caption{Fraction of correctly matched vertices, on \ER graphs with $n=500$
vertices,
averaged over $10$ repetitions}
\label{fig:qp-dp}
\end{subfigure}
\hspace{0.5cm}
\begin{subfigure}{0.48\textwidth}
\includegraphics[clip, trim=4.0cm 8.5cm 4.0cm 9.0cm, width=\textwidth]{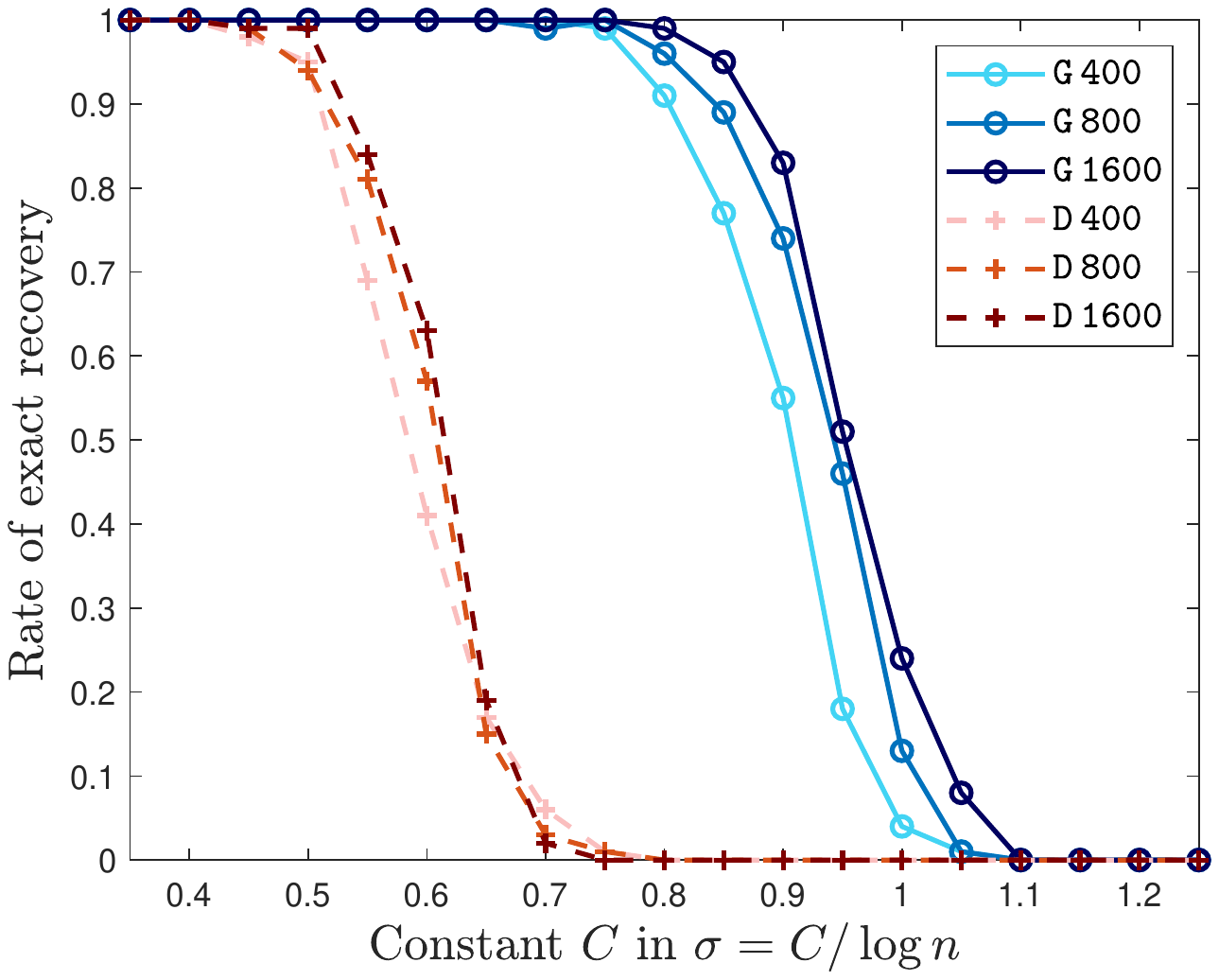}
\caption{Rate of exact recovery for \GRAMPA and \DegreeProfile, on \ER graphs
with a varying number of vertices, out of $100$ repetitions}
\label{fig:rate2}
\end{subfigure}
\caption{Comparison of three competitive methods for matching \ER graphs with
expected edge density $0.5$.}
\label{fig:comp2}
\end{figure}

A closer inspection of the exact recovery threshold of \GRAMPA and \DegreeProfile is done in Figure~\ref{fig:rate2}, in a similar way as Figure~\ref{fig:rate}. 
Since Theorem~\ref{thm:wigner} suggests that \GRAMPA is robust to noise up to
$\sigma \lesssim \frac{1}{\log n}$, we set the noise level to be $\sigma = C / \log n$, and plot the fraction of exact recovery against $C$.
The results for \ER graphs of size $n = 400, 800, 1600$ and for the two methods
\GRAMPA and \DegreeProfile are represented by the curves \texttt{G\,400},
\texttt{G\,800}, \texttt{G\,1600} and \texttt{D\,400}, \texttt{D\,800},
\texttt{D\,1600} respectively. 
These results suggest that \GRAMPA is more robust than \DegreeProfile by at least a constant factor. 

\subsection{More graph ensembles}

To demonstrate the performance of \GRAMPA on other models, we turn to sparser regimes of the \ER model, as well as other random graph ensembles. 
In Figure~\ref{fig:graphs}, we compare the performance of \GRAMPA and
\DegreeProfile (the two fast and robust methods from the preceding experiments)
on correlated pairs of sparse \ER graphs, stochastic blockmodels,
and power-law graphs. 
Following \cite{Pedarsani2011}, we generate these pairs by sampling a ``mother graph'' according to such a
model with edge density $p/s$ for $0<p<s \leq 1$, and then generating $\bA$ and $\bB$ by deleting
each edge independently with probability $1-s$. This yields marginal edge
density $p$ in both $\bA$ and $\bB$.
We apply \GRAMPA with the centered and normalized matrices $A$ and $B$ from \eqref{eq:normalize}
as input, with $p$ bing the marginal edge density. In each figure, the fraction
of correctly matched pairs of vertices is plotted against the effective noise level
\[\sigma=\sqrt{ \frac{ 1-s }{ 1-p } },\]
for graphs with $1000$ vertices, averaged over $10$ independent repetitions.
One may verify that the above procedure of generating $(\bA,\bB)$ and the definition
of $\sigma$ both agree with the previous definition \prettyref{eq:er-model} in the \ER setting.

\begin{figure}[t]
\begin{subfigure}{0.33\textwidth}
\includegraphics[clip, trim=4.0cm 8.5cm 4.0cm 9.0cm, width=\textwidth]{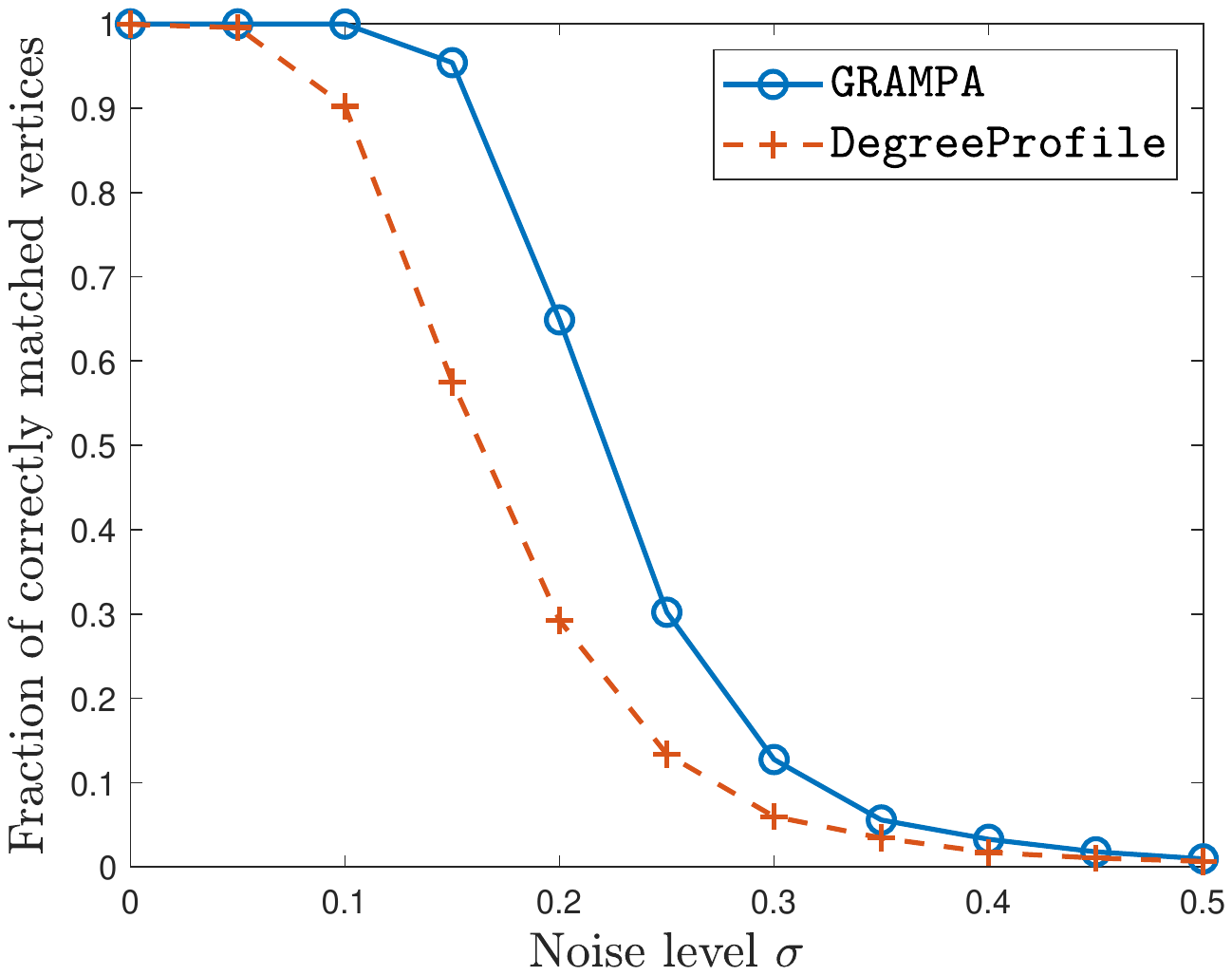}
\caption{\ER graph with \\ $p = 0.01$}
\label{fig:er1}
\smallskip
\includegraphics[clip, trim=4.0cm 8.5cm 4.0cm 9.0cm, width=\textwidth]{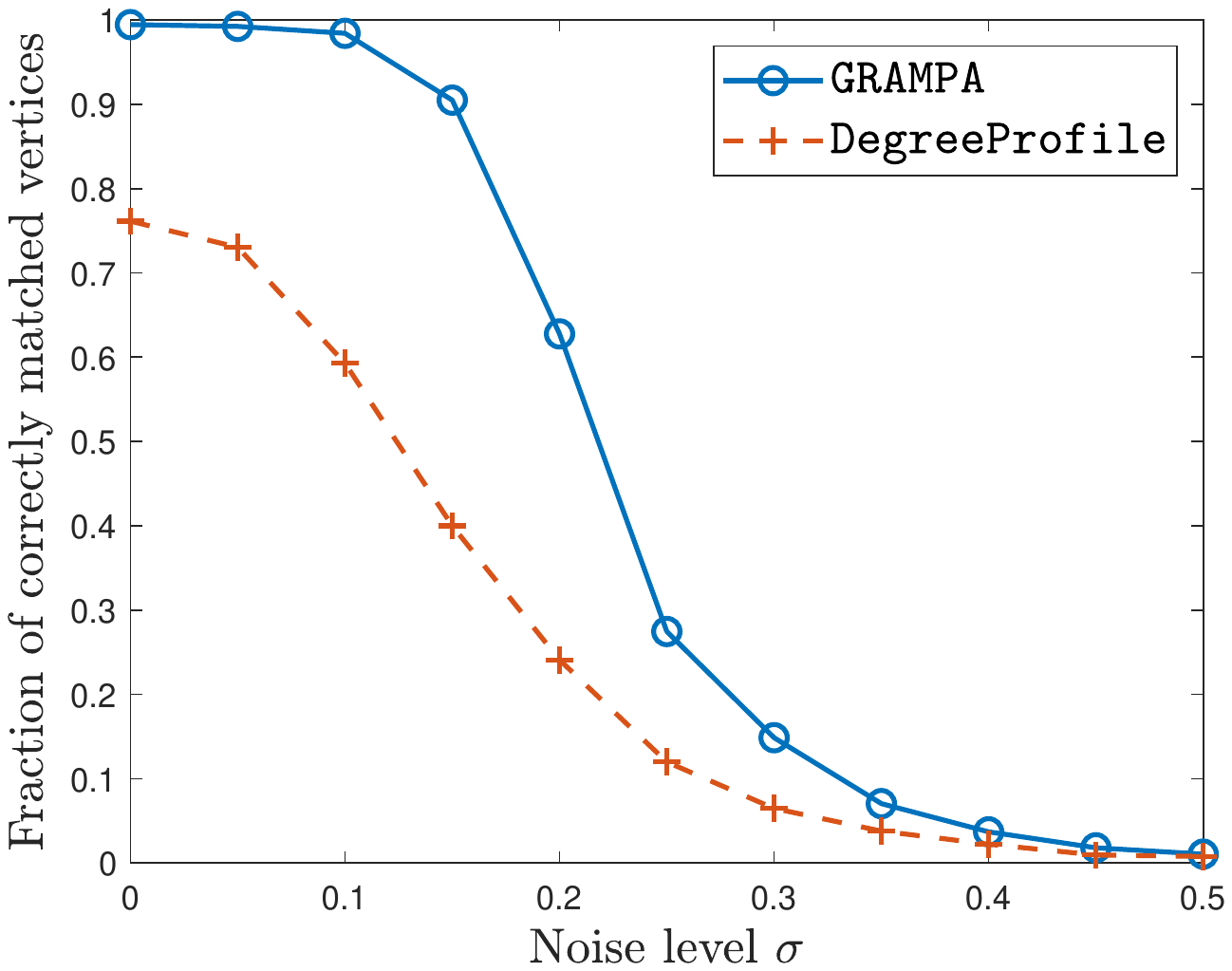}
\caption{\ER graph with \\ $p = 0.005$}
\label{fig:er2}
\end{subfigure}
\begin{subfigure}{0.33\textwidth}
\includegraphics[clip, trim=4.0cm 8.5cm 4.0cm 9.0cm, width=\textwidth]{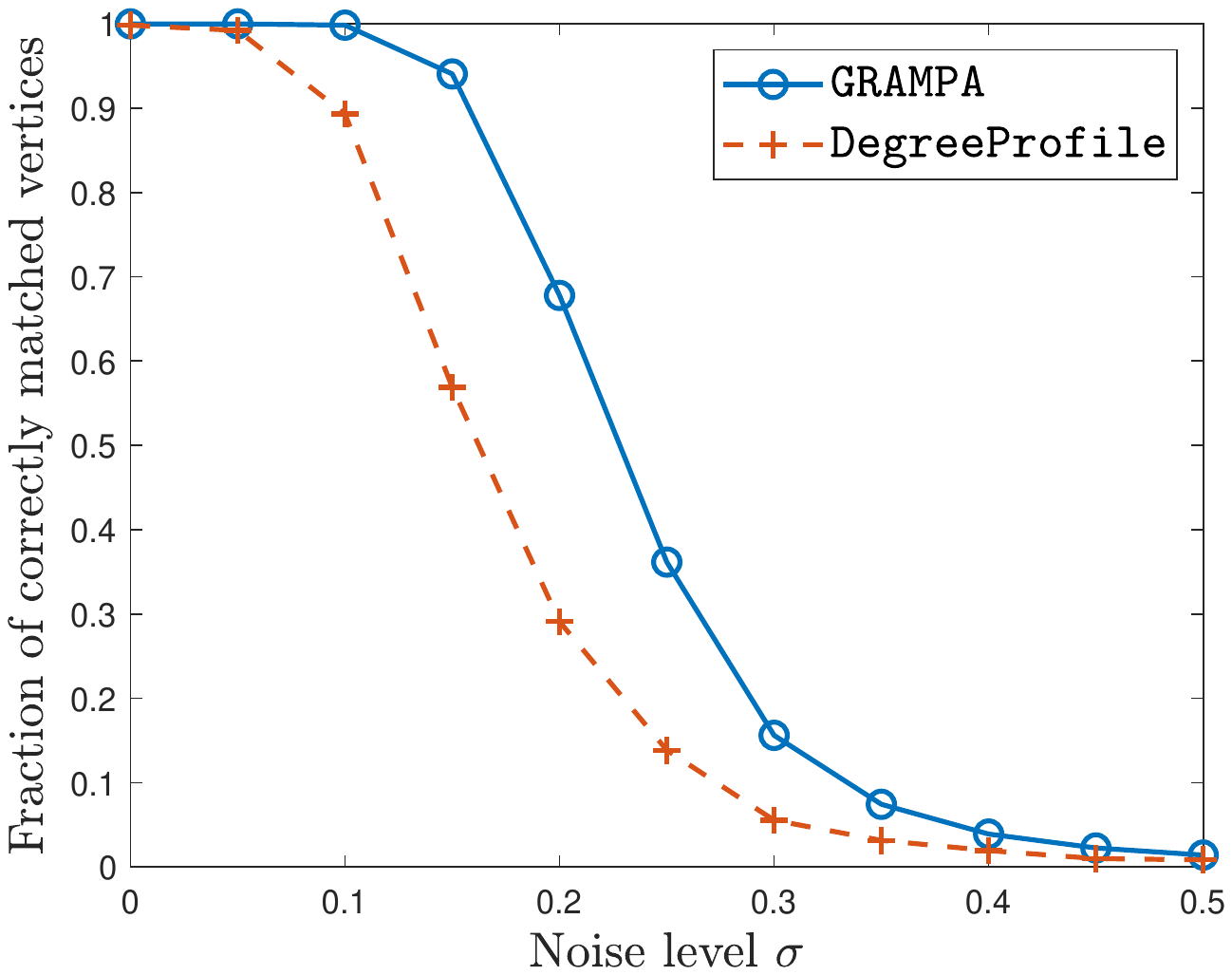}
\caption{Stochastic blockmodel with \\ $p=0.01$, $\pin = 0.016$, and $\pout = 0.004$}
\label{fig:sbm1}
\smallskip
\includegraphics[clip, trim=4.0cm 8.5cm 4.0cm 9.0cm, width=\textwidth]{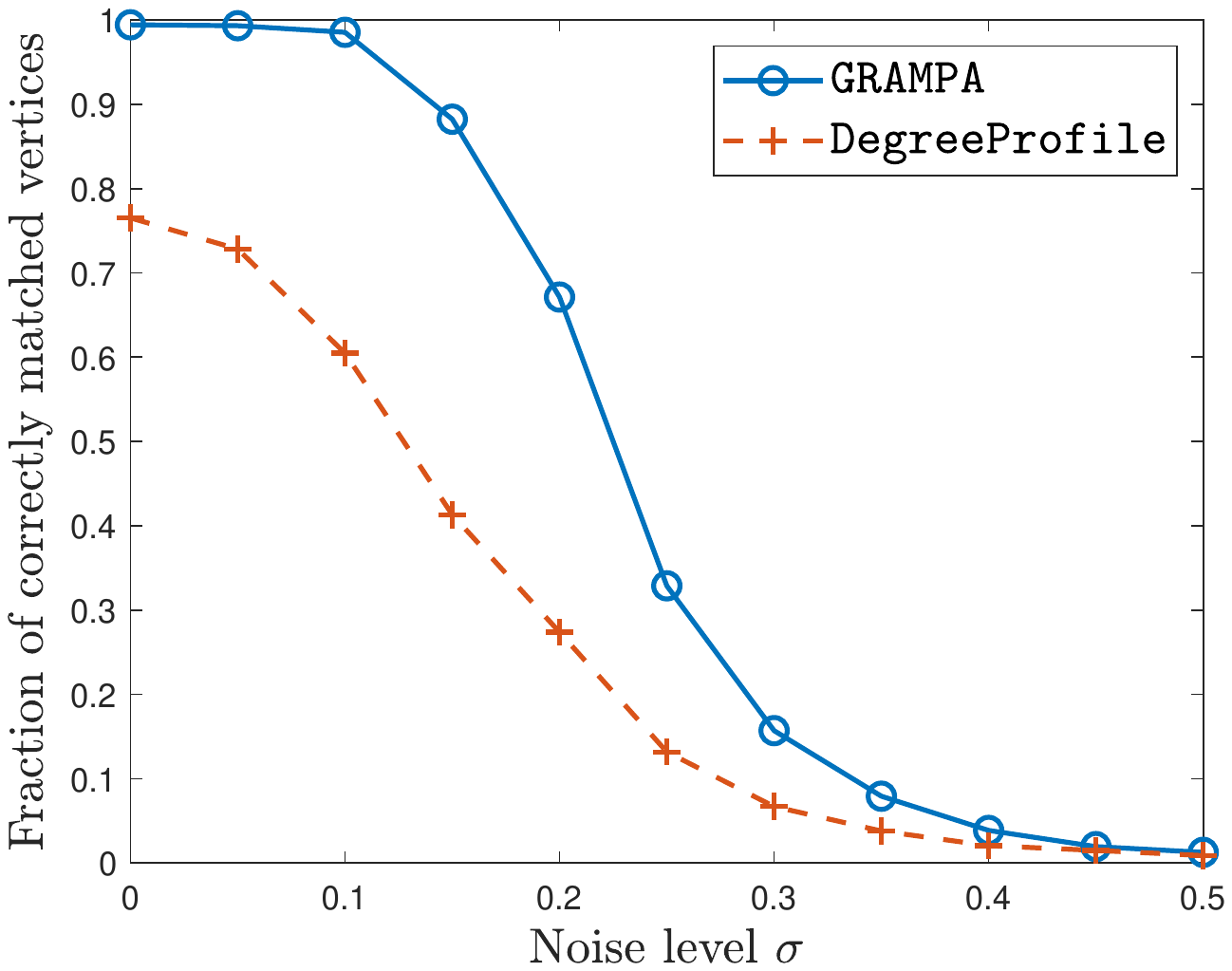}
\caption{Stochastic blockmodel with \\ $p=0.005$, $\pin = 0.008$, and $\pout = 0.002$}
\label{fig:sbm2}
\end{subfigure}
\begin{subfigure}{0.33\textwidth}
\includegraphics[clip, trim=4.0cm 8.5cm 4.0cm 9.0cm, width=\textwidth]{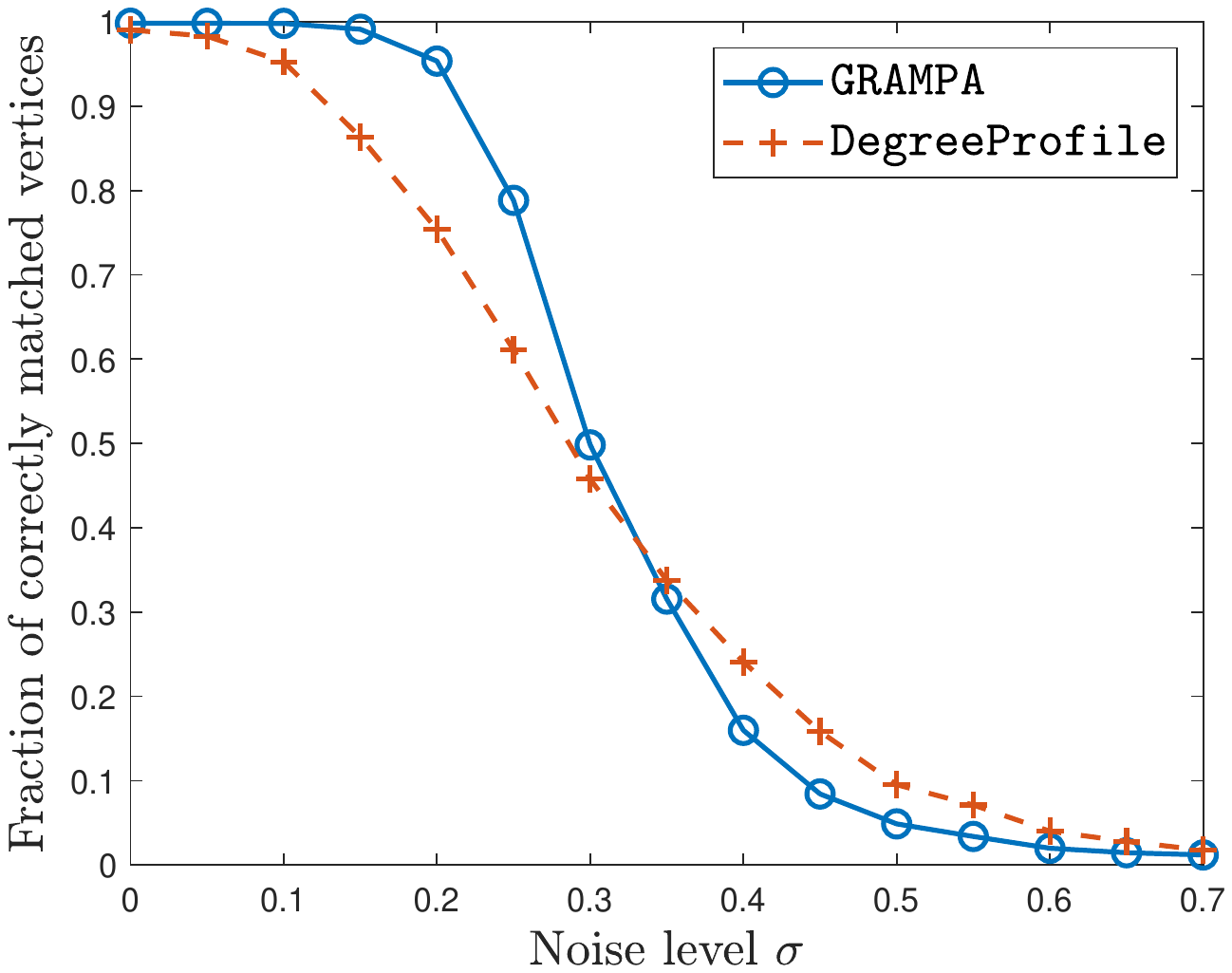}
\caption{Power-law graph with \\ $p = 0.01$}
\label{fig:pl1}
\smallskip
\includegraphics[clip, trim=4.0cm 8.5cm 4.0cm 9.0cm, width=\textwidth]{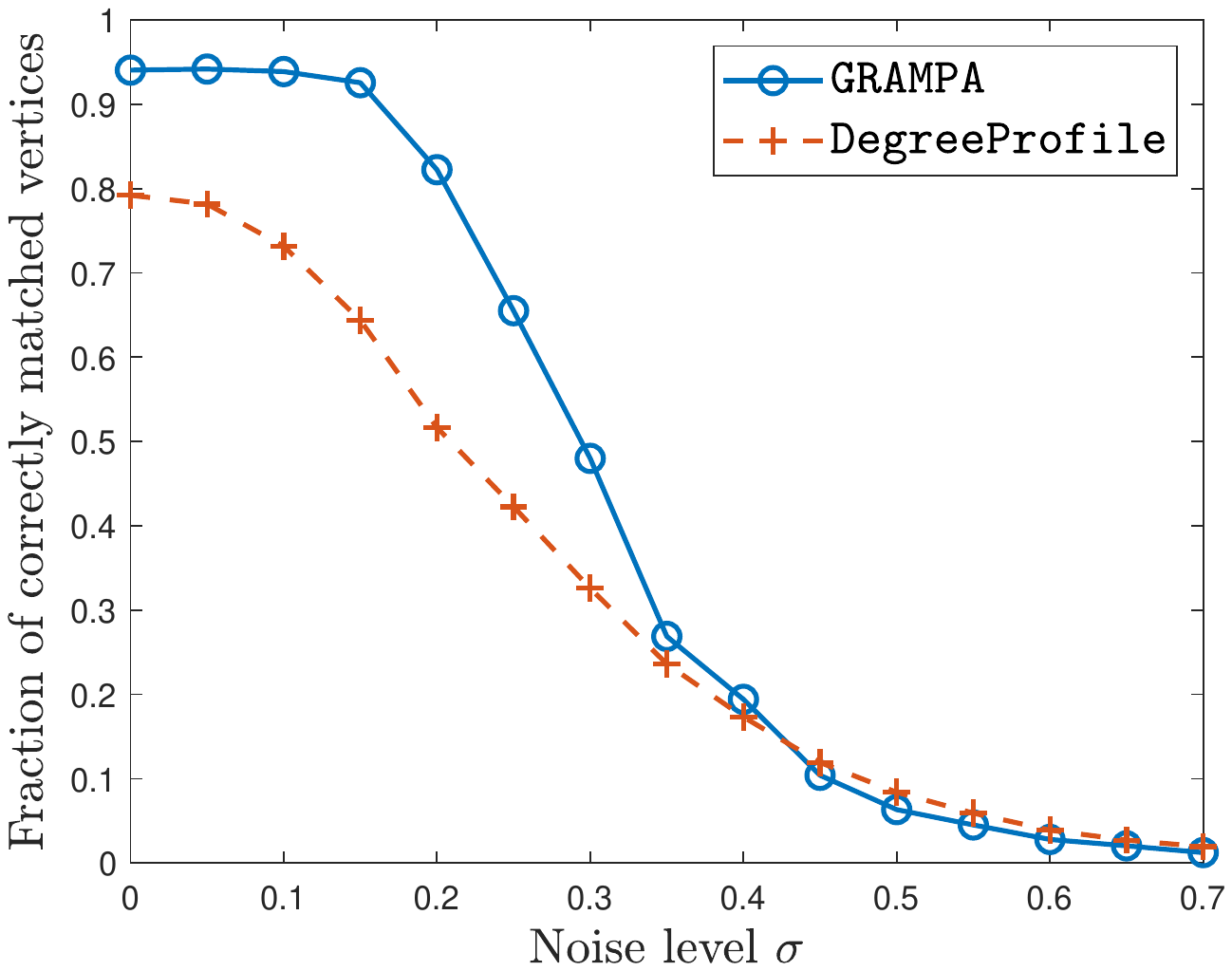}
\caption{Power-law graph with \\ $p = 0.005$}
\label{fig:pl2}
\end{subfigure}
\caption{Comparison of \GRAMPA and \DegreeProfile on synthetic networks}
\label{fig:graphs}
\end{figure}

In Figures~\ref{fig:er1} and~\ref{fig:er2}, we consider \ER graphs with edge
density $p=0.01$ and $0.005$. 
Note that the sharp threshold of $p$ for the connectedness of an \ER graph with
$1000$ vertices is $\frac{ \log n}{ n} \approx 0.0069$
\cite{erds1960evolution}, below which the graphs contain isolated
vertices whose matching is non-identifiable. We see that the performance of \GRAMPA is better than \DegreeProfile in both settings, and particularly in the sparser regime.
%

In Figures~\ref{fig:sbm1} and~\ref{fig:sbm2}, we consider the stochastic blockmodel \cite{Holland83} with two communities each of size $500$. 
The probability of an edge between vertices in the same (resp.~different) community is
denoted by $\pin$ (resp.~$\pout$). 
The values of $\pin$ and $\pout$ are chosen so that the overall expected
edge densities are $p=0.01$ and $0.005$ as in the \ER case. 
We observe a similar comparison of the two methods as in the \ER setting.


Finally, in Figures~\ref{fig:pl1} and~\ref{fig:pl2}, we consider power-law
graphs that are generated according to the following version of the
Barab\'asi-Albert preferential attachment model \cite{barabasi1999emergence}: We start with two vertices connected by one edge. 
Then at each step, denoting the degrees of the existing $k$ vertices by $d_1,
\dots d_k$, we attach a new vertex to the graph by connecting it to the $j$-th
existing vertex independently with probability $\max(C d_j / (\sum_{i=1}^k d_i
),1)$ for each $j=1,\ldots,k$.\footnote{Since a preferential attachment graph
is connected by convention, we may repeat this step until the new vertex is
connected to at least one existing vertex.} 
This process is iterated until the graph has $n$ vertices. 
Here, $C$ is a parameter that determines the final edge density.



As shown in Figure~\ref{fig:pl1}, 
for matching correlated power-law graphs with overall expected edge density $p
= 0.01$,  \GRAMPA is more noise resilient than \DegreeProfile in terms of
exact recovery. As the noise grows, the performance of \GRAMPA decays faster
than \DegreeProfile in terms of the fraction of correctly matched pairs. 
In Figure~\ref{fig:pl2}, for sparser power-law graphs with expected edge density $p = 0.005$, we again observe that \GRAMPA has significantly better performance than \DegreeProfile. 
Note that in this sparse regime, neither method can achieve exact recovery even in the noiseless
case due to the non-trivial symmetry of the graph arising from, for example, multiple leaf vertices incident to a common parent.
We revisit the issue of non-identifiability when studying the real dataset in the next subsection. 

\subsection{Networks of autonomous systems}

We corroborate the improvement of \GRAMPA over \DegreeProfile 
using quantitative benchmarks on a time-evolving
real-world network of $n=10000$ vertices. 
Here, for simplicity, we apply both methods to the
\emph{unnormalized} adjacency matrices, and set $\eta=1$ for \GRAMPA. We find
that the results are not very sensitive to this choice of $\eta$.
Although \QPDS yields better performance in \prettyref{fig:comp2}, it is extremely slow to run on such a large network, so we omit it from the comparison here.



We use a subset of the Autonomous Systems dataset from the University of Oregon
Route Views Project \cite{UOregon}, available as part of the Stanford Network Analysis Project \cite{snapnets, leskovec2005graphs}. 
The data consists of instances of a network of autonomous systems observed on nine days between March 31, 2001 and May 26, 2001. 
Edges and (a small fraction of) vertices of the network were added and deleted over time. 
In particular, the number of vertices of the network on the nine days ranges from 10,670 to 11,174 and the number of edges from 22,002 to 23,409. 
The labels of the vertices are known. 

To test the graph matching methods, we consider 10,000 vertices of the network that are present on all nine days.
The resulting nine graphs can be seen as noisy versions of each other, with correlation decaying over time. 
We apply \GRAMPA and \DegreeProfile to match each graph to that on
the first day of March 31, with vertices randomly permuted.

\begin{figure}[!ht]
\begin{subfigure}{0.48\textwidth}
\includegraphics[clip, trim=3.5cm 8.5cm 4.0cm 8.7cm, width=\textwidth]{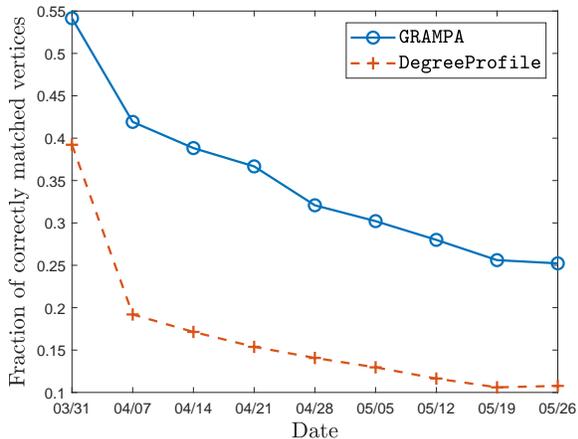}
\caption{Fraction of correctly matched vertices}
\label{fig:corr}
\end{subfigure}
\hspace{0.5cm}
\begin{subfigure}{0.48\textwidth}
\includegraphics[clip, trim=3.5cm 8.5cm 4.0cm 8.7cm, width=\textwidth]{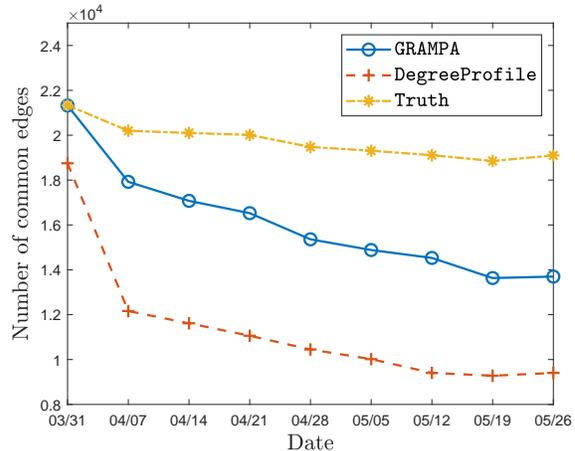}
\caption{Number of common edges}
\label{fig:val}
\end{subfigure}
\begin{subfigure}{0.48\textwidth}
\includegraphics[clip, trim=3.5cm 8.5cm 4.0cm 8.7cm, width=\textwidth]{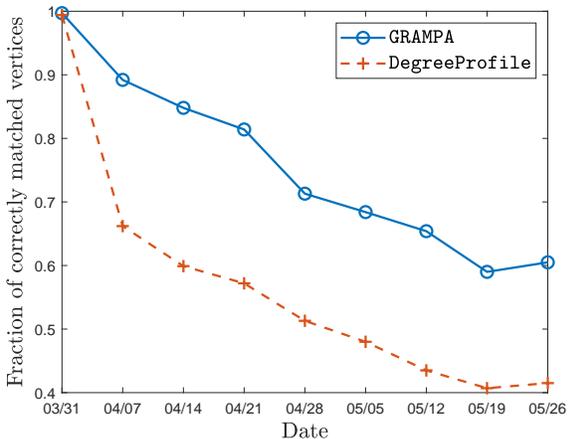}
\caption{Fraction of correctly matched vertices, in high-degree subgraph of 1000 vertices}
\label{fig:corr-2}
\end{subfigure}
\hspace{0.5cm}
\begin{subfigure}{0.48\textwidth}
\includegraphics[clip, trim=3.5cm 8.5cm 4.0cm 8.7cm, width=\textwidth]{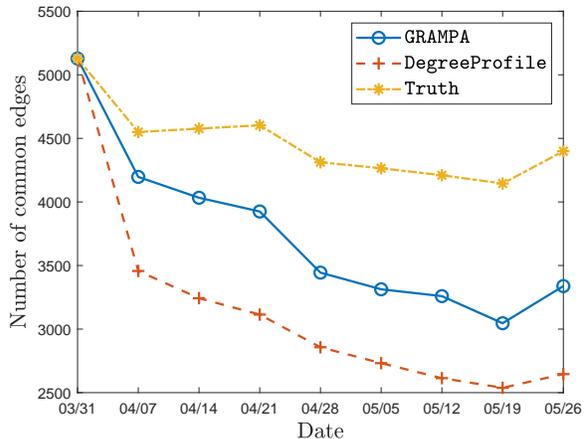}
\caption{Number of common edges, in high-degree subgraph of 1000 vertices}
\label{fig:val-2}
\end{subfigure}
\caption{Comparison of \GRAMPA and \DegreeProfile for matching networks of autonomous systems on nine days to that on the first day}
\label{fig:auto-sys-1}
\end{figure}

In Figure~\ref{fig:corr}, we plot the fraction of correctly matched pairs of vertices against the chronologically ordered dates. 
\GRAMPA correctly matches many more pairs of vertices than \DegreeProfile for all nine days. 
As expected, the performance of both methods degrades over time as the network becomes less correlated with the original one.

For the same reason as in the power-law graphs in \prettyref{fig:pl2}, even the matching of the
graph on the first day to itself is not exact. 
In fact, there are over 3,000 degree-one vertices in this graph, and some of them are attached to the same high-degree vertices, so the exact matching is non-identifiable. 
Thus for a given matching $\hat \pi$, an arguably more relevant figure of merit is the number of common edges, i.e.\ the (rescaled)
objective value $\langle A, B^{\hat \pi} \rangle / 2$.
We plot this in Figure~\ref{fig:val} together with the value for the
ground-truth matching. The values of \GRAMPA and of the ground-truth matching
are the same on the first day, indicating that
\GRAMPA successfully finds an automorphism of this graph,
while \DegreeProfile fails. Furthermore, \GRAMPA consistently recovers a matching with more common edges over the nine days. 

As in many real-world networks, high-degree vertices here are hubs in the
network of autonomous systems and play a more important role. 
Therefore, we further evaluate the two methods by comparing their performance on
subgraphs induced by high-degree vertices.
More precisely, we consider the 1,000 vertices that have the highest degrees in the graph on the first day, March 31. 
In Figures~\ref{fig:corr-2} and~\ref{fig:val-2},
we still use the matchings between the entire networks that generated
Figures~\ref{fig:corr} and~\ref{fig:val}, but evaluate the correctness only on those top 1,000 high-degree vertices.
We observe that both methods succeed in exactly matching the subgraph from the
first day to itself, and in general yield much better matchings for the
high-degree vertices than for the remainder of the graph.
Again, \GRAMPA produces better results than \DegreeProfile on these subgraphs
over the nine days, in both measures of performance.

\section{Conclusion}

We have proposed a highly practical spectral method, GRAMPA, for matching a pair of
edge-correlated graphs. By using a similarity matrix that is a
weighted combination of outer products
$u_iv_j^\top$ across all pairs of eigenvectors of the two adjacency
matrices,
GRAMPA exhibits significantly improved noise resilience over previous spectral approaches.
We showed in this work that GRAMPA achieves exact recovery of the latent
matching in a correlated Gaussian Wigner model, up to a noise level $\sigma
\lesssim \frac{1}{\log n}$. In the companion paper \cite{FMWX19b}, we establish
a similar universal guarantee for \ER graphs and other correlated Wigner
matrices, up to a noise level $\sigma \lesssim \frac{1}{\polylog n}$.
GRAMPA exhibits improved recovery accuracy over previous
spectral methods as well as the state-of-the-art degree profile algorithm in
\cite{DMWX18}, on a variety of synthetic graphs and also on a real network
example.

The similarity matrix \prettyref{eq:spectralnew} in GRAMPA can be interpreted as a ridge-regularized further relaxation of the well-known quadratic relaxation (\ref{eq:ds}) of the QAP, where the doubly-stochastic constraint is replaced by $\ones^\top X \ones=n$. In the companion paper \cite{FMWX19b}, we also analyze a
tighter relaxation with constraints $X\ones=\ones$, and establish similar guarantees. In
synthetic experiments on small graphs, we found that solving the full quadratic
program (\ref{eq:ds}), followed by the same rounding procedure as used in
GRAMPA, yields better recovery accuracy in noisy settings.
However, 
unlike GRAMPA, solving (\ref{eq:ds}) does not scale to large networks,
and the properties of its solution 
currently lack theoretical understanding. We leave these as open problems for
future work.

\appendix

\section{Concentration inequalities for Gaussians}\label{appendix:gaussian}

We collect auxiliary results on concentration of polynomials of Gaussian variables.

\begin{lemma} \label{lem:g-concentrate}
Let $z$ be a standard Gaussian vector in $\R^n$. For any fixed $v \in \R^n$ and $\delta > 0$, it holds with probability at least $1 - \delta$ that
$$
|v^\top z| \le \|v\|_2 \sqrt{2 \log (1/\delta)} .
$$
\end{lemma}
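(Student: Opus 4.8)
The plan is to reduce to the one-dimensional case and then invoke the standard Gaussian tail estimate. If $v=0$ the claim is trivial, so assume $v\neq 0$. Since $z$ has independent $\sN(0,1)$ coordinates, the linear form $v^\top z$ is a centered Gaussian with variance $\|v\|_2^2$; hence $v^\top z/\|v\|_2\eqdistr g$ for a standard normal $g$, and it suffices to show $\Prob\{|g|\ge t\}\le\delta$ for the choice $t=\sqrt{2\log(1/\delta)}$.

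The key step is the tail bound $\Prob\{|g|\ge t\}\le e^{-t^2/2}$ valid for all $t\ge 0$. I would obtain this from the elementary Mills-ratio-type estimate $Q(t)\defn\int_t^\infty\frac{1}{\sqrt{2\pi}}e^{-x^2/2}\,dx\le\frac{1}{2}e^{-t^2/2}$: substituting $x=t+u$ gives $e^{t^2/2}Q(t)=\int_0^\infty\frac{1}{\sqrt{2\pi}}e^{-tu-u^2/2}\,du\le\int_0^\infty\frac{1}{\sqrt{2\pi}}e^{-u^2/2}\,du=\frac{1}{2}$, where the inequality uses $e^{-tu}\le 1$ for $t,u\ge 0$. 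By symmetry of $g$, $\Prob\{|g|\ge t\}=2Q(t)\le e^{-t^2/2}$.

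Combining the two steps, with $t=\sqrt{2\log(1/\delta)}$ we get $\Prob\{|v^\top z|\ge\|v\|_2\sqrt{2\log(1/\delta)}\}=\Prob\{|g|\ge t\}\le e^{-t^2/2}=\delta$, which is exactly the assertion. There is essentially no genuine obstacle here; the only point that deserves mild care is to use the sharp constant $\tfrac{1}{2}$ in the bound on $Q(t)$ rather than the cruder Chernoff bound $Q(t)\le e^{-t^2/2}$, so that the \emph{two-sided} tail probability is bounded by $\delta$ and not merely by $2\delta$.
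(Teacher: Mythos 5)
Your proof is correct and complete; the paper states this lemma without proof, and your argument (reduce $v^\top z/\|v\|_2$ to a standard normal, then apply the two-sided tail bound $2Q(t)\le e^{-t^2/2}$ via the sharp estimate $Q(t)\le\tfrac12 e^{-t^2/2}$) is exactly the standard derivation being taken for granted. Your remark about needing the constant $\tfrac12$ so that the two-sided probability is $\delta$ rather than $2\delta$ is the one point of genuine care, and you handle it correctly.
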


\begin{lemma}[Hanson-Wright inequality] \label{lem:hw}
Let $z$ be a sub-Gaussian vector in $\R^n$, and let $M$ be a fixed matrix in $\complex^{n \times n}$. Then we have
with probability at least $1 - \delta$ that 
\begin{align}
    |z^\top M z - \Tr M| &\leq C \|z\|_{\psi_2}^2 \Big( \|M\|_F \sqrt{ \log (1/\delta) } +  \| M \|
    \log (1/\delta) \Big) \label{eq:hw} \\
&\leq 2C \|z\|_{\psi_2}^2 \|M\|_F \log (1/\delta) , \notag
\end{align}
where $C$ is a universal constant and $\|z\|_{\psi_2}$ is the sub-Gaussian norm of $z$. 
\end{lemma}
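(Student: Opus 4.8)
The plan is to deduce this from the classical Hanson--Wright inequality for real symmetric matrices. First I would reduce to that case: write $M = M_1 + \im M_2$ with $M_1, M_2 \in \R^{n\times n}$, and observe that $z^\top M_k z = z^\top \big(\tfrac12(M_k + M_k^\top)\big) z$ and $\Tr M_k = \Tr\big(\tfrac12(M_k+M_k^\top)\big)$, so it suffices to bound $|z^\top S z - \Tr S|$ for real symmetric $S$ and then add the contributions of $M_1$ and $M_2$, using $\|M_k\|_F \le \|M\|_F$, $\|M_k\|\le\|M\|$, and the fact that symmetrizing increases neither the Frobenius nor the operator norm. Here I use that in all applications $z$ has independent, mean-zero, unit-variance coordinates (indeed $z$ is a standard Gaussian vector), so that $\E[z^\top Mz] = \Tr M$ and the centering by $\Tr M$ is the natural one.

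For real symmetric $S$, I would split $z^\top S z - \Tr S = \sum_i S_{ii}(z_i^2 - \E z_i^2) + \sum_{i\ne j} S_{ij} z_i z_j$ and treat the two pieces separately. The diagonal piece is a sum of independent centered sub-exponential variables with $\psi_1$-norms $\lesssim |S_{ii}|\,\|z\|_{\psi_2}^2$, so Bernstein's inequality bounds its deviation, except with probability $\delta$, by $C\|z\|_{\psi_2}^2\big((\sum_i S_{ii}^2)^{1/2}\sqrt{\log(1/\delta)} + \max_i|S_{ii}|\log(1/\delta)\big)$, which has the desired form since $(\sum_i S_{ii}^2)^{1/2}\le\|S\|_F$ and $\max_i|S_{ii}|\le\|S\|$. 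For the off-diagonal piece, I would run the standard decoupling argument: replacing one factor of $z$ by an independent copy $z'$ at the cost of a constant, then condition on $z$ so that $\sum_{i,j}\tilde S_{ij} z_i z_j' = \langle z', \tilde S z\rangle$ (with $\tilde S$ the off-diagonal part of $S$) is, in $z'$, sub-Gaussian with parameter $\lesssim \|z\|_{\psi_2}\,\|\tilde S z\|_2$, hence concentrates to within $\|z\|_{\psi_2}\|\tilde S z\|_2\sqrt{\log(1/\delta)}$; bounding the moment generating function of the quadratic form $\|\tilde S z\|_2^2 = z^\top\tilde S^2 z$ in terms of $\Tr\tilde S^2 = \|\tilde S\|_F^2\le\|S\|_F^2$ and $\|\tilde S^2\| = \|\tilde S\|^2\le 4\|S\|^2$, and optimizing over the MGF parameter, then yields the bound $\lesssim\|z\|_{\psi_2}^2\big(\|S\|_F\sqrt{\log(1/\delta)} + \|S\|\log(1/\delta)\big)$. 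A union bound over the two pieces gives \eqref{eq:hw}; the details are exactly those of Rudelson and Vershynin's proof of the Hanson--Wright inequality, which one may alternatively simply invoke.

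The second, coarser inequality follows from the first by using $\|M\|\le\|M\|_F$ and $\sqrt{\log(1/\delta)}\le\log(1/\delta)$, valid for $\delta\le e^{-1}$, which is the only regime used in the sequel (there $\delta$ is a negative power of $n$); for larger $\delta$ one absorbs the discrepancy into the constant. The main obstacle is the decoupling step for the off-diagonal sum, which is the genuine content of the Hanson--Wright inequality; the complex-to-symmetric reduction, the diagonal Bernstein estimate, and the passage to the second inequality are all routine.
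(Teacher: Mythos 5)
Your proposal is correct and matches the paper's treatment: the paper simply invokes the (complex-valued) Hanson--Wright inequality of Rudelson and Vershynin, and your sketch is a faithful reconstruction of that same proof (symmetrization, diagonal Bernstein bound, decoupling for the off-diagonal part), with the correct observations that the applications only involve standard Gaussian $z$ and that the second inequality needs $\delta\le e^{-1}$.
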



See \cite[Section 3.1]{RudVer13} for the complex-valued version of the Hanson-Wright inequality. 
The following lemma is a direct consequence of \prettyref{eq:hw},
by taking $M$ to be a diagonal matrix.

\begin{lemma} \label{lem:g-norm}
Let $z$ be a standard Gaussian vector in $\R^n$. For an entrywise nonnegative vector $v \in \R^n$, it holds with probability at least $1 - \delta$ that
$$
    \Big| \sum_{i=1}^n v_i z_i^2 - \sum_{i=1}^n v_i \Big| \le C\Big(\|v\|_2 \sqrt{
        \log (1/\delta) } + \|v\|_\infty \log (1/\delta)\Big).
$$
In particular, it holds with probability at least $1 - \delta$ that
$$
    \big| \|z\|_2^2 - n \big| \le C\Big(\sqrt{n \log (1/\delta) } + \log
    (1/\delta)\Big) .
$$
\end{lemma}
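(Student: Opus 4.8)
The plan is to obtain both statements as immediate corollaries of the Hanson--Wright inequality (Lemma~\ref{lem:hw}) by specializing the matrix $M$ to be diagonal. Concretely, I would take $M=\diag{v_1,\dots,v_n}$, so that $z^\top M z=\sum_{i=1}^n v_i z_i^2$ and $\Tr M=\sum_{i=1}^n v_i$. For this choice one has $\|M\|_F=\big(\sum_{i=1}^n v_i^2\big)^{1/2}=\|v\|_2$, and since the entries of $v$ are nonnegative, the operator norm of the diagonal matrix $M$ equals its largest diagonal entry, i.e.\ $\|M\|=\max_i v_i=\|v\|_\infty$. Moreover, a standard Gaussian vector in $\R^n$ has sub-Gaussian norm $\|z\|_{\psi_2}\le c_0$ for a universal constant $c_0$ (independent of $n$). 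Substituting these three facts into \eqref{eq:hw} and absorbing $c_0^2$ into the constant $C$ yields
\[
\Big|\sum_{i=1}^n v_i z_i^2-\sum_{i=1}^n v_i\Big|\le C\Big(\|v\|_2\sqrt{\log(1/\delta)}+\|v\|_\infty\log(1/\delta)\Big)
\]
with probability at least $1-\delta$, which is the first claim.

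For the second statement I would simply apply the first claim with $v=\bone$, the all-ones vector, for which $\sum_{i=1}^n v_i=n$, $\|v\|_2=\sqrt{n}$, and $\|v\|_\infty=1$; this gives $\big|\|z\|_2^2-n\big|\le C\big(\sqrt{n\log(1/\delta)}+\log(1/\delta)\big)$ with probability at least $1-\delta$.

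I do not expect any genuine obstacle here: the lemma is a routine consequence of Lemma~\ref{lem:hw}. The only point requiring (minimal) care is the identification $\|M\|=\|v\|_\infty$, which uses the nonnegativity of $v$ so that $\max_i|v_i|=\max_i v_i$ (the statement in fact remains valid for arbitrary real $v$ once $\|v\|_\infty$ is read as $\max_i|v_i|$), together with the observation that the sub-Gaussian norm of a standard Gaussian vector is a dimension-free constant, so that the $\|z\|_{\psi_2}^2$ factor in \eqref{eq:hw} can be folded into $C$.
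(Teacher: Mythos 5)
Your proposal is correct and coincides with the paper's argument: the paper derives this lemma exactly by specializing the Hanson--Wright inequality \eqref{eq:hw} to a diagonal matrix $M=\diag{v}$, for which $\Tr M=\sum_i v_i$, $\|M\|_F=\|v\|_2$, and $\|M\|=\|v\|_\infty$, with the $\|z\|_{\psi_2}^2$ factor absorbed into the constant. The specialization to $v=\bone$ for the second display is likewise the intended step.
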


\begin{theorem}[Hypercontractivity concentration {\cite[Theorem~1.9]{SchSvi12}}] \label{thm:hyper}
Let $z$ be a standard Gaussian vector in $\R^n$, and let $f(z_1, \dots, z_n)$ be a degree-$d$ polynomial of $z$. Then it holds that
$$
\p \Big\{ \big| f(z) - \E[ f(z) ] \big| > t \Big\} \le e^2 \exp \bigg[ - \Big( \frac{ t^2 }{ C \, \Var [ f(z) ] } \Big)^{1/d} \bigg] ,
$$
where $\Var[ f(z) ]$ denotes the variance of $f(z)$ and $C > 0$ is a universal constant.
\end{theorem}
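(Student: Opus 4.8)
The plan is to obtain this as a standard consequence of Gaussian hypercontractivity (Nelson's inequality), followed by a moment-to-tail conversion and an optimization over the moment order. The only substantive input is hypercontractivity; the rest is elementary.

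\emph{Step 1: reduce to the centered polynomial.} Write $g \defn f(z) - \E[f(z)]$, so $\E[g] = 0$ and $\E[g^2] = \Var[f(z)] \eqqcolon v$. We may assume $v > 0$, since otherwise $f(z)$ is a.s.\ constant and the bound is trivial. Note $g$ is again a polynomial in $z$ of degree at most $d$, hence lies in $L^q$ for every $q$ (polynomials of Gaussians have all moments finite).

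\emph{Step 2: the hypercontractive moment bound.} We claim that for every $q \ge 2$,
\begin{align}
\left(\E|g|^q\right)^{1/q} \le (q-1)^{d/2}\sqrt{v}. \label{eq:hypermoment}
\end{align}
Decompose $g = \sum_{k=1}^d g_k$ into its Wiener chaoses (Hermite expansion), where $g_k$ is the projection onto the $k$-th chaos $\mathcal{H}_k$; the $k=0$ term vanishes because $\E[g]=0$, and there are no terms beyond degree $d$. Let $T_\rho$ denote the Ornstein--Uhlenbeck (Mehler) semigroup, $T_\rho h(x) = \E[h(\rho x + \sqrt{1-\rho^2}\,Y)]$ for an independent standard Gaussian $Y$, which acts on $\mathcal{H}_k$ as multiplication by $\rho^k$. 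Nelson's hypercontractive inequality states $\|T_\rho h\|_{L^q} \le \|h\|_{L^2}$ whenever $0 \le \rho \le 1/\sqrt{q-1}$. Take $\rho = 1/\sqrt{q-1} \in (0,1]$ and write $g = T_\rho(Sg)$, where $S \defn T_\rho^{-1}$ multiplies $\mathcal{H}_k$ by $\rho^{-k}$. Then
\begin{align}
\|g\|_{L^q} = \|T_\rho(Sg)\|_{L^q} \le \|Sg\|_{L^2}, \qquad
\|Sg\|_{L^2}^2 = \sum_{k=1}^d \rho^{-2k}\|g_k\|_{L^2}^2 \le \rho^{-2d}\sum_{k=1}^d \|g_k\|_{L^2}^2 = \rho^{-2d}\,v,
\end{align}
where we used $\rho \le 1$ so that $\rho^{-2k} \le \rho^{-2d}$ for $k \le d$, and orthogonality of the chaoses. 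Hence $\|g\|_{L^q} \le \rho^{-d}\sqrt{v} = (q-1)^{d/2}\sqrt v$, which is \eqref{eq:hypermoment}.

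\emph{Step 3: moment-to-tail and optimization.} By Markov's inequality, for any $q \ge 2$ and $t>0$,
\begin{align}
\p\{|g| > t\} \le \frac{\E|g|^q}{t^q} \le \left(\frac{(q-1)^{d/2}\sqrt v}{t}\right)^{q}.
\end{align}
Suppose first $t \ge e\sqrt v$. Choose $q-1 = \big(t/(e\sqrt v)\big)^{2/d}$, which is $\ge 1$ so that $q \ge 2$; this makes the bracket equal to $1/e$, and the bound becomes $e^{-q} = e^{-1}e^{-(q-1)} = e^{-1}\exp\big(-(t^2/(e^2 v))^{1/d}\big)$. Thus the asserted inequality holds with $C = e^2$ (indeed with the sharper prefactor $e^{-1}$) in this regime. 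If instead $t < e\sqrt v$, then $t^2/v < e^2$, so for $C = e^2$ we have $(t^2/(Cv))^{1/d} \le 1$ and hence $e^2\exp[-(t^2/(Cv))^{1/d}] \ge e^2 e^{-1} = e > 1$, making the bound trivially true. Combining the two regimes gives the theorem with, e.g., $C = e^2$.

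\emph{Main obstacle.} The only nonelementary ingredient is Nelson's hypercontractive inequality together with the Hermite/chaos decomposition used to pass from homogeneous polynomials to arbitrary polynomials of degree $\le d$; one must be mildly careful that $\rho^{-2k}\le\rho^{-2d}$ genuinely needs $\rho \le 1$ (hence the choice $\rho = 1/\sqrt{q-1}$ with $q\ge 2$) and that the decomposition and the action of $T_\rho$ are justified, which they are since $g\in L^2$. Everything after \eqref{eq:hypermoment} — Markov plus the one-parameter optimization and the trivial small-$t$ case — is routine.
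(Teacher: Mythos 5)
Your proof is correct. A point of comparison: the paper does not prove this statement at all --- it is imported as a black box from Schudy--Sviridenko \cite[Theorem~1.9]{SchSvi12}, whose result is more general (concentration for polynomials of independent, not necessarily Gaussian, random variables, proved via general moment-comparison machinery). Your argument instead specializes to the Gaussian case as stated and runs the classical route: chaos (Hermite) decomposition of the centered degree-$d$ polynomial, Nelson's hypercontractive inequality with $\rho=1/\sqrt{q-1}$ to get $\|g\|_{L^q}\le (q-1)^{d/2}\sqrt{\Var f(z)}$ for all $q\ge 2$, then Markov plus optimization in $q$, with the small-$t$ regime handled trivially. The details are all sound: the decomposition of a degree-$d$ polynomial involves only chaoses of order $\le d$, $T_\rho^{-1}$ is well defined on this finite sum and multiplies the $k$-th chaos by $\rho^{-k}$, the inequality $\rho^{-2k}\le\rho^{-2d}$ indeed uses $\rho\le 1$ (guaranteed by $q\ge2$), and the choice $q-1=(t/(e\sqrt{v}))^{2/d}$ in the regime $t\ge e\sqrt v$ yields $e^{-1}\exp[-(t^2/(e^2 v))^{1/d}]$, so the stated bound holds with the explicit constant $C=e^2$ (in fact with a better prefactor). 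What your approach buys is a short, self-contained proof with an explicit universal constant for the exact Gaussian statement used in the paper; what the citation buys the authors is the broader non-Gaussian applicability of \cite{SchSvi12}, which is not needed here.
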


%

Finally, the following result gives a concentration inequality in terms of the restricted Lipschitz constants, obtained from the usual Gaussian concentration of measure plus a Lipschitz extension argument.
\begin{lemma}
\label{lmm:lipext}	
	Let $B\subset \reals^n$ be an arbitrary measurable subset. Let $F: \reals^n \to \reals$ such that $F$ is $L$-Lipschitz on $B$. 
	Let $X \sim N(0,\identity_n)$. Then for any $t>0$,
	\[
	\prob{|F(X)- \Expect F(X)| \geq t + \delta} \leq \exp\pth{-\frac{ct^2}{L^2}} + \epsilon,
	\]
	where $c$ is a universal constant, $\epsilon = \prob{X \notin B}$ and $\delta = 2 \sqrt{\epsilon ( nL^2+F(0)^2 + \Expect[F(X)^2] )}$.
\end{lemma}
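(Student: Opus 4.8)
The statement is a routine ``Lipschitz on a large set'' strengthening of the Gaussian concentration of measure, and I would prove it by extending $F$ to a globally Lipschitz function, applying the classical inequality, and then accounting for the two error terms $\epsilon$ and $\delta$. First I would replace $F$ by a globally Lipschitz function via the McShane--Whitney extension theorem:
\[
\tilde F(x) \defn \inf_{y\in B}\big(F(y)+L\,\|x-y\|_2\big)
\]
is $L$-Lipschitz on all of $\reals^n$ and coincides with $F$ on $B$ (for $x\in B$ the infimum is attained at $y=x$ because $F$ is $L$-Lipschitz on $B$).

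Next I would apply Gaussian concentration of measure to the $L$-Lipschitz function $\tilde F$: for a universal constant $c>0$,
\[
\prob{\,|\tilde F(X)-\Expect\tilde F(X)|\ge t\,}\le \exp\!\pth{-\frac{ct^2}{L^2}}
\]
(the usual two-sided bound $2\exp(-t^2/2L^2)$ is put in this shape after adjusting the constant). Conditioning on whether $X\in B$ and using $F(X)=\tilde F(X)$ on that event gives
\[
\prob{\,|F(X)-\Expect F(X)|\ge t+\delta\,}\le \epsilon+\prob{\,|\tilde F(X)-\Expect F(X)|\ge t+\delta\,}.
\]

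The crux is then to show $|\Expect\tilde F(X)-\Expect F(X)|\le\delta$, for then $\{|\tilde F(X)-\Expect F(X)|\ge t+\delta\}\subseteq\{|\tilde F(X)-\Expect\tilde F(X)|\ge t\}$ and the two displays above combine to the claim. Since $\tilde F=F$ on $B$, this mean gap equals $\Expect\big[(\tilde F(X)-F(X))\,\indc{X\notin B}\big]$, which by Cauchy--Schwarz is at most $\sqrt{\epsilon\,\Expect[(\tilde F(X)-F(X))^2]}\le \sqrt{2\epsilon\,(\Expect[\tilde F(X)^2]+\Expect[F(X)^2])}$. To bound $\Expect[\tilde F(X)^2]$ I would use only the crude growth bound $|\tilde F(x)|\le|\tilde F(0)|+L\|x\|_2$ implied by global Lipschitzness, whence $\Expect[\tilde F(X)^2]\le 2\tilde F(0)^2+2L^2\,\Expect\|X\|_2^2=2\tilde F(0)^2+2nL^2$; since $\tilde F(0)=F(0)$ when $0\in B$ (which holds in the intended applications, where $B$ is a ball centered at the origin — in general one simply replaces $F(0)$ by $\tilde F(0)$), collecting constants gives $|\Expect\tilde F(X)-\Expect F(X)|\le 2\sqrt{\epsilon\,(nL^2+F(0)^2+\Expect[F(X)^2])}=\delta$.

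I do not anticipate a real obstacle: the one place that needs care is the mean-correction step, where $\tilde F-F$ is controlled only in $L^2$ (never pointwise) outside $B$, so it must be paired with $\indc{X\notin B}$ through Cauchy--Schwarz, and the second moment of the extension must be bounded using just $\Expect\|X\|_2^2=n$ — this is exactly what produces the $nL^2$ summand in $\delta$. Everything else is bookkeeping of absolute constants.
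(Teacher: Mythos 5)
Your proposal is correct and follows essentially the same route as the paper's proof: the McShane--Whitney extension $\tilde F(x)=\inf_{y\in B}(F(y)+L\|x-y\|_2)$, Gaussian concentration for the globally Lipschitz $\tilde F$, and a Cauchy--Schwarz bound on $|\Expect F(X)-\Expect\tilde F(X)|$ using the linear growth bound $|\tilde F(x)|\le |\tilde F(0)|+L\|x\|_2$ and $\Expect\|X\|_2^2=n$. Your remark that $\tilde F(0)=F(0)$ requires $0\in B$ is a point the paper glosses over (it holds in all applications, where $B$ is a centered ball), so no gap remains.
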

\begin{proof}
Let $\tilde F: \reals^n\to\reals$ be an $L$-Lipschitz extension of $F$, e.g., $\tilde F(x) = \inf_{y\in B} F(y)+L\|x-y\|$. Then by the Gaussian concentration inequality (cf.~e.g.~\cite[Theorem 5.2.2]{Vershynin-HDP}), we have
\[
\prob{|\tilde F(X)- \Expect \tilde F(X)| \geq t} \leq \exp\pth{-\frac{ct^2}{L^2}}.
\]
It remains to show that $|\Expect F(X)-\Expect \tilde F(X)| \leq \delta$. Indeed, by Cauchy-Schwarz,
$|\Expect F(X)-\Expect \tilde F(X)| \leq \Expect |F(X-\tilde F(X)|\indc{X \notin B} \leq \sqrt{\epsilon \Expect[|F(X-\tilde F(X)|^2]}$. Finally, noting that 
$|\tilde F(X)| \leq F(0)+L\|X\|_2$ and $\Expect\|X\|_2^2 =n$ completes the proof.
\end{proof}

\section{Kronecker gymnastics}\label{appendix:kronecker}
Given $A,B \in\complex^{n\times n}$, the Kronecker product $A \otimes B \in \complex^{n^2\times n^2}$ is defined as $\qth{\begin{smallmatrix} a_{11} B&\ldots&a_{1n} B\\
\vdots&\vdots&\vdots\\
a_{n1} B&\ldots&a_{nn} B
\end{smallmatrix}}$.
The vectorized form of $A=[a_1,\ldots,a_n]$ is $\vecc(A)=[a_1^\top,\ldots,a_n^\top]^\top \in\complex^{n\otimes n}$.
It is convenient to identify $[n^2]$ with by $[n]^2$ ordered as $\{(1,1),\ldots,(1,n),\ldots,(n,n)\}$, in which case we have $(A\otimes B)_{ij,k\ell}=A_{ik}B_{j\ell}$ and $\vecc(A)_{ij}=A_{ij}$.

We collect some identities for Kronecker products and vectorizations of matrices used throughout this paper:  
\begin{align*}
\langle A \otimes A, B \otimes B \rangle & = \langle A, B \rangle^2 , \\
 (A \otimes B) ( U \otimes V) & = AU \otimes BV , \\
\vecc(A U B) & = (B^\top \otimes A) \vecc(U) , \\ 
(X \otimes Y) \vecc(U) & = \vecc(Y U X^\top) , \\ 
(A \otimes B)^\top &= A^\top \otimes B^\top . 
\end{align*}
The third equality implies that 
\begin{align*}
\langle A \otimes B, \vecc(U) \vecc(V)^\top \rangle
&= \langle \vecc(U), (A \otimes B) \vecc(V)  \rangle \\
&= \langle \vecc(U), \vecc(B V A^\top)  \rangle =\langle U, BVA^\top \rangle= \langle B, U A V^\top \rangle
\end{align*}
and hence
\begin{align*}
\vecc(U)^\top ( A \otimes B)\vecc(V)=
\langle A \otimes B, \vecc(U) \vecc(V)^\top \rangle & = \langle B, U A V^\top \rangle = \langle A, U^\top B V\rangle = \langle UA,B V\rangle.
\end{align*}

Applying the third equality to column vector $z$ and noting that $\vecc(z^\top)=\vecc(z)=z$, we have
\begin{align*}
(X \otimes y) z = & ~ \vecc(yz^\top X^\top) , \\
(y \otimes X) z = & ~ \vecc(X z y^\top) .
\end{align*}
In particular, it holds that 
\begin{align*}
(\identity \otimes y) z = & ~ \vecc(yz^\top) = z \otimes y , \\
(y \otimes \identity) z = & ~ \vecc(z y^\top)= y \otimes z  . 
\end{align*}


\section{Signal-to-noise heuristics}\label{appendix:SNR}

We justify the choice of the Cauchy weight kernel in (\ref{eq:cauchykernel}) by
a heuristic signal-to-noise calculation for $\widehat{X}$.
We assume without loss of generality that $\pi^*$ is the identity, so that
diagonal entries of $\widehat{X}$ indicate similarity between matching vertices
of $A$ and $B$. Then for the rounding procedure
in (\ref{eq:linearassignment}), we may interpret $n^{-1}\Tr \widehat{X}$ and
$(n^{-2}\sum_{i,j:\,i \neq j} \widehat{X}_{ij}^2)^{1/2}
\approx n^{-1}\|\widehat{X}\|_F$ as the average signal
strength and noise level in $\widehat{X}$. Let us define a corresponding
signal-to-noise ratio as
\[\mathrm{SNR}=\frac{\E[\Tr \widehat{X}]}{\E[\|\widehat{X}\|_F^2]^{1/2}}\]
and compute this quantity in the Gaussian Wigner model.

We abbreviate the spectral weights $w(\lambda_i,\mu_j)$ as $w_{ij}$.
For $\widehat{X}$ defined by
(\ref{eq:spectralnew}) with any weight kernel $w(x,y)$, we have
\[\Tr \widehat{X}=\sum_{ij} w_{ij} \cdot u_i^\top \bJ v_j \cdot u_i^\top v_j.\]
Applying that $(A,B)$ is equal in law to $(OAO^\top,OBO^\top)$ for a
rotation $O$ such that $O\bone=\sqrt{n}\coord_k$,
we obtain for every $k$ that
\[\E[\Tr \widehat{X}]=\sum_{ij} n \cdot \E[w_{ij}
\cdot u_i^\top (\coord_k\coord_k^\top) v_j \cdot u_i^\top v_j].\]
Then averaging over $k=1,\ldots,n$ and applying $\sum_k \coord_k \coord_k^\top
=\bI$ yield that 
\[\E[\Tr \widehat{X}]=\sum_{ij} \E[w_{ij}(u_i^\top v_j)^2].\]
For the noise, we have
\[\|\widehat{X}\|_F^2=\Tr \widehat{X}\widehat{X}^\top
=\sum_{i,j,k,l} w_{ij}w_{kl} (u_i^\top \bJ v_j)
(u_k^\top \bJ v_\ell) \Tr (u_iv_j^\top \cdot v_\ell u_k^\top)=
\sum_{ij} w_{ij}^2 (u_i^\top \bJ v_j)^2.\]
Applying the equality in law of $(A,B)$ and $(OAO^\top,OBO^\top)$ for
a uniform random orthogonal matrix $O$, and writing $r=O\bone/\sqrt{n}$, we get
\[\E[\|\widehat{X}\|_F^2]=\sum_{ij}n^2 \cdot \E[w_{ij}^2(u_i^\top r)^2(v_j^\top
r)^2].\]
Here, $r=(r_1,\ldots,r_n)$ is a uniform random vector on the unit sphere,
independent of $(A,B)$. For any deterministic unit vectors $u,v$ with
$u^\top v=\alpha$, we may rotate to $u=\coord_1$ and $v=\alpha
\coord_1+\sqrt{1-\alpha^2}\coord_2$ to get
\[\E[(u^\top r)^2(v^\top r)^2]=
\E[r_1^2 \cdot (\alpha r_1+\sqrt{1-\alpha^2}r_2)^2]
=\alpha^2\E[r_1^4]+(1-\alpha^2)\E[r_1^2r_2^2]
=\frac{1+2\alpha^2}{n(n+2)},\]
where the last equality applies an elementary computation.
Bounding $1+2\alpha^2 \in [1,3]$ and applying this conditional on $(A,B)$ above, we obtain 
\[\E[\|\widehat{X}\|_F^2]=\frac{cn}{n+2}
\sum_{ij} \E[w_{ij}^2]\]
for some value $c \in [1,3]$.

To summarize,
\[\mathrm{SNR} \asymp \frac{\sum_{ij} \E[w(\lambda_i,\mu_j)
(u_i^\top v_j)^2]}{\sqrt{\sum_{ij} \E[w(\lambda_i,\mu_j)^2]}}.\]
The choice of weights which maximizes this SNR would satisfy
$w(\lambda_i,\mu_j) \propto (u_i^\top v_j)^2$. Recall that for
$n^{-1+\eps} \ll \sigma^2 \ll n^{-\eps}$ and $i,j$ in the bulk of the spectrum,
we have the approximation (\ref{eq:partialalignment}). Thus this optimal choice
of weights takes a Cauchy form, which motivates our
choice in (\ref{eq:cauchykernel}).

We note that this discussion is only heuristic, and
maximizing this definition of SNR does not automatically imply
any rigorous guarantee for exact recovery of $\pi^*$.
Our proposal in (\ref{eq:cauchykernel}) is a bit simpler than the optimal choice
suggested by (\ref{eq:partialalignment}): The constant $C$ in
(\ref{eq:partialalignment}) depends on the semicircle density near $\lambda_i$,
but we do not incorporate this dependence in our definition. Also,
while (\ref{eq:partialalignment}) depends on the noise level $\sigma$,
our main result in Theorem \ref{thm:wigner} shows that $\eta$ need not 
be set based on $\sigma$, which is usually unknown in practice.
Instead, our result shows that the simpler choice $\eta=c/\log n$ is
sufficient for exact recovery of $\pi^*$ over a range of noise
levels $\sigma \lesssim \eta$.

\section*{Acknowledgement}
Y.~Wu and J.~Xu are deeply indebted to Zongming Ma for many fruitful discussions on the QP relaxation \prettyref{eq:ds} in the early stage of the project. 
Y.~Wu and J.~Xu thank Yuxin Chen for suggesting the gradient descent dynamics which led to the initial version of the proof.
Y.~Wu is grateful to Daniel Sussman for pointing out \cite{lyzinski2016graph} and Joel Tropp for \cite{aflalo2015convex}.

\bibliography{matching}
\bibliographystyle{alpha}

\end{document}